\definecolor{darkred}{RGB}{150,0,0}
\definecolor{darkgreen}{RGB}{0,150,0}
\definecolor{darkblue}{RGB}{0,0,200}
\newtheorem{theorem}{Theorem}
\newtheorem{assumption}{Assumption}
\newtheorem{lemma}{Lemma}
\newtheorem{corollary}{Corollary}
\newtheorem{definition}{Definition}
\newtheorem{remark}{Remark}
\newcommand{\cln}[1]{\textcolor{red}{}}
\newcommand{\ylm}[1]{\marginpar{\color{orange}\tiny\ttfamily YL: #1}}
\def \endprf{\hfill {\vrule height6pt width6pt depth0pt}\medskip}
\newenvironment{proof}{\noindent {\bf Proof} }{\endprf\par}
\newcommand{\tsn}[1]{{\left\vert\kern-0.25ex\left\vert\kern-0.25ex\left\vert #1 
		\right\vert\kern-0.25ex\right\vert\kern-0.25ex\right\vert}}
\newcommand{\noi}{\noindent}
\newcommand{\eps}{\varepsilon}
\newcommand{\beps}{\boldsymbol{\eps}}
\newcommand{\Gh}{\widehat{\Gc}}
\newcommand{\Gt}{\widetilde{\Gc}}
\newcommand{\NT}{N_{\text{tot}}}
\newcommand{\st}{\star}
\newcommand{\h}{\vct{h}}
\newcommand{\f}{\vct{f}}
\newcommand{\distas}{\overset{\text{i.i.d.}}{\sim}}
\newcommand{\MP}{\text{Multipath}\xspace}
\newcommand{\Mp}{\text{multipath}\xspace}
\newcommand{\beq}{\begin{equation}}
	\newcommand{\ba}{\begin{align}}
		\newcommand{\ea}{\end{align}}
	\newcommand{\eeq}{\end{equation}}
\newcommand{\nn}{\nonumber}
\newcommand{\la}{\lambda}
\newcommand{\K}{{K}}
\newcommand{\A}{{\mtx{A}}}
\newcommand{\B}{{{\mtx{B}}}}
\newcommand{\hB}{\hat{\B}}
\newcommand{\Gb}{{\mtx{G}}}
\newcommand{\Lc}{{\cal{L}}}
\newcommand{\Lch}{{\widehat{\cal{L}}}}
\newcommand{\Nc}{{\cal{N}}}
\newcommand{\Dc}{{\cal{D}}}
\newcommand{\Tbar}{{\bar{T}}}
\newcommand{\Hb}{{\mtx{H}}}
\newcommand{\Gc}{{\cal{G}}}
\newcommand{\Qc}{{\cal{Q}}}
\newcommand{\bSi}{{\boldsymbol{{\Sigma}}}}
\newcommand{\onebb}{{\mathbf{1}}}
\newcommand{\Iden}{{\mtx{I}}}
\newcommand{\M}{{\mtx{M}}}
\newcommand{\order}[1]{{\cal{O}}\left(#1\right)}
\newcommand{\ordet}[1]{{\widetilde{\cal{O}}}\left(#1\right)}
\newcommand{\z}{{\vct{z}}}
\newcommand{\tn}[1]{\|{#1}\|}
\newcommand{\tf}[1]{\|{#1}\|_{F}}
\newcommand{\te}[1]{\|{#1}\|_{\psi_1}}
\newcommand{\Cc}{\mathcal{C}}
\newcommand{\Ac}{\mathcal{A}}
\newcommand{\Rc}{\mathcal{R}}
\newcommand{\bt}{{\boldsymbol{\theta}}}
\newcommand{\bT}{{\boldsymbol{\Theta}}}
\newcommand{\bTh}{{\boldsymbol{\hat\Theta}}}
\newcommand{\bts}{{\boldsymbol{\theta}_\st}}
\newcommand{\bal}{{\boldsymbol{\alpha}}}
\newcommand{\bth}{{\boldsymbol{\hat{\theta}}}}
\newcommand{\bPhi}{{\boldsymbol{\Phi}}}
\newcommand{\Phic}{\Phi_{\text{used}}}
\newcommand{\bphi}{{\boldsymbol{\phi}}}
\newcommand{\genbound}{\Gt_N(\Hc)+\sum_{\ell=1}^L\sqrt{\hat{\K}_\ell}\Gt_{NT}(\Psi_\ell)+\sqrt{\frac{\log|\Ac|}{N}+\frac{\log(2/\delta)}{NT}}}
\newcommand{\genb}[1]{\Cc_N(#1)}
\newcommand{\bpsi}{{\boldsymbol{\psi}}}
\newcommand{\psit}{{{\tilde{\psi}}}}
\newcommand{\bho}{{\boldsymbol{\omega}}}
\newcommand{\bsgm}{{\boldsymbol{\sigma}}}
\newcommand{\Bc}{\mathcal{B}}
\newcommand{\Sc}{\mathcal{S}}
\newcommand{\Scb}{{{\mathcal{S}}_\all}}
\newcommand{\Dcb}{{\bar{\mathcal{D}}}}
\newcommand{\Nn}{\mathcal{N}}
\newcommand{\vb}{\vct{v}}
\newcommand{\fb}{\vct{f}}
\newcommand{\Ic}{{\mathcal{I}}}
\newcommand{\all}{{\text{all}}}
\newcommand{\used}{{\text{used}}}
\newcommand{\hhb}{{\hat{\h}}}
\newcommand{\g}{{\vct{g}}}
\newcommand{\Tc}{\mathcal{T}}
\newcommand{\Fc}{\mathcal{F}}
\newcommand{\Xc}{\mathcal{X}}
\newcommand{\Zc}{\mathcal{Z}}
\newcommand{\Hc}{\mathcal{H}}
\newcommand{\x}{\vct{x}}
\newcommand{\y}{\vct{y}}
\newcommand{\prb}{1-e^{-cn}-2e^{-\sqrt{tn}\wedge t}}
\newcommand{\prbb}{1-2e^{-cn}-4e^{-\sqrt{tn}\wedge t}}
\newcommand{\prbM}{1-e^{-cM}-2e^{-\sqrt{tM}\wedge t}}
\newcommand{\bgl}{{~\big |~}}
\definecolor{emmanuel}{RGB}{255,127,0}
\newcommand{\R}{\mathbb{R}}
\newcommand{\Pro}{\mathbb{P}}
\renewcommand{\P}{\operatorname{\mathbb{P}}}
\newcommand{\E}{\operatorname{\mathbb{E}}}
\newcommand{\Ec}{{\cal{E}}}
\newcommand{\vct}[1]{\bm{#1}}
\newcommand{\mtx}[1]{\bm{#1}}
\newcommand{\Id}{\text{\em I}}
\newcommand{\X}{{\mtx{X}}}
\newcommand{\Y}{{\mtx{Y}}}
\newcommand{\Z}{{\mtx{Z}}}
\newcommand{\red}[1]{\textcolor{black}{#1}}
\newcommand{\comp}{\text{comp}}
\newcommand{\tgt}{\Tc}
\newcommand{\MTL}{\text{M$^2$TL}}
\newcommand{\TFL}{\text{TLOP}}
\newcommand{\RMTL}{{\cal{R}}_{\MTL}}
\newcommand{\RTFL}{{\cal{R}}_{\TFL}}
\newcommand{\Rt}{{\cal{R}}_{t}}
\newcommand{\Bias}{\text{Bias}}
\newcommand{\Dist}{\text{Dist}}
\newcommand{\tth}{\text{th}}
\newcommand{\Psil}[1]{\Psi^{(#1)}}
\newcommand{\psil}[1]{\psi^{(#1)}}
\newcommand{\GAMMA}{\Gamma^{\dagger}}
\newcommand{\MMTL}{{Multipath MTL }}
\newcommand{\OTFL}{{Transfer Learning with Optimal Pathway }}
\newcommand{\bias}{{supernet bias }}
\newcommand\scalemath[2]{\scalebox{#1}{\mbox{\ensuremath{\displaystyle #2}}}}
\newcommand*{\QE}{\hfill\ensuremath{\square}}%
\newcommand{\DoF}{\text{DoF}}
\title{Provable Pathways: Learning Multiple Tasks over Multiple Paths}
\author{
    Yingcong Li\thanks{Emails: \{yli692@,oymak@ece.\}ucr.edu}\quad\quad\quad\quad
    Samet Oymak$^{\ast\dagger}$
}
\begin{document}
\maketitle

\begin{abstract}
Constructing useful representations across a large number of tasks is a key requirement for sample-efficient intelligent systems. A traditional idea in multitask learning (MTL) is building a shared representation across tasks which can then be adapted to new tasks by tuning last layers. A desirable refinement of using a shared one-fits-all representation is to construct task-specific representations. To this end, recent PathNet/muNet architectures represent individual tasks as pathways within a larger supernet. The subnetworks induced by pathways can be viewed as task-specific representations that are composition of modules within supernet's computation graph. This work explores the pathways proposal from the lens of statistical learning: We first develop novel generalization bounds for empirical risk minimization problems learning multiple tasks over multiple paths (Multipath MTL). In conjunction, we formalize the benefits of resulting multipath representation when adapting to new downstream tasks. Our bounds are expressed in terms of Gaussian complexity, lead to tangible guarantees for the class of linear representations, and provide novel insights into the quality and benefits of a multipath representation. When computation graph is a tree, Multipath MTL hierarchically clusters the tasks and builds cluster-specific representations. We provide further discussion and experiments for hierarchical MTL and rigorously identify the conditions under which Multipath MTL is provably superior to traditional MTL approaches with shallow supernets.


\end{abstract}

\section{Introduction}

Multitask learning (MTL) promises to deliver significant accuracy improvements by leveraging similarities across many tasks through shared representations. The potential of MTL has been recognized since 1990s \cite{caruana1997multitask} however its impact has grown over time thanks to more recent machine learning applications arising in computer vision and NLP that involve large datasets with thousands of classes/tasks. Representation learning techniques (e.g.~MTL and self-supervision) are also central to the success of deep learning as large pretrained models enable data-efficient learning for downstream transfer learning tasks \cite{deng2009imagenet,brown2020language}.

As we move from tens of tasks trained with small models to thousands of tasks trained with large models, new statistical and computational challenges arise: First, not all tasks will be closely related to each other, for instance, tasks might admit a natural clustering into groups. This is also connected to heterogeneity challenge in federated learning where clients have distinct distributions and benefit from personalization. To address this challenge, rather than a single task-agnostic representation, it might be preferable to use a task-specific representation. Secondly, pretrained language and vision models achieve better accuracy with larger sizes which creates computational challenges as they push towards trillion parameters. This motivated new architectural proposals such as Pathways/PathNet \cite{fernando2017pathnet,pathways,gesmundo2022munet} where tasks can be computed over compute-efficient subnetworks. At a high-level, each subnetwork is created by a composition of modules within a larger supernet which induces a pathway as depicted in Figure \ref{fig:intro}. Inspired from these challenges, we ask
\begin{figure}[t]
\centering
\begin{subfigure}[t]{.245\textwidth}
  \begin{tikzpicture}
  \centering
  \node at (0,0) {\includegraphics[width=\linewidth]{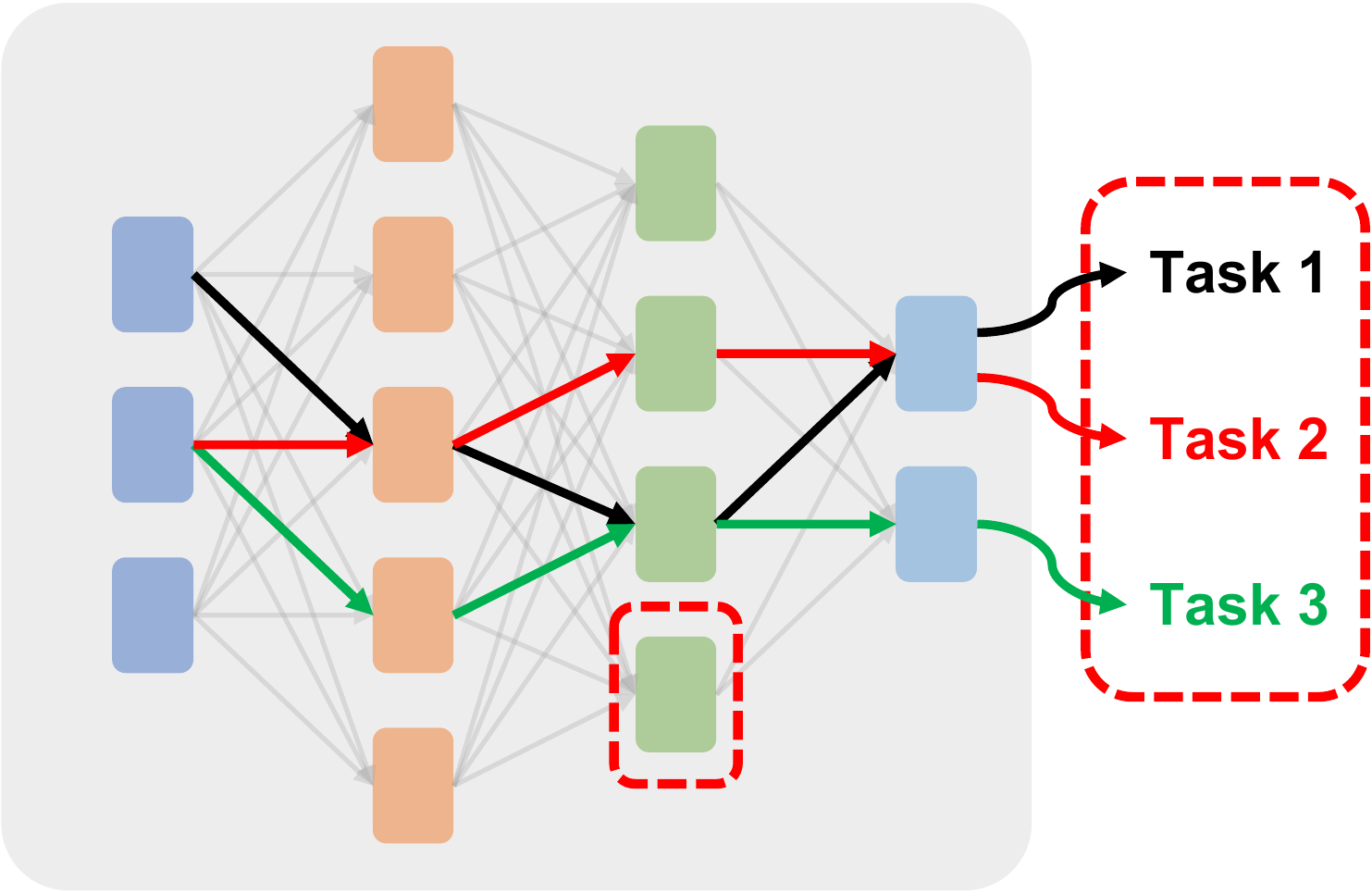}};
  \node at (-0.5, 1.55) [scale=0.7] {Supernet ($\bphi$)};
  \node at (1.72, 1.0) [scale=0.7] {Heads ($h_t$)};
  \node at (-0, -1.25) [scale=0.7] {Module ($\psi_\ell^k$)};
  \end{tikzpicture}
  \caption{General computation graph}\label{fig:intro_grid}
\end{subfigure}
\begin{subfigure}[t]{.22\textwidth}
  \begin{tikzpicture}
  \centering    
  \node at (0,0) {\includegraphics[width=\linewidth]{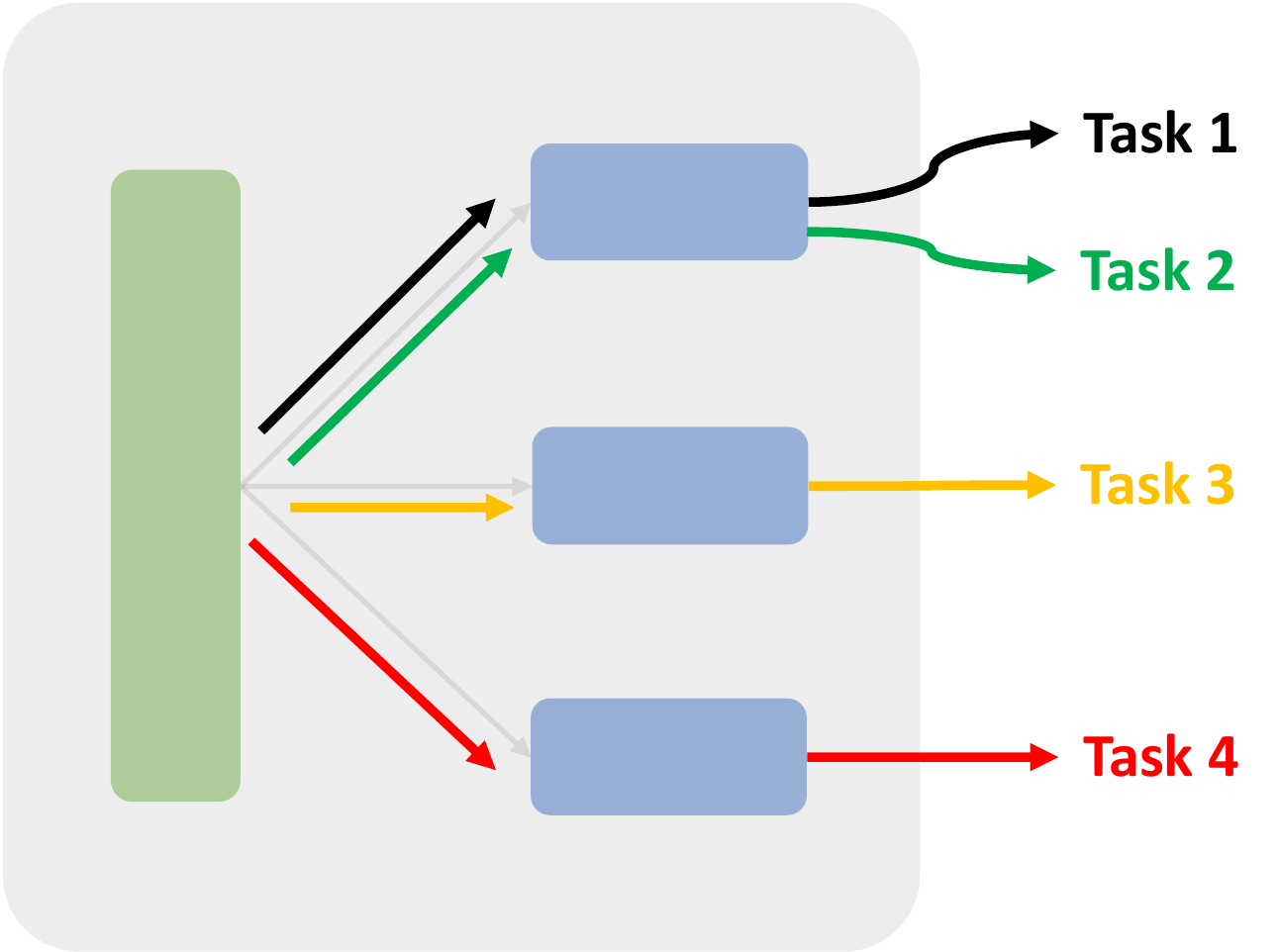}};
  \node at (-1.4, 0.0) [scale=0.7] {$\psi_1$};
  \node at (0.1, 0.85) [scale=0.7] {$\psi_2^1$};
  \node at (0.1, 0.0) [scale=0.7] {$\psi_2^2$};
  \node at (0.1, -0.85) [scale=0.7] {$\psi_2^3$};
  \end{tikzpicture}
  \caption{Hierarchical MTL}\label{fig:intro_tree}
\end{subfigure}%
\vspace{-7pt}
\caption{In \MP MTL, each task selects a pathway within a supernet graph. The composition of the modules along the pathway forms the task-specific representation. Fig.~\ref{fig:intro_grid} depicts a general supernet graph (highlighted in gray block), and the pathways for different tasks are shown in colored arrows. Fig.~\ref{fig:intro_tree} is a special instance where related tasks are hierarchically clustered: For instance, Tasks 1 and 2 are assinged the same representation $\psi_2^1\circ\psi_1$.}\label{fig:intro}
\vspace{-10pt}
\end{figure}

\begin{quote}
\textbf{Q:} What are the statistical benefits of learning task-specific representations along supernet pathways?
\end{quote}
Our primary contribution is formalizing the \MP MTL problem depicted in Figure \ref{fig:intro} and developing associated statistical learning guarantees that shed light on its benefits. Our formulation captures important aspects of the problem including learning compositional MTL representations, multilayer nature of supernet, assigning optimal pathways to individual tasks, and transferring learned representations to novel downstream tasks. Our specific contributions are as follows.

\noindent$\bullet$ Suppose we have $N$ samples per task and $T$ tasks in total. Denote the hypothesis sets for \Mp representation by $\Phi$, task specific heads by $\Hc$ and potential pathway choices by $\Ac$. 
{Our main result bounds the task-averaged risk of MTL as
\begin{align}
\sqrt{\frac{\DoF(\Phic)}{NT}}+\sqrt{\frac{\DoF(\Hc)+\DoF(\Ac)}{N}}.\label{basic bound}
\end{align}
Here, $\DoF(\cdot)$ returns the \emph{degrees of freedom} of a hypothesis set (i.e. number of parameters). More generally, Theorem~\ref{thm:main} states our guarantees in terms of Gaussian complexity.} $\Phic\subseteq\Phi$ is the supernet spanned by the pathways of the empirical solution and $1/NT$ dependence implies that cost of representation learning is shared across tasks. {We also show a \emph{no-harm} result (Lemma \ref{no harm}): If the supernet is sufficiently expressive to achieve zero empirical risk, then, the excess risk of individual tasks will not be harmed by the other tasks.} {Theorem \ref{thm:tfl} develops guarantees for transferring the resulting MTL representation to a new task in terms of representation bias of the empirical MTL supernet.}

\noindent$\bullet$ When the supernet has a single module, the problem boils down to (vanilla) MTL with single shared representation and our bounds recover the results by \cite{maurer2016benefit,tripuraneni2021provable}. When the supernet graph is hierarchical (as in Figure \ref{fig:intro_tree}), our bounds provide insights for the benefits of clustering tasks into similar groups and superiority of multilayer \MP MTL over using single-layer shallow supernets (Section \ref{hierarchy}). 

\noindent$\bullet$ We develop stronger results for linear representations over a supernet and obtain novel MTL and transfer learning bounds (Sec.~\ref{sec linear} and Theorem \ref{e2e thm}). These are accomplished by developing new task-diversity criteria to account for the task-specific (thus heterogeneous) nature of \Mp representations. Numerical experiments support our theory and verify the benefits of \Mp representations. Finally, we also highlight multiple future directions. 
\section{Setup and Problem Formulations}\label{sec:setup}
\noindent\textbf{Notation.} Let $\|\cdot\|$ denote the $\ell_2$-norm of a vector and operator norm of a matrix. $|\cdot|$ denotes the absolute value for scalars and cardinality for discrete sets. We use $[K]$ to denote the set $\{1,2,\dots,K\}$ and $\lesssim,\gtrsim$ for inequalities that hold up to constant/logarithmic factors. $\Qc^K$ denotes $K$-times Cartesian product of a set $\Qc$ with itself. $\circ$ denotes functional composition, i.e.,~$f\circ g(x)=f(g(x))$.


\noindent\textbf{Setup.} Suppose we have $T$ tasks each following data distribution $\{\Dc_t\}_{t=1}^T$. During MTL phase, we are given $T$ training datasets $\{\Sc_t\}_{t=1}^T$ each drawn i.i.d.~from its corresponding distribution $\Dc_t$. Let $\Sc_t=\{(\x_{ti},y_{ti})\}_{i=1}^{N}$, where $(\x_{ti},y_{ti})\in(\Xc,\R)$ is an input-label pair and $\Xc$ is the input space, and $|\Sc_t|=N$ is the number of samples per task. We assume the same $N$ for all tasks for cleaner exposition. Define the union of the datasets by $\Scb=\bigcup_{t=1}^T\Sc_t$ (with $|\Scb|=NT$), and the set of distributions by $\Dcb=\{\Dc_t\}_{t=1}^T$.

Following the setting of related works \cite{tripuraneni2021provable}, we will consider two problems: \textbf{(1) MTL problem} will use these $T$ datasets to learn a supernet and establish guarantees for representation learning. \textbf{(2) Transfer learning problem} will use the resulting representation for a downstream task in a sample efficient fashion.




%
\smallskip
\noindent \textbf{Problem (1): \MP Multitask Learning (\MTL). } We consider a supernet with $L$ layers where layer $\ell$ has $K_\ell$ modules for $\ell\in[L]$. 
As depicted in Figure~\ref{fig:intro}, each task will compose a task-specific representation by choosing one module from each layer. We refer to each sequence of $L$ modules as a \emph{pathway}.
Let $\Ac=[K_1]\times \dots\times [K_L]$ be the set of all pathway choices obeying $|\Ac|=\prod_{\ell=1}^L K_\ell$. Let $\alpha_t\in\Ac$ denote the pathway associated with task $t\in [T]$ where $\alpha_{t}[\ell]\in[K_\ell]$ denotes the selected module index from layer $\ell$. We remark that results can be extended to more general pathway sets as discussed in Section \ref{sec:main mtl}.




As depicted in Figure~\ref{fig:intro}, let $\Psi_\ell$ be the hypothesis set of modules in $\ell_\tth$ layer and $\psi_{\ell}^k\in\Psi_\ell$ denote the $k_\tth$ module function in the $\ell_\tth$ layer, referred to as ($\ell,k$)'th module. 
Let $h_t\in\Hc$ be the prediction head of task $t$ where all tasks use the same hypothesis set $\Hc$ for prediction. Let us denote the combined hypothesis 
\begin{align*}
&\h=[h_1,\dots,h_T]\in\Hc^T,\\
&\bal=[\alpha_1,\dots,\alpha_T]\in\Ac^T,\\
&\bpsi_\ell=[\psi_{\ell}^1,\dots,\psi_{\ell}^{K_\ell}]\in\Psi_\ell^{K_\ell},~\forall \ell\in[L],\\
&\bphi:=[\bpsi_1,\dots,\bpsi_L]\in\Phi
\end{align*}
where $\Phi=\Psi_1^{K_1}\times \dots\times\Psi_L^{K_L}$ is the supernet hypothesis class containing all modules/layers. Given a supernet $\bphi\in\Phi$ and pathway $\alpha$, $\bphi_{\alpha}=\psi_{L}^{\alpha}\circ\dots\circ\psi_1^{\alpha}$ denotes the representation induced by pathway $\alpha$ where we use the convention $\psi_{\ell}^{\alpha}:=\psi_{\ell}^{\alpha[\ell]}$. Hence, $\bphi_{\alpha_t}$ is the representation of task $t$. We would like to solve for supernet weights $\bphi$, pathways $\bal$, and heads $\h$. Thus, given a loss function $\ell(\hat y,y)$, \MP MTL (\MTL) solves the following empirical risk minimization problem over $\Scb$ to optimize the combined hypothesis $\f=(\h,\bal,\bphi)$: 
\begin{align}
   \hat\f=\underset{\f\in\Fc}{\arg\min}~&\widehat\Lc_{\Scb}(\f):=\frac{1}{T}\sum_{t=1}^T\widehat\Lc_t(h_t\circ\bphi_{\alpha_t})\label{formula:multipath}\tag{\MTL}\\
   \text{where}~~~&\widehat\Lc_t(f)=\frac{1}{N}\sum_{i=1}^N\ell(f(\x_{ti}),y_{ti})\nn\\
   &\Fc:=\Hc^T\times\Ac^T\times\Phi.\nn
\end{align}
Here $\widehat\Lc_t$ and $\widehat\Lc_{\Scb}$ are task-conditional and task-averaged empirical risks. We are primarily interested in controlling the task-averaged test risk $\Lc_{\Dcb}(\f)=\E[\widehat\Lc_{\Scb}(\f)]$. Let $\Lc_{\Dcb}^\star:=\min_{\f\in\Fc}\Lc_{\Dcb}(\f)$, then the \emph{excess MTL risk} is defined as
\begin{align}
    \RMTL(\hat\f)=\Lc_{\Dcb}(\hat\f)-\Lc_{\Dcb}^\star.\label{risk mtl}
\end{align}




\smallskip
\noindent \textbf{Problem (2): Transfer Learning with Optimal Pathway (\TFL). } 
Suppose we have a novel target task with i.i.d.~training dataset $\Sc_\tgt=\{(\x_i,y_i)\}_{i=1}^M$ with $M$ samples drawn from distribution $\Dc_\tgt$. 
Given a pretrained supernet $\bphi$ (e.g.,~following \eqref{formula:multipath}), we can search for a pathway $\alpha$ so that $\bphi_{\alpha}$ becomes a suitable representation for $\Dc_\tgt$. 
Thus, for this new task, we only need to optimize the path $\alpha\in \Ac$ and the prediction head $h\in\Hc_\tgt$ while reusing weights of $\bphi$. This leads to the following problem:
\begin{align}
    \hat f_{\bphi}=\underset{h\in\Hc_\tgt,\alpha\in\Ac}{\arg\min}&\widehat\Lc_{\tgt}(f)~~~\text{where}~~~f=h\circ\bphi_{\alpha}\label{formula:transfer}\tag{\TFL}\\
    \text{and}~~~&\widehat\Lc_{\tgt}(f)=\frac{1}{M}\sum_{i=1}^M\ell(f(\x_i),y_i).\nn
\end{align}
Here, $\hat f_{\bphi}$ reflects the fact that solution depends on the suitability of pretrained supernet $\bphi$.
 Let $f^\star_\bphi$ be a population minima of (\ref{formula:transfer}) given supernet $\bphi$ (as $M\rightarrow\infty$) and define the population risk $\Lc_{\tgt}(f)=\E[\widehat\Lc_{{\tgt}}(f)]$. 
 {
 \eqref{formula:transfer} will be evaluated against the hindsight knowledge of optimal supernet for target: Define the optimal target risk $\Lc_\tgt^\st:=\min_{h\in\Hc_\tgt,\bphi\in\Phi}\Lc_\tgt(h\circ \bphi_{\alpha})$ which optimizes $h,\bphi$ for the target task along the fixed pathway $\alpha=[1,\dots,1]$. Here we can fix $\alpha$ since all pathways result in the same search space.}
We define the \emph{excess transfer learning risk} to be
\begin{align}
\RTFL&(\hat f_{\bphi})=\Lc_{\tgt}(\hat f_{\bphi})-\Lc_{\tgt}^\st\label{TFL risk}\\
    &=\underset{\text{variance}}{\underbrace{\Lc_{\tgt}(\hat f_{\bphi})-\Lc_{\tgt}(f^\star_{\bphi})}}+\underset{\text{\bias}}{\underbrace{\Lc_{\tgt}( f^\star_{\bphi})-\Lc_{\tgt}^\st}}.\nn
\end{align}
 The final line decomposes the overall risk into a \emph{variance} term and \emph{\bias}. The former arises from the fact that we solve the problem with finite training samples. This term will vanish as $M\rightarrow\infty$. The latter term quantifies the bias induced by the fact that \eqref{formula:transfer} uses the representation $\bphi$ {rather than the optimal representation}. Finally, while supernet $\bphi$ in \eqref{formula:transfer} is arbitrary, for end-to-end guarantees we will set it to the solution $\hat\bphi$ of \eqref{formula:multipath}. In this scenario, we will refer to $\{\Dc_t\}_{t=1}^T$ as source tasks. 

\section{Main Results}\label{sec:main}

We are ready to present our results that establish generalization guarantees for multitask and transfer learning problems over supernet pathways. Our results will be stated in terms of Gaussian complexity which is introduced below.


\begin{definition}
[Gaussian Complexity]\label{def:worst-case} 
Let $\Qc$ be a set of hypotheses that map $\Zc$ to $\R^r$. Let $(\g_{i})_{i=1}^n$ ($\g_i\in\R^r$) be $n$ independent vectors each distributed as $\Nn(\boldsymbol{0},\Iden_r)$ and let $\Z=(\z_i)_{i=1}^n\in\Zc^n$ be a dataset of input features. Then, the empirical Gaussian complexity is defined as
\[
\Gh_\Z(\Qc)=\E_{\g_{i}}\left[\sup_{q\in\Qc}\frac{1}{n}\sum_{i=1}^n\g_{i}^\top q(\z_i)\right].
\]
The worst-case Gaussian complexity is obtained by considering the supremum over $\Z\in\Zc^n$ as follows
\begin{align}
    \Gt^\Zc_n(\Qc)=\sup_{\Z\in\Zc^n}[\Gh_\Z(\Qc)].\nonumber
\end{align}
\end{definition}

For cleaner notation, we drop the superscript $\Zc$ from the worst-case Gaussian complexity (using $\Gt_n(\Qc)$) as its input space will be clear from context. When $\Z=(\z_i)_{i=1}^n$ are drawn i.i.d.~from $\Dc$, the (usual) Gaussian complexity is defined by $\Gc_n(\Qc)=\E_{\Z\sim\Dc^n}[\Gh_\Z(\Qc)]$. Note that, we always have $\Gc_n(\Qc)\leq\Gt_n(\Qc)$ assuming $\Dc$ is supported on $\Zc$. In our setting, keeping track of distributions along exponentially many pathways proves challenging, and we opt to use $\Gt_n(\Qc)$ which leads to clean upper bounds. The supplementary material also derives tighter but more convoluted bounds in terms of empirical complexity. Finally, it is well-known that Gaussian/Rademacher complexities scale as $\sqrt{{\comp(\Qc)}/{n}}$ where $\comp(\Qc)$ is a set complexity such as VC-dimension, which links to our informal statement \eqref{basic bound}.


We will first present our generalization bounds for the \MMTL problem using empirical process theory arguments. Our bounds will lead to meaningful guarantees for specific MTL settings, including vanilla MTL where all tasks share a single representation, as well as hierarchical MTL depicted in Fig.~\ref{fig:intro_tree}. We will next derive transfer learning guarantees in terms of supernet bias, which quantifies the performance difference of a supernet from its optimum for a target. 
To state our results, we introduce two standard assumptions. 
%


    
\begin{assumption}\label{assum:lip1}
    Elements of hypothesis sets $\Hc$ and $(\Psi_{\ell})_{\ell=1}^L$ are $\Gamma$-Lipschitz functions with respect to Euclidean norm.
\end{assumption}
\begin{assumption}\label{assum:lip2}
    Loss function {$\ell(\cdot,y):\R\times\R\rightarrow[0,1]$} is $\Gamma$-Lipschitz with respect to Euclidean norm. 
\end{assumption}

\subsection{Results for Multipath Multitask Learning}\label{sec:main mtl}
This section presents our task-averaged generalization bound for \MP MTL problem. Recall that $\hat\f=(\hat\h,\hat\bal,\hat\bphi)$ is the outcome of the ERM problem \eqref{formula:multipath}. Observe that, if we were solving the problem with only one task, the generalization bound would depend on only one module per layer rather than the overall size of the supernet. This is because each task gets to select a single module through their pathway. In light of this, we can quantify the utilization of supernet layers as follows: Let $\hat{K}_\ell$ be the number of modules utilized by the empirical solution $\hat\f$. Formally, $\hat{K}_\ell=|\{\hat\alpha_t[\ell]~~\text{for}~~ t\in [T]\}|$. The following theorem provides our guarantee in terms of Gaussian complexities of individual modules.



\begin{theorem}\label{thm:main}
    Suppose Assumptions 1\&2 hold. Let $\hat\f$ be the empirical solution of \eqref{formula:multipath}. Then, with probability at least $1-\delta$, the excess test risk in \eqref{risk mtl} obeys $\RMTL(\hat\f)$
    \begin{align*}
        \lesssim\Gt_N(\Hc)+\sum_{\ell=1}^L\sqrt{\hat{\K}_\ell}\Gt_{NT}(\Psi_\ell)+\sqrt{\frac{\log|\Ac|}{N}+\frac{\log(2/\delta)}{NT}}.
    \end{align*}
Here, the input spaces for $\Hc$ and $\Psi_{\ell}$ are $\Xc_\Hc=\Psi_L\circ\dots\Psi_1\circ\Xc$, $\Xc_{\Psi_\ell}=\Psi_{\ell-1}\circ\dots\Psi_1\circ\Xc$ for $\ell>1$, and $\Xc_{\Psi_1}=\Xc$. 
\end{theorem}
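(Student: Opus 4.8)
The plan is to prove Theorem~\ref{thm:main} by the standard two-step route for representation-learning generalization bounds: (i) a uniform convergence argument over the class $\Fc = \Hc^T\times\Ac^T\times\Phi$ controlling $\sup_{\f\in\Fc}|\Lc_{\Dcb}(\f)-\widehat\Lc_{\Scb}(\f)|$, and (ii) passing from this uniform bound to the excess risk $\RMTL(\hat\f)$ by the usual "empirical risk of $\hat\f$ $\le$ empirical risk of the population optimum" sandwich, which costs one more symmetrization term and a $\sqrt{\log(2/\delta)/NT}$ deviation term from bounded differences (McDiarmid over all $NT$ samples, using Assumption~2 that the loss is in $[0,1]$). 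Step (ii) is routine; the real work is a careful chaining/decomposition in step (i) that exploits the pathway structure so that the supernet modules appear with the favorable $1/\sqrt{NT}$ rate while the heads and pathway choices appear with the $1/\sqrt{N}$ rate.

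For step (i), I would first apply the standard symmetrization inequality to replace the supremum of the empirical process by a Gaussian (or Rademacher) complexity of the loss-composed class $\ell\circ\Fc$ over the pooled dataset $\Scb$ of size $NT$, then peel off the $\Gamma$-Lipschitz loss by the Ledoux--Talagrand contraction principle (Assumption~2). The key structural step is to decompose the representation $h_t\circ\bphi_{\alpha_t}$ layer by layer. I would introduce the ordered composition of layer hypothesis classes — $\Xc_{\Psi_\ell}=\Psi_{\ell-1}\circ\dots\circ\Psi_1\circ\Xc$ as in the statement — and peel the layers from the outside in: contract through the $\Gamma$-Lipschitz head class $\Hc$, then through each $\Gamma$-Lipschitz module layer $\Psi_\ell$, each time using a vector-valued contraction lemma. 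Because each task's pathway picks exactly one module $\psi_\ell^{\hat\alpha_t[\ell]}$ from layer $\ell$, when we sum the Gaussian process over all $NT$ pooled samples the layer-$\ell$ contribution only "sees" the $\hat K_\ell$ distinct modules actually used; bounding the complexity of a union of $\hat K_\ell$ copies of $\Psi_\ell$ costs the $\sqrt{\hat K_\ell}$ factor (a union-of-classes / $\ell_2$-aggregation bound, exactly the mechanism that turns one class into $\sqrt{\hat K_\ell}\,\Gt_{NT}(\Psi_\ell)$). Meanwhile the head class is task-specific — each $h_t$ only touches its own $N$ samples — so its complexity enters at scale $\Gt_N(\Hc)$ rather than $\Gt_{NT}(\Hc)$; this is the standard $\sqrt{T}$-averaging that gives the $1/\sqrt{NT}$ versus $1/\sqrt{N}$ split seen in \cite{maurer2016benefit,tripuraneni2021provable}. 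Finally, the discrete choice of pathways contributes a finite-class term: there are at most $|\Ac|^T$ pathway assignments, and a union bound over this finite set (again with bounded losses) yields the $\sqrt{\log|\Ac|/N}$ term after dividing the $\log(|\Ac|^T)=T\log|\Ac|$ by the $NT$ samples.

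The main obstacle I anticipate is the bookkeeping in the layer-peeling step: the number of utilized modules $\hat K_\ell$ is itself data-dependent (it is defined through the random solution $\hat\f$), so a naive union bound over "which modules are used" would reintroduce a $\log\binom{K_\ell}{\hat K_\ell}$ or even $\log|\Ac|$ factor per layer. The fix is to first condition on the (discrete) empirical pathway assignment $\hat\bal$, prove the bound for each fixed assignment with its deterministic $\hat K_\ell$, and only union-bound over the $|\Ac|^T$ choices at the very end — so the combinatorial cost is incurred once, in the clean $\sqrt{\log|\Ac|/N}$ term, rather than multiplicatively inside each layer's complexity. A secondary technical point is making the vector-valued contraction through the intermediate layers rigorous when the module outputs are multi-dimensional; this is handled by the standard Maurer-type vector contraction inequality, at the cost of absorbing dimension-dependent constants into the $\lesssim$ (which is why the theorem is stated up to constants). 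Once these two points are handled, collecting the head term $\Gt_N(\Hc)$, the $L$ layer terms $\sum_\ell \sqrt{\hat K_\ell}\,\Gt_{NT}(\Psi_\ell)$, the pathway term $\sqrt{\log|\Ac|/N}$, and the McDiarmid deviation $\sqrt{\log(2/\delta)/NT}$ gives exactly the claimed bound.
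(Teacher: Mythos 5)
Your overall skeleton matches the paper's: the sandwich argument for step (ii), the decomposition of the complexity of $\Fc$ into $\Hc$, $(\Psi_\ell)_{\ell=1}^L$ and $\Ac$, the $T\log|\Ac|$-to-$\sqrt{\log|\Ac|/N}$ accounting, and (importantly) your fix for the data-dependence of $\hat\K_\ell$ --- condition on the discrete pathway assignment, prove the bound per assignment, and union-bound over the at most $|\Ac|^T$ choices at the end --- is exactly the paper's Lemma on ``only utilized supernet matters,'' and it is correctly observed to be free because of the pre-existing $\sqrt{\log|\Ac|/N}$ term. The $\sqrt{\hat\K_\ell}$ via an $\ell_2$-aggregation over the modules of a layer is also the right mechanism (in the paper it is a Cauchy--Schwarz over the $K_\ell$ per-module covering numbers).

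The genuine gap is in your central step: peeling the layers ``from the outside in \ldots each time using a vector-valued contraction lemma.'' The Ledoux--Talagrand and Maurer vector contraction inequalities require the outer Lipschitz maps to be \emph{fixed} (or indexed by the sample, but not varying over a hypothesis class inside the supremum). Here the outer composition $h_t\circ\psi_L^{\alpha_t}\circ\dots\circ\psi_{\ell+1}^{\alpha_t}$ ranges over $\Hc\times\Psi_L\times\dots\times\Psi_{\ell+1}$ simultaneously with the inner layer you are trying to expose, so contraction simply does not apply; if you first fix the outer maps by taking a supremum, you decouple the layers and must then control a worst-case complexity of the outer class over the image of the inner class, which is no longer a contraction statement but a genuine chain rule (Maurer's chain rule carries extra diameter terms for exactly this reason). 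The paper sidesteps this by never contracting through a varying class: after symmetrization it runs Dudley's entropy integral on $\Fc$ directly, builds a \emph{product cover} of all heads and modules at once by decomposing the data-dependent metric $\rho_\X(\f,\f')$ via the Lipschitz chain rule into per-head distances on $\Z_\Hc^t$ and per-module distances on $\Z_\ell^k$, and then converts each per-block covering number back to a (worst-case) Gaussian complexity via Sudakov minoration --- which is also where the worst-case complexities $\Gt_N(\Hc)$, $\Gt_{NT}(\Psi_\ell)$ over the composed input spaces $\Xc_\Hc$, $\Xc_{\Psi_\ell}$ become necessary, and where the $\Gamma^L$ and $\log NT$ factors hidden in $\lesssim$ actually arise (not from ``dimension-dependent constants'' of vector contraction). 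To repair your proposal you would either adopt this covering/Dudley route or explicitly invoke Maurer's chain rule for expected suprema of Gaussian processes and track its additional terms; as written, the layer-peeling step would fail.
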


In Theorem \ref{thm:main}, $\sqrt{\frac{\log|\Ac|}{N}}$ quantifies the cost of learning the pathway and $\Gt_N(\Hc)$ quantifies the cost of learning the prediction head for each task $t\in [T]$. $\log|\Ac|$ dependence is standard for the discrete search space $|\Ac|$. The $\Gt_{NT}(\Psi_\ell)$ terms are more interesting and reflect the benefits of MTL. The reason is that, these modules are essentially learned with $NT$ samples rather than $N$ samples, thus cost of representation learning is shared across tasks. The $\sqrt{\hat{\K}_\ell}$ multiplier highlights the fact that, we only need to worry about the used modules rather than all possible $K_\ell$ modules we could have used. In essence, $\sum_{\ell=1}^L\sqrt{\hat{\K}_\ell}\Gt_{NT}(\Psi_\ell)$ summarizes the Gaussian complexity of $\Gt(\Phi_{\text{used}})$ where $\Phi_{\text{used}}$ is the subnetwork of the supernet utilized by the ERM solution $\hat\f$. By definition $\Gt(\Phi_{\text{used}})\leq \Gt(\Phi)$. With all these in mind, Theorem \ref{thm:main} formalizes our earlier statement \eqref{basic bound}.

A key challenge we address in Theorem \ref{thm:main} is decomposing the complexity of the combined hypothesis class $\Fc$ in \eqref{formula:multipath} into its building blocks $\Ac,\Hc,(\Psi_\ell)_{\ell=1}^L$. This is accomplished by developing Gaussian complexity chain rules inspired from the influential work of \cite{tripuraneni2020theory,maurer2016chain}. While this work focuses on two layer composition (prediction heads composed with a shared representation), we develop bounds to control arbitrarily long compositions of hypotheses. Accomplishing this in our \Mp setting presents additional technical challenges because each task gets to choose a unique pathway. Thus, tasks don't have to contribute to the learning process of each module unlike the vanilla MTL with shared representation. Consequently, ERM solution is highly heterogeneous and some modules and tasks will be learned better than the others. Worst-case Gaussian complexity plays an important role to establish clean upper bounds in the face of this heterogeneity. {In fact, in supplementary material, we provide tighter bounds in terms of empirical Gaussian complexity $\Gh$, however, they necessitate more convoluted definitions that involve the number of tasks that choose a particular module.}

Finally, we note that our bound has a natural interpretation for parametric classes whose $\log(\eps\text{-covering number})$ (i.e.~metric entropy) grows with {degrees of freedom as $\DoF\cdot\log(1/\eps)$.} Then, Theorem \ref{thm:main} implies a risk bound proportional to $\sqrt{\frac{T\cdot (\DoF(\Hc)+\log |\Ac|)+\sum_{\ell=1}^L\hat{\K}_\ell\cdot\DoF(\Psi_\ell)}{NT}}$. For a neural net implementation, this means small risk as soon as total sample size $NT$ exceeds total number of weights.

We have a few more remarks in place, discussed below.


\smallskip 
\noindent $\bullet$ \textbf{Dependencies.} In Theorem \ref{thm:main}, $\lesssim$ suppresses dependencies on $\log(NT)$ and $\Gamma^L$. The latter term arises from the exponentially growing Lipschitz constant as we compose more/deeper modules, however, it can be treated as a constant for fixed depth $L$. We note that such exponential depth dependence is frequent in existing generalization guarantees in deep learning literature \cite{golowich2018size,bartlett2017spectrally,neyshabur2018towards,neyshabur2017exploring}.
{In supplementary material, we prove that the exponential dependence can be replaced with a much stronger bound of $\sqrt{L}$ by assuming parameterized hypothesis classes.}

\smallskip 
\noindent$\bullet$ \textbf{Implications for Vanilla MTL.} Observe that Vanilla MTL with single shared representation corresponds to the setting $L=1$ and $K_1=1$. Also supernet is simply $\Phi=\Psi_1$ and $\log |\Ac|=0$. Applying Theorem \ref{thm:main} to this setting with $T$ tasks each with $N$ samples, we obtain an excess risk upper bound of $\ordet{\Gt_{NT}(\Phi)+\Gt_N(\Hc)}$, where representation $\Phi$ is trained with $NT$ samples with input space $\Xc$, and task-specific heads $h_t\in\Hc$ are trained with $N$ samples with input space $\Phi\circ\Xc$. This bound recovers earlier guarantees by \cite{maurer2016benefit,tripuraneni2020theory}.


\smallskip 
\noindent$\bullet$ \textbf{Unselected modules do not hurt performance.} A useful feature of our bound is its dependence on $\Phi_{\text{used}}$ (spanned by empirical pathways) rather than full hypothesis class $\Phi$. This feature arises from a uniform concentration argument where we uniformly control the excess MTL risk over all potential $\Phi_{\text{used}}$ choices. This uniform control ensures $\Gt_{NT}(\Phi_{\text{used}})$ cost for the actual solution $\hat{\f}$ and it only comes at the cost of an additional $\sqrt{\frac{\log|\Ac|}{N}}$ term which is free (up to constant)!



\smallskip 
{\noindent$\bullet$ \textbf{Continuous pathways.}} This work focuses on relatively simple pathways where tasks choose one module from each layer. The results can be extended to other choices of pathway sets $\Ac$. First, note that, as long as $\Ac$ is a discrete set, we will naturally end up with the excess risk dependence of $\sqrt{\frac{\log|\Ac|}{N}}$. However, one can also consider continuous $\alpha$, for instance, due to relaxation of the discrete set with a simplex constraint. Such approaches are common in differentiable architecture search methods \cite{liu2018darts}. In this case, each entry $\alpha[\ell]$ can be treated as a $K_\ell$ dimensional vector that chooses a continuous superposition of $\ell$'th layer modules. {Thus, the overall $\alpha\in\Ac$ parameter would have $\comp(\Ac)=\sum_{\ell=1}^L K_\ell$ resulting in an excess risk term of $\sqrt{{\sum_{\ell=1}^L K_\ell}/{N}}$.} Note that, these are high-level insights based on classical generalization arguments. In practice, performance can be much better than these uniform concentration based upper bounds.


\smallskip 
\noindent$\bullet$ \textbf{No harm under overparameteration.} A drawback of Theorem \ref{thm:main} is that, it is an average-risk guarantee over $T$ tasks. In practice, it is possible that some tasks are hurt during MTL because they are isolated or dissimilar to others (see supplementary for examples). Below, we show that, if the supernet achieves zero empirical risk, then, no task will be worse than the scenario where they are individually trained with $N$ samples, i.e.~\MP MTL does not hurt any task.
\begin{lemma}\label{no harm}
    Recall $\hat\f$ is the solution of \eqref{formula:multipath} and $\hat f_t=\hat h_t\circ\hat\bphi_{\hat\alpha_t}$ is the associated task-$t$ hypothesis. Define the excess risk of task $t$ as $\Rt(\hat f_t)=\Lc_t(\hat f_t)-\Lc^\st_t$ where $\Lc_t(f)=\E_{\Dc_t}[\widehat\Lc_t(f)]$ is the population risk of task $t$ and $\Lc^\st_t$ is the optimal achievable test risk for task $t$ over $\Fc$.  With probability at least $1-\delta-\P(\widehat\Lc_{\Scb}(\hat\f)\neq0)$, for all tasks $t\in[T]$,
    \begin{align*}
        \Rt(\hat f_t)\lesssim\Gt_N(\Hc)+\sum_{\ell=1}^L\Gt_N(\Psi_\ell)+\sqrt{\frac{\log(2T/\delta)}{N}}.
    \end{align*}
\end{lemma}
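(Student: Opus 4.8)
\begin{proofsk}
The plan is to reduce the per-task excess risk to a \emph{single-path} uniform-convergence statement on each task's own $N$ samples, using the fact that zero empirical risk forces the empirical risk of every individual task to vanish. First I would condition on the event $\Ec_0=\{\widehat\Lc_{\Scb}(\hat\f)=0\}$. Since $\widehat\Lc_{\Scb}(\hat\f)=\frac1T\sum_{t=1}^T\widehat\Lc_t(\hat f_t)$ is an average of nonnegative terms (the loss takes values in $[0,1]$ by Assumption \ref{assum:lip2}), on $\Ec_0$ we have $\widehat\Lc_t(\hat f_t)=0$ for every $t\in[T]$.

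The key structural observation is that, although $\hat f_t=\hat h_t\circ\hat\bphi_{\hat\alpha_t}$ depends on all of $\Scb$, it always lies in the fixed single-path class
\begin{align*}
\Fc_{\mathrm{sp}}:=\{\,h\circ\psi_L\circ\cdots\circ\psi_1\ :\ h\in\Hc,\ \psi_\ell\in\Psi_\ell\ \text{for all}\ \ell\in[L]\,\},
\end{align*}
because $\hat\bphi_{\hat\alpha_t}=\hat\psi_L^{\hat\alpha_t[L]}\circ\cdots\circ\hat\psi_1^{\hat\alpha_t[1]}$ with each factor in the corresponding $\Psi_\ell$. Note that $\Fc_{\mathrm{sp}}$ already ranges over all of $\Psi_\ell$ rather than a finite module selection, so no pathway-search term $\sqrt{\log|\Ac|/N}$ is incurred here.

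Next, for each fixed $t$ I would apply the standard symmetrization / Gaussian-complexity generalization bound (the same machinery used for Theorem \ref{thm:main}, combined with a bounded-differences tail since $\ell\in[0,1]$) to the class $\ell\circ\Fc_{\mathrm{sp}}$ on the i.i.d.\ sample $\Sc_t\sim\Dc_t^N$: with probability at least $1-\delta/T$,
\begin{align*}
\sup_{f\in\Fc_{\mathrm{sp}}}\big(\Lc_t(f)-\widehat\Lc_t(f)\big)\ \lesssim\ \Gt_N(\ell\circ\Fc_{\mathrm{sp}})+\sqrt{\tfrac{\log(2T/\delta)}{N}},
\end{align*}
where the empirical Gaussian complexity on $\Sc_t$ is dominated by the worst-case $\Gt_N$. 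Invoking Assumptions \ref{assum:lip1}--\ref{assum:lip2} and the Gaussian-complexity chain rule for long compositions already developed for Theorem \ref{thm:main} (peel off the $\Gamma$-Lipschitz loss, then $\Hc$, then $\Psi_L,\dots,\Psi_1$, each via Lipschitz contraction), this yields $\Gt_N(\ell\circ\Fc_{\mathrm{sp}})\lesssim\Gt_N(\Hc)+\sum_{\ell=1}^L\Gt_N(\Psi_\ell)$ with the input spaces $\Xc_\Hc,\Xc_{\Psi_\ell}$ exactly as in Theorem \ref{thm:main}, where $\lesssim$ absorbs the $\Gamma^L$ and $\log(NT)$ factors.

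Finally, I would union bound over $t\in[T]$ so the displayed deviation bound holds simultaneously for all tasks with probability at least $1-\delta$, and intersect with $\Ec_0$; by a union bound the intersection has probability at least $1-\delta-\P(\widehat\Lc_{\Scb}(\hat\f)\neq0)$. On this event, for every $t$, using $\widehat\Lc_t(\hat f_t)=0$ and $\hat f_t\in\Fc_{\mathrm{sp}}$ we get $\Lc_t(\hat f_t)=\Lc_t(\hat f_t)-\widehat\Lc_t(\hat f_t)\le\sup_{f\in\Fc_{\mathrm{sp}}}\big(\Lc_t(f)-\widehat\Lc_t(f)\big)\lesssim\Gt_N(\Hc)+\sum_{\ell=1}^L\Gt_N(\Psi_\ell)+\sqrt{\log(2T/\delta)/N}$, and since the loss is nonnegative, $\Lc^\st_t\ge0$, so $\Rt(\hat f_t)=\Lc_t(\hat f_t)-\Lc^\st_t\le\Lc_t(\hat f_t)$, which is the claim. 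The only technically substantial ingredient is the multi-layer chain-rule decomposition of $\Gt_N(\ell\circ\Fc_{\mathrm{sp}})$, but that is inherited verbatim from Theorem \ref{thm:main}; the genuinely new observations are mild --- (i) zero empirical risk collapses all $\widehat\Lc_t(\hat f_t)$ to $0$, (ii) the data-dependent $\hat f_t$ still lies in the fixed class $\Fc_{\mathrm{sp}}$, so a one-sided uniform deviation bound suffices, (iii) $\Fc_{\mathrm{sp}}$ already covers all pathways, so no $\log|\Ac|$ cost appears, and (iv) nonnegativity of the loss lets $\Lc_t(\hat f_t)$ itself upper-bound the excess risk --- and the one point to be careful about is the probability budget: allocate $\delta$ to the $T$-fold union bound and leave the rest for $\Ec_0$.
\end{proofsk}
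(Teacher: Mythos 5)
Your proposal is correct and follows essentially the same route as the paper: your single-path class $\Fc_{\mathrm{sp}}$ is exactly the paper's $\Fc_{\text{IND}}$, and the paper likewise obtains the per-task uniform deviation bound by applying Theorem~\ref{thm:main} with $K_\ell=1$ (so no $\log|\Ac|$ term), union bounds over the $T$ tasks, and intersects with the interpolation event to conclude $\Rt(\hat f_t)\le\Lc_t(\hat f_t)=\Lc_t(\hat f_t)-\Lch_{\Sc_t}(\hat f_t)$. The additional observations you flag (zero average empirical risk forces each $\widehat\Lc_t(\hat f_t)=0$; $\hat f_t$ is data-dependent but lives in a fixed class) are precisely the ones the paper relies on.
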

Here, $\P(\widehat\Lc_{\Scb}(\hat\f)=0)$ is the event of interpolation (zero empirical risk) under which the guarantee holds. We call this \emph{no harm} because the bound is same as what one would get by applying union bound over $T$ empirical risk minimizations where each task is optimized individually.

\subsection{\OTFL}\label{sec:main tfl}
Following \MMTL problem, in this section, we discuss guarantees for transfer learning on a supernet. Recall that $\Ac$ is the set of pathways and our goal in \eqref{formula:transfer} is finding the optimal pathway $\alpha\in\Ac$ and prediction head $h\in\Hc_\tgt$ to achieve small target risk. In order to quantify the bias arising from the \MMTL phase, we introduce the following definition.
\begin{definition} [Supernet Bias]\label{def:model distance} 
    Recall the definitions $\Dc_\tgt$, $\Hc_\tgt$, and $\Lc^\st_\tgt$ stated in Section~\ref{sec:setup}. Given a supernet $\bphi$, we define the supernet/representation bias of $\bphi$ for a target $\tgt$ as
    \begin{align*}
        \Bias_{\tgt}(\bphi)=\min_{h\in\Hc_\tgt,\alpha\in\Ac}\Lc_\tgt(h\circ\bphi_\alpha)-\Lc^\star_\tgt.
    \end{align*}
\end{definition}

Definition~\ref{def:model distance} is a restatement of the \bias term in \eqref{TFL risk}. Importantly, it ensures that the optimal pathway-representation over $\bphi$ can not be worse than the optimal performance by $\Bias_\tgt(\bphi)$. 
Following this, we can state a generalization guarantee for {transfer learning problem} (\ref{formula:transfer}).
\begin{theorem}\label{thm:tfl} Suppose Assumptions 1\&2 hold. Let supernet $\hat\bphi$ be the solution of \eqref{formula:multipath} and $\hat f_{\hat{\bphi}}$ be the empirical minima of \eqref{formula:transfer} with respect to supernet $\hat\bphi$. Then with probability at least $1-\delta$,
    \begin{align*}
        \RTFL(\hat f_{\hat\bphi})\lesssim \Bias_{\tgt}(\hat\bphi)+\sqrt{\frac{\log(2|\Ac|/\delta)}{M}}+\Gt_M(\Hc_\tgt),
    \end{align*}
    where input space of $\Gt_M(\Hc_\tgt)$ is given by $\{\hat\bphi_\alpha\circ\Xc\bgl \alpha\in\Ac\}$.
\end{theorem}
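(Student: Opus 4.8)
The plan is to split $\RTFL(\hat f_{\hat\bphi})$ along the variance/bias decomposition already recorded in \eqref{TFL risk}, namely $\RTFL(\hat f_{\hat\bphi})=[\Lc_\tgt(\hat f_{\hat\bphi})-\Lc_\tgt(f^\star_{\hat\bphi})]+[\Lc_\tgt(f^\star_{\hat\bphi})-\Lc_\tgt^\st]$. The second summand is \emph{free}: since $f^\star_{\hat\bphi}$ is a population minimizer of \eqref{formula:transfer} over the class $\Fc_{\hat\bphi}:=\{h\circ\hat\bphi_\alpha:h\in\Hc_\tgt,\ \alpha\in\Ac\}$, we have $\Lc_\tgt(f^\star_{\hat\bphi})=\min_{h\in\Hc_\tgt,\alpha\in\Ac}\Lc_\tgt(h\circ\hat\bphi_\alpha)$, so $\Lc_\tgt(f^\star_{\hat\bphi})-\Lc_\tgt^\st$ is exactly $\Bias_\tgt(\hat\bphi)$ by Definition~\ref{def:model distance}, contributing the first term of the claimed bound with no further work. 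The whole proof therefore reduces to controlling the variance term, i.e.~the generalization gap of the ERM problem \eqref{formula:transfer} over the fixed supernet $\hat\bphi$.

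For the variance term I would first invoke independence: $\Sc_\tgt$ is drawn independently of the source data that produced $\hat\bphi$, so we may condition on $\hat\bphi$ and treat it as a deterministic supernet; the resulting bound, being uniform over supernets, then applies verbatim to the random $\hat\bphi$. Conditionally, $\hat f_{\hat\bphi}$ minimizes $\widehat\Lc_\tgt$ and $f^\star_{\hat\bphi}$ minimizes $\Lc_\tgt$ over the \emph{same} class $\Fc_{\hat\bphi}$, so the standard ERM inequality yields $\Lc_\tgt(\hat f_{\hat\bphi})-\Lc_\tgt(f^\star_{\hat\bphi})\le 2\sup_{f\in\Fc_{\hat\bphi}}|\widehat\Lc_\tgt(f)-\Lc_\tgt(f)|$. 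Now exploit the discrete structure of $\Ac$: write $\sup_{f\in\Fc_{\hat\bphi}}|\cdot|=\max_{\alpha\in\Ac}\sup_{h\in\Hc_\tgt}|\widehat\Lc_\tgt(h\circ\hat\bphi_\alpha)-\Lc_\tgt(h\circ\hat\bphi_\alpha)|$. For each \emph{fixed} $\alpha$, McDiarmid's inequality (the loss lies in $[0,1]$, so each sample perturbs $\widehat\Lc_\tgt$ by $\le 1/M$) together with symmetrization bounds this by $\lesssim\Gc_M(\ell\circ\Hc_\tgt\circ\hat\bphi_\alpha)+\sqrt{\log(|\Ac|/\delta)/M}$ with probability $1-\delta/|\Ac|$; a Lipschitz contraction step (using Assumption~2, with the $\Gamma$ and $\Gamma^L$-type factors absorbed into $\lesssim$ as elsewhere) peels off the loss to leave $\lesssim\Gc_M(\Hc_\tgt\circ\hat\bphi_\alpha)\le\Gt^{\hat\bphi_\alpha\circ\Xc}_M(\Hc_\tgt)$. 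Taking a union bound over the $|\Ac|$ pathways and then the maximum over $\alpha$ enlarges the input space to $\{\hat\bphi_\alpha\circ\Xc:\alpha\in\Ac\}$, producing exactly $\Gt_M(\Hc_\tgt)$ with the stated input space plus a $\sqrt{\log(2|\Ac|/\delta)/M}$ term; adding the bias contribution completes the bound.

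I do not expect a deep obstacle here: unlike Theorem~\ref{thm:main}, the supernet is fixed during transfer, so there is no need for the multilayer Gaussian-complexity chain rule, and the head class $\Hc_\tgt$ enters only through a two-layer composition with the frozen $\hat\bphi_\alpha$. The points that need care are (i) the conditioning argument that legitimizes treating the data-dependent $\hat\bphi$ as fixed, which rests squarely on $\Sc_\tgt$ being independent of the source datasets; and (ii) the bookkeeping in the union bound over $\Ac$ combined with the passage to the worst-case complexity $\Gt_M(\Hc_\tgt)$ over the union of input spaces $\{\hat\bphi_\alpha\circ\Xc\}$ — this is precisely what keeps the discrete pathway search at an additive $\sqrt{\log|\Ac|/M}$ cost rather than inflating the head-complexity term.
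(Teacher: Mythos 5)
Your proposal is correct and reaches the stated bound, but the core concentration step is organized differently from the paper's. You both start from the identical (and essentially forced) decomposition $\RTFL(\hat f_{\hat\bphi})=\text{variance}+\Bias_\tgt(\hat\bphi)$, reduce the variance term to a uniform empirical--population gap over $\{h\circ\hat\bphi_\alpha: h\in\Hc_\tgt,\alpha\in\Ac\}$ via the standard ERM inequality, and exploit the discreteness of $\Ac$. The paper, however, keeps the whole class together and runs Dudley's entropy integral on it: the pathway multiplicity enters through the covering-number decomposition $\log\Nc(u;\Hc_\tgt\circ\hat\bphi(\Ac),\rho)\leq\log|\Ac|+\max_{\alpha}\log\Nc(u;\Hc_\tgt,\rho_{\Z_\alpha})$, and the head complexity is recovered from the covering numbers via Sudakov minoration, so the $D\sqrt{\log|\Ac|/M}$ term falls out of the chaining integral. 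You instead apply McDiarmid plus symmetrization and Talagrand contraction separately for each \emph{fixed} pathway $\alpha$ and then union bound over the $|\Ac|$ high-probability events, which puts $\log|\Ac|$ into the deviation term $\sqrt{\log(|\Ac|/\delta)/M}$ rather than into the metric entropy. Your route is more elementary (no chaining, no Sudakov back-and-forth) and suffices here precisely because, with $\hat\bphi$ frozen, each fixed-$\alpha$ class is a plain two-layer composition; the paper's route is the natural specialization of the machinery already built for Theorem~\ref{thm:main} and keeps the two proofs stylistically uniform. Two points you handled that the paper leaves implicit and that are worth keeping explicit: the conditioning on $\hat\bphi$ justified by independence of $\Sc_\tgt$ from the source data, and the observation that passing from $\max_\alpha\Gt_M^{\hat\bphi_\alpha\circ\Xc}(\Hc_\tgt)$ to $\Gt_M(\Hc_\tgt)$ over the union input space $\{\hat\bphi_\alpha\circ\Xc\bgl\alpha\in\Ac\}$ is a harmless enlargement. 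One tiny correction: no $\Gamma^L$ factor arises in this theorem --- only the Lipschitz constant of the loss is contracted away, since the supernet is not re-covered --- so your hedge about absorbing $\Gamma^L$ is unnecessary, though not wrong.
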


Theorem~\ref{thm:tfl} highlights the sample efficiency of transfer learning with optimal pathway. While the derivation is straightforward relative to Theorem \ref{thm:main}, the key consideration is the supernet bias $\Bias_\tgt(\hat\bphi)$. This term captures the excess risk in \eqref{formula:transfer} introduced by using $\hat\bphi$. Let $\bphi^\st$ be the population minima of \eqref{formula:multipath}. Then we can define the \emph{supernet distance} of $\hat\bphi$ and $\bphi^\st$ by $d_\tgt(\hat\bphi;\bphi^\st)=\Bias_\tgt(\hat\bphi)-\Bias_\tgt(\bphi^\st)$. The distance measures how well the finite sample solution $\hat\bphi$ from \eqref{formula:multipath} performs compared to the optimal MTL solution $\bphi^\st$. A plausible assumption is so-called \emph{task diversity} proposed by \citet{chen2021weighted,tripuraneni2020theory,xu2021representation}. Here, the idea (or assumption) is that, if a target task is similar to the source tasks, the distance term for target can be controlled in terms of the excess MTL risk $\RMTL(\hat\f)$ (e.g.~by assuming $d_\tgt(\hat\bphi;\bphi^\st)\lesssim \RMTL(\hat\f)+\eps$). Plugging in this assumption would lead to end-to-end transfer guarantees by integrating Theorems \ref{thm:main} and \ref{thm:tfl}, and we extend the formal analysis to appendix. However, as discussed in Theorem \ref{e2e thm}, in \Mp setting, the problem is a lot more intricate because source tasks can choose totally different task-specific representations making such assumptions unrealistic. In contrast, Theorem \ref{e2e thm} establishes concrete guarantees by probabilistically relating target and source distributions. Finally, $\Bias_\tgt(\bphi^\st)$ term is unavoidable, however, similar to $d_\tgt(\hat\bphi;\bphi^\st)$, it will be small as long as source and target tasks benefit from a shared supernet at the population level.

\section{Guarantees for Linear Representations}\label{sec linear}

As a concrete instantiation of Multipath MTL, consider a linear representation learning problem where each module $\psi_{\ell}^k$ applies matrix multiplications parameterized by $\B_{\ell}^k$ with dimensions $p_{\ell}\times p_{\ell-1}$: $\psi_{\ell}^k(\x)=\B_{\ell}^k\x$. Here $p_\ell$ are module dimensions with input dimension $p_0=p$ and output dimension $p_{L}$. Given a path $\alpha$, {we obtain the linear representation $\B_{\alpha}=\Pi_{\ell=1}^L \B_{\ell}^{\alpha[\ell]}\in\R^{p_L\times p}$ where $p_L$ is the number of rows of the final module $\B_{L}^{\alpha[L]}$}. When $p_L\ll p$, $\B_{\alpha}$ is a fat matrix that projects $\x\in\R^p$ onto a lower dimensional subspace. This way, during few-shot adaptation, we only need to train $p_L\ll p$ parameters with features $\B_{\alpha}\x$. This is also the central idea in several works on linear meta-learning \cite{kong2020robust,sun2021towards,bouniot2020towards,tripuraneni2021provable} which focus on a single linear representation. Our discussion within this section extends these results to the \MP MTL setting. 

Denote  $\fb=\{((\B_{\ell}^k)_{k=1}^{K_\ell})_{\ell=1}^L,(\h_t,\alpha_t)_{t=1}^T\}$ where $\h_t\in\R^{p_{L}}$ are linear prediction heads. Let $\Fc$ be the search space associated with $\fb$. {Follow the similar setting as in Section~\ref{sec:setup} and let $\Xc\subset\R^p$.} Given dataset $\Scb=(\Sc_t)_{t=1}^T$, we study
\vspace{-3pt}
\begin{align}
\hat{\fb}=\min_{\fb\in\Fc} \Lch_{\Scb}(\fb):=\frac{1}{NT}\sum_{t=1}^T\sum_{i=1}^N (y_{ti}-{\h^{\top}_t} \B_{\alpha_t}\x_{ti})^2.\label{LMP-MTL}
\end{align}
Let $\Bc^p(r)\subset\R^p$ be the Euclidean ball of radius $r$. To proceed, we make the following assumption for a constant $C\geq 1$.
\begin{assumption}\label{bounded assume} For all $\ell\in [L]$, $\Psi_\ell$ is the set of matrices with operator norm bounded by $C$ and $\Hc=\Bc^{p_{L}}(C)$.
\end{assumption}
{The result below is a variation of Theorem \ref{thm:main} where the bound is refined for linear representations (with finite parameters).}

\begin{theorem}\label{cor linear} Suppose Assumptions \ref{assum:lip2}\&\ref{bounded assume} hold, and input set $\Xc\subset \Bc^p({R})$ for a constant $R>0$. Then, with probability at least $1-\delta$, 
\[
\RMTL(\hat\f)\lesssim\sqrt{\frac{L\cdot\DoF(\Fc)}{NT}}+\sqrt{\frac{\log|\Ac|}{N}+\frac{\log(2/\delta)}{NT}},
\]
where $\DoF(\Fc)=T\cdot p_L+\sum_{\ell=1}^L K_\ell \cdot p_\ell \cdot p_{\ell-1}$ is the total number of trainable parameters in $\Fc$.
\end{theorem}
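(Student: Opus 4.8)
The idea is to follow the argument behind Theorem~\ref{thm:main} but to replace the worst-case Gaussian complexities of the modules and heads by the explicit $O(\sqrt{\#\mathrm{params}/\#\mathrm{samples}})$ estimates available for bounded linear maps. Two preliminary observations make this work. First, Assumption~\ref{bounded assume} gives $\|\psi_\ell^k(\x)\|\le C\|\x\|$ for every module, so along any pathway the input fed to layer $\ell$ has Euclidean norm at most $C^{\ell-1}R$ on $\Xc\subset\Bc^p(R)$, and the input fed to a head has norm at most $C^L R$; this controls the nested input spaces $\Xc_{\Psi_\ell}$ and $\Xc_\Hc$ appearing in Theorem~\ref{thm:main}, and it shows Assumption~\ref{bounded assume} implies Assumption~\ref{assum:lip1} with $\Gamma=C$, so Theorem~\ref{thm:main} applies. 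Second, for linear heads $\{z\mapsto\h^\top z:\|\h\|\le C\}$ evaluated at $n$ inputs of norm $\le C^L R$ one has $\Gt_n(\Hc)\lesssim C^{L+1}R/\sqrt n\le C^{L+1}R\sqrt{p_L/n}$ (directly from $\E_{\g}\|\sum_i g_i z_i\|\le C^L R\sqrt n$), and for the operator-norm ball $\Psi_\ell$, using the duality between operator and nuclear norm, $\E_{\g}\|\sum_i\g_i z_i^\top\|_F\le\sqrt{p_\ell}\,C^{\ell-1}R\sqrt n$ and $\rank\le\min(p_\ell,p_{\ell-1})$, one gets $\Gt_n(\Psi_\ell)\lesssim C^\ell R\sqrt{p_\ell p_{\ell-1}/n}$.

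\textbf{Assembling the bound.} Plugging these into Theorem~\ref{thm:main} with $n=N$ for heads, $n=NT$ for modules, using $\hat{\K}_\ell\le K_\ell$, and absorbing the $\mathrm{poly}(C,R)$ prefactors into $\lesssim$ yields
\begin{align*}
\RMTL(\hat\f)\lesssim\sqrt{\frac{p_L}{N}}+\sum_{\ell=1}^L\sqrt{\frac{K_\ell\,p_\ell p_{\ell-1}}{NT}}+\sqrt{\frac{\log|\Ac|}{N}+\frac{\log(2/\delta)}{NT}}.
\end{align*}
Cauchy--Schwarz bounds the layer sum by $\sqrt{L\sum_{\ell=1}^L K_\ell p_\ell p_{\ell-1}/(NT)}$, and writing $\sqrt{p_L/N}=\sqrt{Tp_L/(NT)}$ and using $\sqrt a+\sqrt b\le\sqrt{2(a+b)}$ collapses the first two terms into $\sqrt{L\,\DoF(\Fc)/(NT)}$ with $\DoF(\Fc)=Tp_L+\sum_\ell K_\ell p_\ell p_{\ell-1}$, giving the stated inequality.

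\textbf{Main obstacle.} Used as a black box, Theorem~\ref{thm:main} carries a hidden $\Gamma^L=C^L$ factor from applying the Gaussian-complexity chain rule across $L$ compositions; to obtain a bound whose $L$-dependence is genuinely polynomial (matching the $\sqrt L$ on display) one instead reruns the proof of Theorem~\ref{thm:main} with a single covering-number estimate for the composite class $\Fc$ restricted to the $NT$ data points. The perturbation bookkeeping is the crux here: a simultaneous $\eta$-net over all $T$ heads (in $\ell_2$) and all $\sum_\ell K_\ell$ module matrices (in Frobenius norm, which dominates the operator norm) produces an empirical-$\ell_\infty$ net of $\Fc$ at scale $\eps\lesssim C^L LR\,\eta$ (telescoping a single-factor perturbation through the $L$ layers), so $\log N(\eps,\Fc)\lesssim\DoF(\Fc)\bigl(L\log C+\log(LR/\eps)\bigr)+\log|\Ac|$ and the exponential Lipschitz constant enters only inside the logarithm. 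Dudley's entropy integral over the $NT$ samples (after contracting out the $\Gamma$-Lipschitz bounded loss) then gives the $\sqrt{L\,\DoF(\Fc)/(NT)}$ term up to polylog, the $T$ task-specific subprocesses over $N$ points each contribute $\sqrt{(p_L+\log|\Ac|)/N}$, and a bounded-differences concentration step supplies $\sqrt{\log(2/\delta)/(NT)}$. The genuinely new element relative to single-representation MTL is organizing this empirical process so the shared modules are charged at the $NT$-sample rate while each head and pathway is charged only at the $N$-sample rate --- exactly the heterogeneity issue flagged after Theorem~\ref{thm:main}.
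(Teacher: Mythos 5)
Your proposal is correct and follows essentially the paper's own route: the black-box application of Theorem~\ref{thm:main} with the linear Gaussian-complexity estimates reproduces the paper's Corollary~\ref{cor linear app} (which, as you flag, still carries the hidden $C^L$ prefactor), and your fix --- a direct parametric covering of the composite class $\Fc$ with the perturbation telescoped through the $L$ layers so that the Lipschitz blow-up enters only inside the logarithm --- is exactly the argument of the paper's Theorem~\ref{thm subgauss}, which is what the paper invokes to prove this statement. The only immaterial differences are that the paper uses a single-scale cover at resolution $\eps=1/(\Gamma(L+1)NT)$ with a Hoeffding/Bernstein union bound rather than Dudley's integral, and graded per-layer net resolutions $\eps_\ell=\eps/\Gamma^{L-\ell+1}$ rather than a uniform $\eta$; also, the entropy count in your cover should read $T\log|\Ac|$ rather than $\log|\Ac|$ (each task picks its own pathway), which is consistent with the $\sqrt{\log|\Ac|/N}$ term you correctly recover in the end.
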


We note that Theorem \ref{cor linear} can be stated more generally for neural nets by placing ReLU activations between layers. {Here $\lesssim$ subsumes the logarithmic dependencies, and the sample complexity has linear dependence on $L$ (rather than exponential dependence as in Thm~\ref{thm:main}).} In essence, it implies small task-averaged excess risk as soon as $\text{total sample size}\gtrsim \text{total number of weights}$. 

While flexible, this result does not guarantee that $\hat\f$ can benefit transfer learning for a new task. To proceed, we introduce additional assumptions under which we can guarantee the success of \eqref{formula:transfer}. The first assumption is a realizability condition that guarantees tasks share same supernet representation (so that supernet bias is small).

\begin{assumption} \label{linear dist} \textbf{(A)} Task datasets are generated from a planted model ($\x_t,y_t)\sim\Dc_t$ where $y_t=\x_t^\top \bt_t^\star +z_t$ where $\x_t,z_t$ are zero mean, $\order{1}$-subgaussian and $\E[\x_t\x_t^\top]=\Iden_p$. \\
 \textbf{(B)} Task vectors are generated according to ground-truth supernet $\f^\star=\{((\bar{\B}_{\ell}^k)_{k=1}^{K_\ell})_{\ell=1}^L,(\bar{\h}_t,\bar{\alpha}_t)_{t=1}^T\}$ so that $\bt_t^\star=\bar{\B}^\top_{\bar{\alpha}_t}\bar{\h}_t$. $\f^\star$ is normalized so that $\|\bar{\B}_{\ell}^k\|=\|\bar{\h}_t\|=1$.
\end{assumption}
 Our second assumption is a task diversity condition adapted from \cite{tripuraneni2021provable,kong2020meta} that facilitates the identifiability of the ground truth supernet.
\begin{assumption} [Diversity during MTL] \label{mtldiverse}Cluster the tasks by their pathways via $\Hb_\alpha=\{\bar{\h}_t\bgl \bar{\alpha}_t=\alpha\}$. Define cluster population $\gamma_\alpha={|\Hb_\alpha|}/{p_L}$ and covariance $\bSi_\alpha=\gamma_\alpha^{-1}\sum_{\h\in\Hb_\alpha}\h\h^\top$. For a proper constant $c>0$ and for all pathways $\alpha$ we have $\bSi_\alpha\succeq c\Iden_{p_L}$.
\end{assumption}
Verbally, this condition requires that, if a pathway is chosen by a source task, that pathway should contain diverse tasks so that \eqref{formula:multipath} phase can learn a good representation that can benefit transfer learning. However, this definition is flexible in the sense that pathways can still have sophisticated interactions/intersections and we don't assume anything for the pathways that are not chosen by source. We also have the challenge that, some pathways can be a lot more populated than others and  target task might suffer from poor MTL representation quality over less populated pathways. The following assumption is key to overcoming this issue by enforcing a distributional prior on the target task pathway so that \emph{its pathway is similar to the source tasks in average}.


\begin{assumption}[Distribution of target task] \label{transfer dist}Draw $\alpha_\tgt$ uniformly at random from source pathways $(\bar{\alpha}_t)_{t=1}^T$. Target task is distributed as in Assumption \ref{linear dist}\textbf{(A)} with pathway $\alpha_\tgt$ and $\bt^\st_\tgt=\bar{\B}^\top_{\alpha_\tgt}\h_\tgt$ with $\tn{\h_\tgt}=1$. 
\end{assumption}
With these assumptions, we have the following result that guarantees end-to-end \Mp learning (\eqref{formula:multipath} phase followed by \eqref{formula:transfer} using MTL representation).
\begin{theorem} \label{e2e thm}Suppose Assumptions \ref{bounded assume}--\ref{transfer dist} hold and $\ell(\hat y,y)=(y-\hat y)^2$. {Additionally assume input set $\Xc\subset\Bc^p(R)$ for a constant $R>0$ and $\Hc_\Tc\subset\R^{p_L}$}. Solve MTL problem \eqref{formula:multipath} with the knowledge of ground-truth pathways $(\bar{\alpha}_t)_{t=1}^T$ to obtain a supernet $\hat\bphi$ {and $NT\gtrsim\DoF(\Fc)\log(NT)$}. Solve transfer learning problem \eqref{formula:transfer} with $\hat\bphi$ to obtain a target hypothesis $\hat{f}_{\hat\bphi}$. Then, with probability at least {$1-3e^{-cM}-\delta$}, path-averaged excess target risk \eqref{TFL risk} obeys $\E_{\alpha_{\Tc}}[\RTFL(\hat f_{\hat\bphi})]$
\[
    \lesssim p_L\sqrt{\frac{{L}\cdot\DoF(\Fc)+\log(8/\delta)}{NT}}+\frac{p_L}{M}+\sqrt{\frac{\log(8|\Ac|/\delta)}{M}}.
\]
Here $\DoF(\Fc)=T\cdot p_L+\sum_{\ell=1}^L K_\ell \cdot p_\ell \cdot p_{\ell-1}$, and {$\E_{\alpha_\Tc}$ denotes the expectation over the random target pathways.}
\end{theorem}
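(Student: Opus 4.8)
The plan is to combine three ingredients: the multitask bound of Theorem~\ref{cor linear} specialized to \emph{fixed} pathways; a deterministic ``representation transfer'' estimate that converts the MTL parameter error into control of the supernet bias $\Bias_\tgt(\hat\bphi)$ along any source pathway; and a linear, fast-rate version of the transfer bound of Theorem~\ref{thm:tfl}. The hard part is the second ingredient --- coping with the heterogeneity of the \Mp solution.

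\textbf{Step 1 (MTL phase).} Because the pathways $(\bar\alpha_t)_{t=1}^T$ are fixed in \eqref{formula:multipath}, there is no $\log|\Ac|$ search term, so a bound of the form of Theorem~\ref{cor linear} (adapted to the square loss, whose effective range and Lipschitz constant are controlled under Assumptions~\ref{bounded assume} and~\ref{linear dist}), valid since $NT\gtrsim\DoF(\Fc)\log(NT)$, gives with probability $1-\delta/2$ that $\RMTL(\hat\f)\lesssim\sqrt{(L\,\DoF(\Fc)+\log(1/\delta))/(NT)}$. Under Assumption~\ref{linear dist} with $\E[\x_t\x_t^\top]=\Iden_p$ the population square-loss excess of each task equals its parameter error, so with $\hat\bt_t=\hat\B_{\bar\alpha_t}^\top\hat\h_t$ and $\bt_t^\star=\bar\B_{\bar\alpha_t}^\top\bar\h_t$ one has $\RMTL(\hat\f)=\tfrac1T\sum_{t=1}^T\|\hat\bt_t-\bt_t^\star\|^2$. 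Grouping tasks by pathway, set $S_\alpha=\{t:\bar\alpha_t=\alpha\}$ and $E_\alpha=\sum_{t\in S_\alpha}\|\hat\bt_t-\bt_t^\star\|^2$, so $\sum_\alpha E_\alpha=T\,\RMTL(\hat\f)$.

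\textbf{Step 2 (bias along the target pathway).} Fix a source pathway $\alpha$. By Assumption~\ref{mtldiverse}, $\sum_{t\in S_\alpha}\bar\h_t\bar\h_t^\top=\gamma_\alpha\bSi_\alpha\succeq\gamma_\alpha c\,\Iden_{p_L}$, so for the target head $\h_\tgt$ ($\|\h_\tgt\|=1$) I can pick coefficients $(c_t)_{t\in S_\alpha}$ with $\sum_t c_t\bar\h_t=\h_\tgt$ and $\|c\|^2\le(\gamma_\alpha c)^{-1}$ (minimum-norm solution). Then $\bt_\tgt^\star=\bar\B_\alpha^\top\h_\tgt=\sum_t c_t\bt_t^\star$, and the head $h=\sum_t c_t\hat\h_t$ --- which lies in $\Hc_\tgt$ because $\|h\|\le\|c\|_1\max_t\|\hat\h_t\|\le C\sqrt{p_L/c}$ by Assumption~\ref{bounded assume} --- satisfies, by Cauchy--Schwarz, $\|\hat\B_\alpha^\top h-\bt_\tgt^\star\|^2=\|\sum_t c_t(\hat\bt_t-\bt_t^\star)\|^2\le\|c\|^2E_\alpha\le E_\alpha/(\gamma_\alpha c)$. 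Since $\Lc_\tgt^\st$ is the noise floor by realizability, when the target pathway equals this $\alpha$ we get $\Bias_\tgt(\hat\bphi)\le\min_h\|\hat\B_\alpha^\top h-\bt_\tgt^\star\|^2\le E_\alpha/(\gamma_\alpha c)$. Since by Assumption~\ref{transfer dist} the target pathway equals $\alpha$ with probability $|S_\alpha|/T=\gamma_\alpha p_L/T$, averaging makes the $\gamma_\alpha$ cancel: $\E_{\alpha_\Tc}[\Bias_\tgt(\hat\bphi)]\le\tfrac{p_L}{cT}\sum_\alpha E_\alpha=\tfrac{p_L}{c}\,\RMTL(\hat\f)\lesssim p_L\sqrt{(L\,\DoF(\Fc)+\log(1/\delta))/(NT)}$.

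\textbf{Step 3 (transfer phase and assembly).} Given $\hat\bphi$, problem \eqref{formula:transfer} reduces to a $p_L$-dimensional least-squares fit on features $\hat\B_\alpha\x_i$ for each of the $|\Ac|$ pathways, followed by picking the empirically best pathway. Conditioning on the transfer-phase concentration events --- three $M$-dependent events, each of probability $\ge1-e^{-cM}$, governing the design $\tfrac1M\sum_i\x_i\x_i^\top$, the label noise, and their uniformity over pathways --- and running the localized least-squares argument underlying Theorem~\ref{thm:tfl} while keeping the fast $p_L/M$ head rate gives, uniformly over target pathways, $\RTFL(\hat f_{\hat\bphi})\le\Bias_\tgt(\hat\bphi)+\order{p_L/M+\sqrt{\log(|\Ac|/\delta)/M}}$ --- i.e.\ \eqref{TFL risk} with the bias term identified and the variance term sharpened. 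Taking $\E_{\alpha_\Tc}$, adding the Step~2 bound, and intersecting all events (failure probability $\le3e^{-cM}+\delta$ after reallocating $\delta$ among the union bounds) yields the claim. The main obstacle is precisely the cancellation in Step~2: a lightly populated source pathway may be learned poorly, making $\gamma_\alpha^{-1}$ in $\|c\|^2$ large, but Assumption~\ref{transfer dist} is calibrated to the source populations so that this $\gamma_\alpha^{-1}$ exactly cancels, leaving only $\sum_\alpha E_\alpha=T\,\RMTL(\hat\f)$ and one factor $p_L$ from inverting the per-pathway head covariance; securing the fast $p_L/M$ rate in Step~3 while paying only $\log|\Ac|$ for the pathway search is a secondary technical point handled by localized least squares rather than the generic Gaussian-complexity bound.
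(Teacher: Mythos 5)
Your proposal is correct and reaches the stated bound, but the key step -- controlling the supernet bias along the target pathway -- is argued by a genuinely different route than the paper's. The paper (Lemma \ref{MTL->TL}) works at the level of subspaces: it stacks the task vectors into $\bT,\bTh$, uses Assumption \ref{mtldiverse} to lower-bound $\la_{\min}(\bT^\top\bT/T)$ by $c/p_L$, and invokes Davis--Kahan to convert the per-pathway average excess risk $\eps$ into $\|\B^\top\B-\hB^\top\hB\|\lesssim p_L\eps$ (with a case split on whether $\eps\le c/2p_L$, hence the $\wedge 1$). You instead reconstruct the target head explicitly: writing $\h_\tgt=\sum_{t\in S_\alpha}c_t\bar\h_t$ with $\|c\|^2\le(\gamma_\alpha c)^{-1}$ via the same covariance lower bound, and feeding $h=\sum_t c_t\hat\h_t$ into the bias via Cauchy--Schwarz, $\Bias_\tgt(\hat\bphi)\le\|c\|^2E_\alpha$. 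This is more elementary (no spectral perturbation theory, no case analysis), and it makes the crucial cancellation transparent: the $\gamma_\alpha^{-1}$ penalty for sparsely populated pathways is exactly offset by the sampling probability $\gamma_\alpha p_L/T$ from Assumption \ref{transfer dist}, leaving $\frac{p_L}{c}\RMTL(\hat\f)$ after path-averaging -- the same quantity the paper obtains as $\sum_i\frac{C(\gamma_i)}{T}p_L\Lc'_i(\f)\le p_L\RMTL(\hat\f)$. What the paper's subspace route buys, and yours does not, is Frobenius-norm control of $\B^\top\B-\hB^\top\hB$, which is what yields the tighter average-case bound (shaving a factor of $p_L$) when $\h_\Tc$ is drawn uniformly from the sphere in Theorem \ref{e2e thm2}; that refinement is not part of the statement you were asked to prove, so nothing is lost here. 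Your Steps 1 and 3 (the fixed-pathway MTL bound without the $\log|\Ac|$ term, and the per-pathway least squares with the representation mismatch treated as dependent-but-orthogonal noise, union-bounded over $|\Ac|$ pathways) coincide with the paper's Corollary \ref{thm lin subgauss} and Lemma \ref{simple LS}, though you should state explicitly, as the paper does, that the mismatch term $\x^\top\Pi_{\hB_\alpha^\perp}(\bt_\Tc)$ satisfies $\E[\hB_\alpha\x\cdot\x^\top\Pi_{\hB_\alpha^\perp}(\bt_\Tc)]=0$ so that the dependent-noise least-squares lemma applies.
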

In words, this result controls the target risk in terms of the sample size of the target task and sample size during multitask representation learning, and provides a concrete instantiation of discussion following Theorem \ref{thm:tfl}. {In Theorem~\ref{e2e thm2} in appendix, we provide a tighter bound for expected transfer risk when linear head $\h_\Tc$ is uniformly drawn from the unit sphere.} The primary challenge in our work compared to related vanilla MTL results by \cite{tripuraneni2021provable,du2020few,kong2020meta} is the fact that, we deal with exponentially many pathway representations many of which may be low quality. Assumption \ref{transfer dist} allows us to convert task-averaged MTL risk into a transfer learning guarantee over a \emph{random pathway}. Finally, Theorem \ref{e2e thm} assumes that source pathways are known during MTL phase. {In Appendix~\ref{app:tfail2}, we show that} this assumption is indeed necessary: Otherwise, one can construct scenarios where \eqref{formula:multipath} problem admits an alternative solution $\tilde{\f}$ with optimal MTL risk but the resulting supernet $\tilde\bphi$ achieves poor target risk. Supplementary material discusses this challenge and identifies additional conditions that make ground-truth pathways uniquely identifiable when we solve \eqref{formula:multipath}.
%
%
\begin{figure*}[t]
\vspace{-11pt}
\centering
\begin{subfigure}[t]{.33\textwidth}
  \centering
  \includegraphics[width=\linewidth]{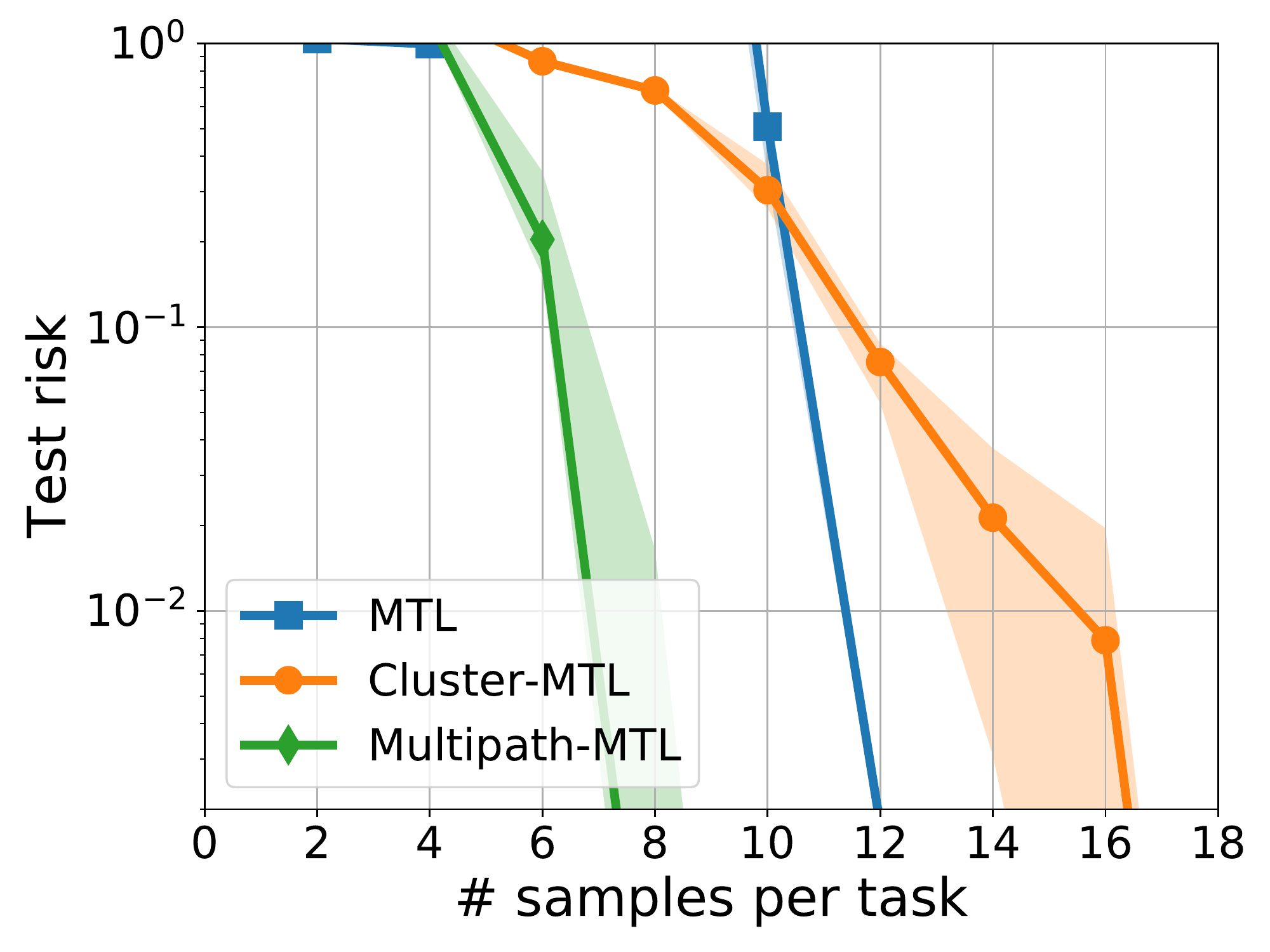}
  \caption{Varying $n$ with $\Tbar=10,K=40$}\label{fig:num_sample}
\end{subfigure}
\begin{subfigure}[t]{.33\textwidth}
  \centering    \includegraphics[width=\linewidth]{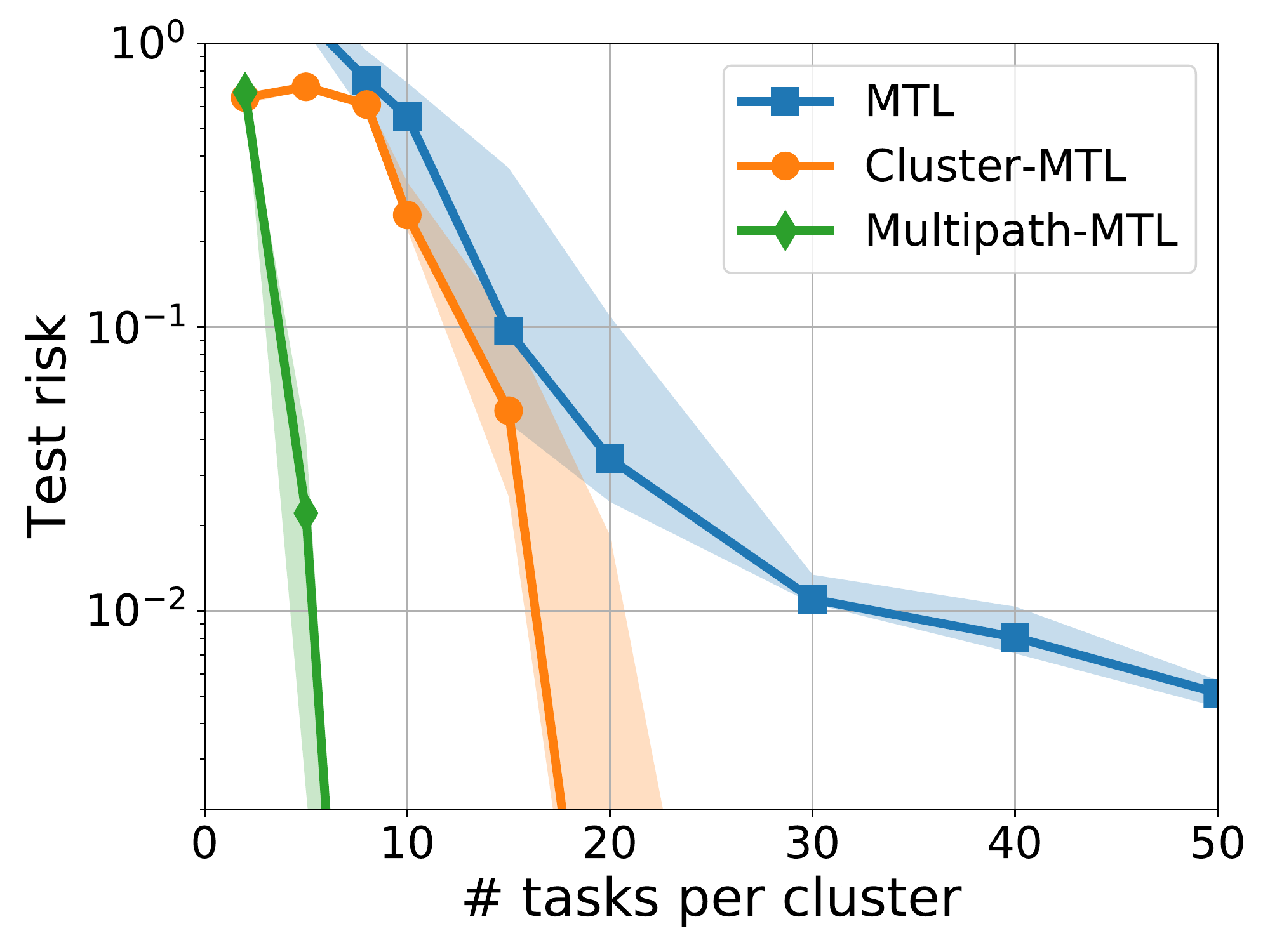}
  \caption{Varying $\Tbar$ with $N=10,K=40$}\label{fig:num_task_per_cluster}
\end{subfigure}
\begin{subfigure}[t]{.33\textwidth}
  \centering    \includegraphics[width=\linewidth]{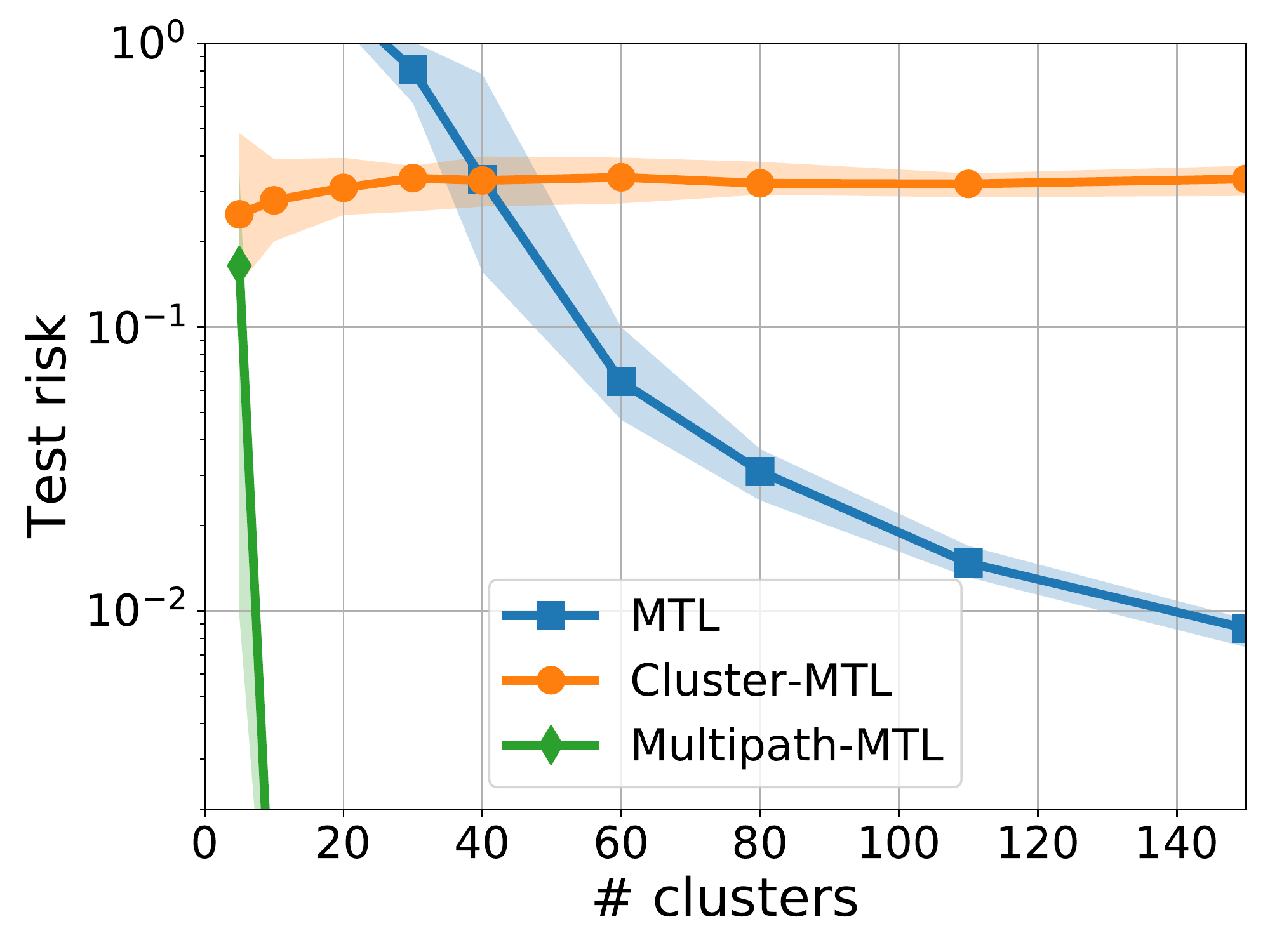}
  \caption{Varying $K$ with $N=10,\Tbar=10$}\label{fig:num_cluster}
\end{subfigure}
\vspace{-7pt}
\caption{We compare the sample complexity of MTL, Cluster-MTL and Multipath-MTL in a noiseless linear regression setting. For each figure, we fix two of the configurations and vary the other one. We find that Multipath-MTL is superior to both baselines of MTL and Cluster-MTL as predicted by our theory. The solid curves are the median risk and the shaded regions highlight the first and third quantile risks. Each marker is obtained by averaging 20 independent realizations.}\label{fig:MTLcomparison}
\vspace{-10pt}
\end{figure*}
\section{Insights from Hierarchical Representations}\label{hierarchy}
We now discuss the special two-layer supernet structure depicted in Figure \ref{fig:intro_tree}. {This setting groups tasks into $K:=K_2$ clusters and first layer module is shared across all tasks ($K_1=1$). Ignoring first layer, pathway $\alpha_t\in [K]$ becomes the clustering assignment for task $t$.} Applying Theorem \ref{thm:main}, we obtain a generalization bound of 
\[
\RMTL(\hat\f)\lesssim {\Gt_{NT}(\Psi_1)}+\sqrt{K}\Gt_{NT}(\Psi_2)+\Gt_N(\Hc)+{\sqrt{\frac{\log K}{N}}}.
\]
Here, $\psi_1\in\Psi_1$ is the shared first layer module, $\psi_2^k\in\Psi_2$ is the module assigned to cluster $k\in[K]$ that personalizes its representation, and we have $|\Ac|=K$. To provide further insights, let us focus on linear representations with the notation of Section \ref{sec linear}: $\psi_1(\x)=\B_1\x$, $\psi_2^k(\x')=\B^k_2\x'$, and $h_t(\x'')=\h_t^\top\x''$ with dimensions $\B_1\in\R^{R\times p}$, $\B^k_2\in\R^{r\times R}$, $\h_t\in\R^{r}$ and $r\leq R\leq p$. 
Our bound now takes the form
\[
\RMTL(\hat\f)\lesssim \sqrt{\frac{Rp+KrR+T(r+\log K)}{NT}},
\]
where $Rp$ and $KrR$ are the number of parameters in supernet layers $1$ and $2$, and $(r+\log K)/N$ is the cost of learning pathway and prediction head per task. Let us contrast this to the shallow MTL approaches with $1$-layer supernets.

\smallskip
\noindent$\bullet$ \textbf{Vanilla MTL:} Learn $\B_1\in\R^{R\times p}$ and learn larger prediction heads $\h^V_t\in\R^R$ (no clustering needed).

\noindent$\bullet$ \textbf{Cluster MTL:} Learn larger cluster modules $\B_2^{C,k}\in\R^{r\times p}$, and learn pathway $\alpha_t$ and head $\h_t\in\R^r$ (no $\B_1$ needed).

\smallskip
\noindent\textbf{Experimental Insights.} Before providing a theoretical comparison, let us discuss the experimental results where we compare these three approaches in a realizable dataset generated according to Figure \ref{fig:intro_tree}. Specifically, we generate $\bar{\B}_1$ and $\{\bar{\B}^{k}_2\}_{k=1}^K$ with orthonormal rows uniformly at random independently. We also generate $\bar{\h}_t$ uniformly at random over the unit sphere independently. Let $\bar{\alpha}_t$ be the cluster assignment of task $t$ where each cluster has same size/number of tasks with $\bar{T}=T/K$ tasks. The distribution $\Dc_t$ associated with task $t$ is generated as
\[
y=\x^\top \bt^\st_t\quad\text{where}\quad \bt^\st_t=(\bar{\h}^\top_t\bar{\B}^{\alpha_t}_2\bar{\B}_1)^\top,~\x\sim\Nn(\boldsymbol{0},\Iden_p),
\]
without label noise. {We evaluate and present results from two scenarios where cluster assignment of each task $\bar\alpha_t$ is known (Figure~\ref{fig:MTLcomparison}) or not (Figure~\ref{fig:clustering_exp}). MTL, Cluster-MTL and Multipath-MTL labels corresponds to our single representation, clustering and hierarchical MTL strategies respectively, in the figures.}

In Figure~\ref{fig:MTLcomparison}, we solve MTL problems with the knowledge of clustering $\bar\alpha_t$. We set ambient dimension $p=32$, shared embedding $R=8$, and cluster embeddings $r=2$. We consider a base configuration of $K=40$ clusters, $\Tbar=T/K=10$ tasks per cluster and $N=10$ samples per task (see supplementary material for further details). Figure \ref{fig:MTLcomparison} compares the performance of three approaches for the task-averaged MTL test risk and demonstrates consistent benefits of \MP MTL for varying $K,\bar{T},N$.

{
We also consider the setting where $\bar\alpha_t$, $t\in[T]$ are unknown during training. Set $p=128$, $R=32$ and $r=2$, and fix number of clusters $K=50$ and cluster size $\bar T=10$. In this experiment, instead of using the ground truth clustering $\bar\alpha_t$, we also learn the clustering assignment $\hat\alpha_t$ for each task. As we discussed and visualized in supplementary material, it is not easy to cluster random tasks even with the hindsight knowledge of task vectors $\bt_t^\st$. To overcome this issue, we add correlation between tasks in the same cluster. Specifically, generate the prediction head by $\bar\h_t'=\gamma\bar\h^k+(1-\gamma)\bar\h_t$ where $\bar\h^k,\bar\h_t$ are random unit vectors corresponding to the cluster $k$ and task $t$ (assuming $\bar\alpha_t=k$). To cluster tasks, we first run vanilla MTL and learn the shared representation $\hat\B_1$ and heads $(\hat\h_t^V)_{t=1}^T$. Next build task vector estimates by $\hat\bt_t:=\hat\B_i^\top\hat\h_t^V$, and get $T\times T$ task similarity matrix using Euclidean distance metric. Applying standard $K$-means clustering to it provides a clustering assignment $\hat\alpha_t$. In the experiment, we set $\gamma=0.6$ to make sure hindsight knowledge of $\bt_t^\st$ is sufficient to correctly cluster all tasks. Results are presented in Figure~\ref{fig:clustering_exp}, where solid curves are solving MTL with ground truth $\bar\alpha_t$ while dashed curves are using $\hat\alpha_t$. We observe that when given enough samples ($N\geq60$), all tasks are grouped correctly even if the MTL risk is not zero. More importantly, \MP MTL does outperform both vanilla MTL and cluster MTL even when the clustering is not fully correct.
}

\begin{figure}[t]
\centering
  \centering
 \includegraphics[width=.33\textwidth]{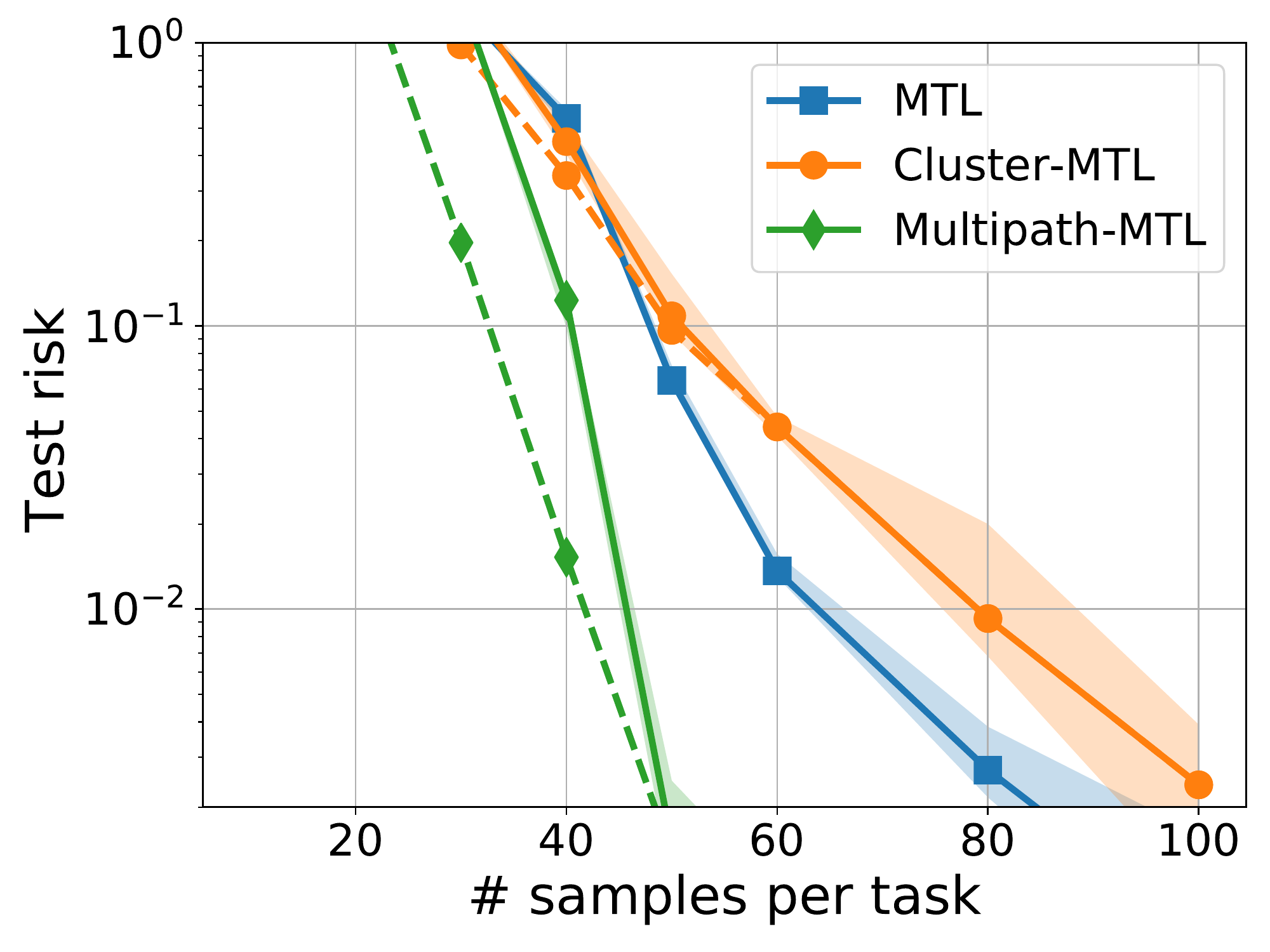}
\vspace{-7pt}
\caption{{We group the $T=500$ tasks into $K=50$ clusters and compare the sample complexity of different MTL strategies. Given different sample size, we cluster tasks based on the trained MTL model and solve Cluster-/\MP-MTL based on the assigned clusters. Solid curves are results using ground truth cluster knowledge $\bar\alpha_t$ and dashed are using the learned clustering $\hat\alpha_t$.} {Experimental setting follows the same setting as in Figure~\ref{fig:MTLcomparison}.}}\label{fig:clustering_exp}
\vspace{-10pt}
\end{figure}

\smallskip
\noindent\textbf{Understanding the benefits of \MP MTL.} Naturally, superior numerical performance of \MP MTL in Figure \ref{fig:MTLcomparison}\&\ref{fig:clustering_exp} partly stems from the hierarchical dataset model we study. This model will also shed light on shortcomings of 1-layer supernets drawing from our theoretical predictions. First, observe that all three baselines are exactly specified: We use the smallest model sizes that capture the ground-truth model so that they can achieve zero test risk as $N,K,T$ grows. For instance, Vanilla MTL achieves zero risk by setting $\B_1=\bar{\B}_1,\h^V_t=(\bar{\B}^{\alpha_t}_2)^\top \bar{\h}_t$ and cluster MTL achieves zero risk by setting $\B_2^{C,k}=\bar{\B}_2^k\bar{\B}_1,\h_t=\bar{\h}_t$. Thus, the benefit of \MP MTL arises from stronger weight sharing across tasks that reduces test risk. In light of Sec.~\ref{sec linear}, the generalization risks of these approaches can be bounded as {$\sqrt{{\DoF(\Fc)}/{NT}}$} where Number-of-Parameters compare as $\textbf{Vanilla:}~Rp+TR$, $\textbf{Cluster:}~Krp+Tr$, $\textbf{\MP:}~Rp+KrR+Tr$. From this, it can be seen that \MP is never worse than the others as long as $Kr\geq R$ and $\bar{T}=T/K\geq r$. These conditions hold under the assumption that multipath model is of minimal size: Otherwise, there would be a strictly smaller zero-risk model by setting $R\gets Kr$ and $r\gets \bar{T}$.

Conversely, \MP shines in the regime $Kr\gg R$ or $\bar{T}\gg r$. As $\frac{Kr}{R},\frac{p}{R}\rightarrow\infty$, \MP strictly outperforms Cluster MTL. This arises from a \emph{cluster diversity} phenomenon that connects to the \emph{task diversity} notions of prior art. In essence, since $r$-dimensional clusters lie on a shared $R$ dimensional space, as we add more clusters beyond $Kr\geq R$, they will collaboratively estimate the shared subspace which in turn helps estimating their local subspaces by projecting them onto the shared one. As $\frac{\bar{T}}{r},\frac{R}{r}\rightarrow\infty$, \MP strictly outperforms Vanilla MTL. $\frac{\bar{T}}{r}$ is needed to ensure that there is enough task diversity within each cluster to estimate its local subspace. Finally, $\frac{R}{r}$ ratio is the few-shot learning benefit of clustering over Vanilla MTL. The prediction heads of vanilla MTL is larger which necessitates a larger $N$, at the minimum $N\geq R$. Whereas \MP works with as little as $N\geq r$. The same argument also implies that clustering/hierarchy would also enable better transfer learning.




\section{Related Work}\label{app related}

Our work is related to a large body of literature spanning efficient architectures and statistical guarantees for MTL, representation learning, task similarity, and subspace clustering. 

\noindent $\bullet$ \textbf{Multitask Representation Learning.} While MTL problems admit multiple approaches, an important idea is building shared representations to embed tasks in a low-dimensional space \cite{zhang2021survey,thrun2012learning,wang2016distributed,baxter2000model}. After identifying this low-dimensional representation, new tasks can be learned in a sample efficient fashion inline with the benefits of deep representations in modern ML applications. While most earlier works focus on linear models, \cite{maurer2016benefit} provides guarantees for general hypothesis classes through empirical process theory improving over \cite{baxter2000model}. More recently, there is a growing line of work on multitask representations that spans tighter sample complexity analysis \cite{garg2020functional,hanneke2020no, du2020few,kong2020meta,xu2021representation,lu2021power}, convergence guarantees \cite{balcan2019provable,khodak2019adaptive,collins2022maml,ji2020convergence,collins2021exploiting,wu2020understanding}, lifelong learning \cite{xu2022statistical,li2022provable}, and decision making problems \cite{yang2020impact,qin2022non,cheng2022provable,sodhani2021multi}. Closest to our work is \cite{tripuraneni2021provable} which provides tighter sample complexity guarantees compared to \cite{maurer2016benefit}. Our problem formulation generalizes prior work (that is mostly limited to single shared representation) by allowing deep compositional representations computed along supernet pathways. To overcome the associated technical challenges, we develop multilayer chain rules for Gaussian Complexity, introduce new notions to assess the quality of supernet representations, and develop new theory for linear representations.

\noindent $\bullet$ \textbf{Quantifying Task Similarity and Clustering.}  We note that task similarity and clustering has been studied by \cite{shui2019principled,nguyen2021similarity,zhou2020task,fifty2021efficiently,kumar2012learning,kang2011learning,aribandi2021ext5,zamir2018taskonomy} however these works do not come with comparable statistical guarantees. Leveraging relations between tasks are explored even more broadly \cite{zhuang2020comprehensive,achille2021information}. Our experiments on linear \MP MTL connects well with the broader subspace clustering literature \cite{vidal2011subspace,parsons2004subspace,elhamifar2013sparse}. Specifically, each learning task $\bt_t$ can be viewed as a point on a high-dimensional subspace. \MP MTL aims to cluster these points into smaller subspaces that correspond to task-specific representations. Our challenge is that we only get to see the points through the associated datasets.





\noindent $\bullet$ \textbf{ML Architectures and Systems.} While traditional ML models tend to be good at a handful of tasks, next-generation of neural architectures are expected to excel at a diverse range of tasks while allowing for multiple input modalities. To this aim, task-specific representations can help address both computational and data efficiency challenges. Recent works \cite{ramesh2021boosting,shu2021zoo,ramesh2021model,fifty2021efficiently,yao2019hierarchically,vuorio2019multimodal,mansour2020three,tan2022towards,ghosh2020efficient,collins2021exploiting} propose hierarchical/clustering approaches to group tasks in terms of their similarities, \cite{qin2020multitask,ye2022eliciting,gupta2022sparsely,asai2022attentional,he2022smile} focus on training mixture-of-experts (MoE) models, and similar to the pathways \cite{strezoski2019many,rosenbaum2017routing,chen2021boosting,ma2019snr} study on task routing.  
In the context of lifelong learning, PathNet, PackNet \cite{fernando2017pathnet,mallya2018packnet} and many other existing methods \cite{parisi2019continual,mallya2018piggyback,hung2019compacting,wortsman2020supermasks,cheung2019superposition} propose to embed many tasks into the same network to facilitate sample/compute efficiency. PathNet as well as SNR \cite{ma2019snr} propose methods to identify pathways/routes for individual tasks and efficiently compute them over the conditional subnetwork. With the advent of large language models, conditional computation paradigm is witnessing a growing interest with architectural innovations such as muNet, GShard, Pathways, and PaLM \cite{gesmundo2022evolutionary,gesmundo2022munet,barham2022pathways,pathways,lepikhin2020gshard,chowdhery2022palm,driess2023palme} and provide a strong motivation for theoretically-grounded \MP MTL methods.
\vspace{-5pt}\section{Discussion}

This work explored novel multitask learning problems which allow for task-specific representations that are computed along pathways of a large supernet. We established generalization bounds under a general setting which proved insightful when specialized to linear or hierarchical representations. We believe there are multiple exciting directions to explore. First, it is desirable to develop a stronger control over the generalization risk of specific groups of tasks. Our Lemma \ref{no harm} is a step in this direction. Second, what are risk upper/lower bounds for \MP MTL as we vary the depth and width of the supernet graph? Discussion in Section \ref{hierarchy} falls under this question where we demonstrate the sample complexity benefits of \MP MTL over traditional MTL approaches.
Finally, following experiments in Section~\ref{hierarchy}, can we establish similar provable guarantees for computationally-efficient algorithms (e.g.~method of moments, gradient descent)? 
\section*{Acknowledgements}
Authors would like to thank Zhe Zhao for helpful discussions and pointing out related works. This work was supported in part by the NSF grants CCF-2046816 and CCF-2212426, Google
Research Scholar award, and Army Research Office grant W911NF2110312.

\bibliography{refs,biblio}
\onecolumn
\appendix

\begin{figure}[t]
\centering
\begin{subfigure}[t]{.3\textwidth}
  \begin{tikzpicture}
  \centering
  \node at (0,0) {\includegraphics[width=\linewidth]{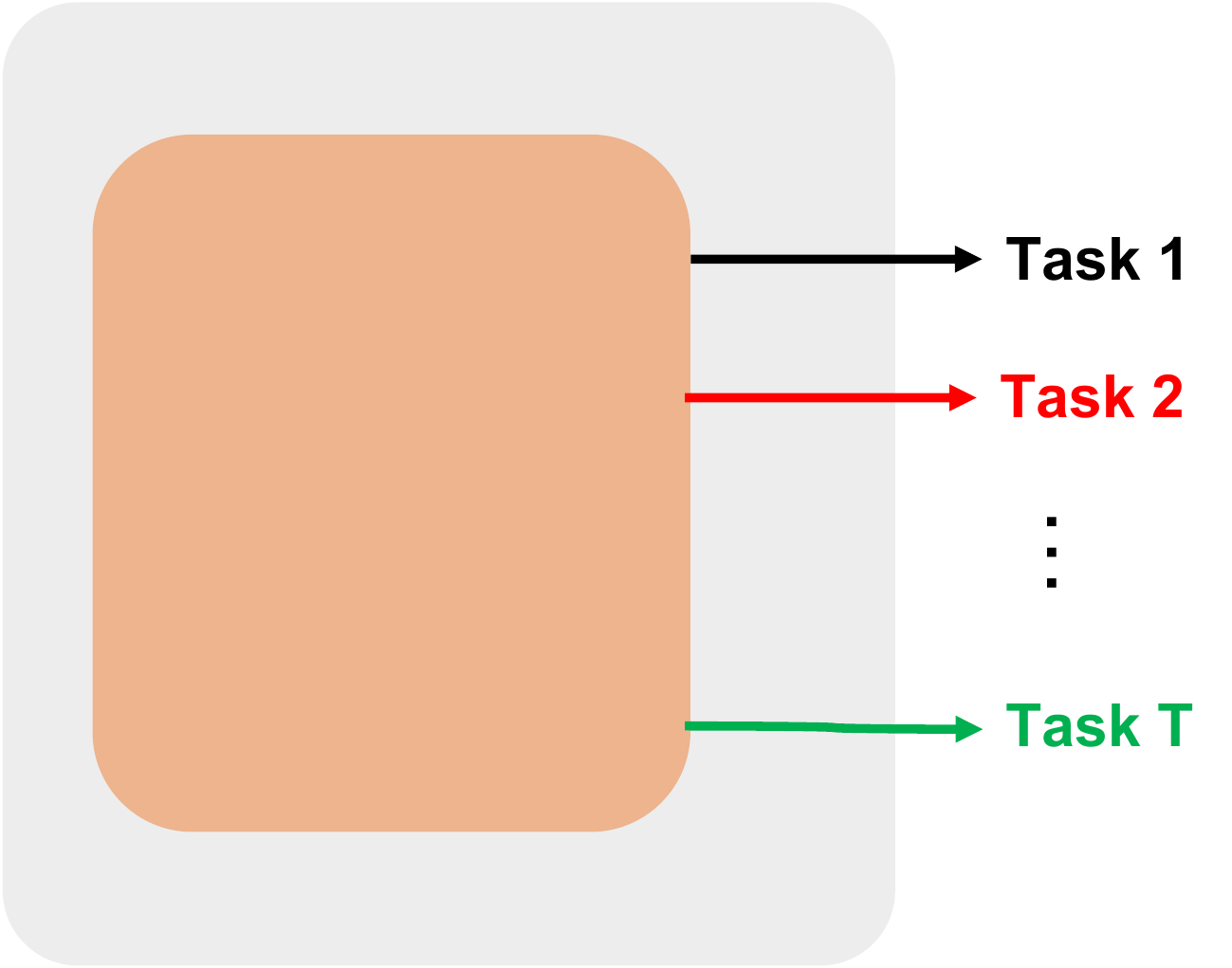}};
  \node at (-0.95, .0) [scale=2] {$\phi$};
  \node at (1., 1.2) [scale=1] {$h_1$};
  \node at (1., 0.6) [scale=1] {$h_2$};
  \node at (1., -0.8) [scale=1] {$h_T$};
  \end{tikzpicture}
  \caption{Vanilla MTL}\label{fig:vanilla}
\end{subfigure}
\hspace{20pt}
\begin{subfigure}[t]{.3\textwidth}
  \begin{tikzpicture}
  \centering
  \node at (0,0) {\includegraphics[width=\linewidth]{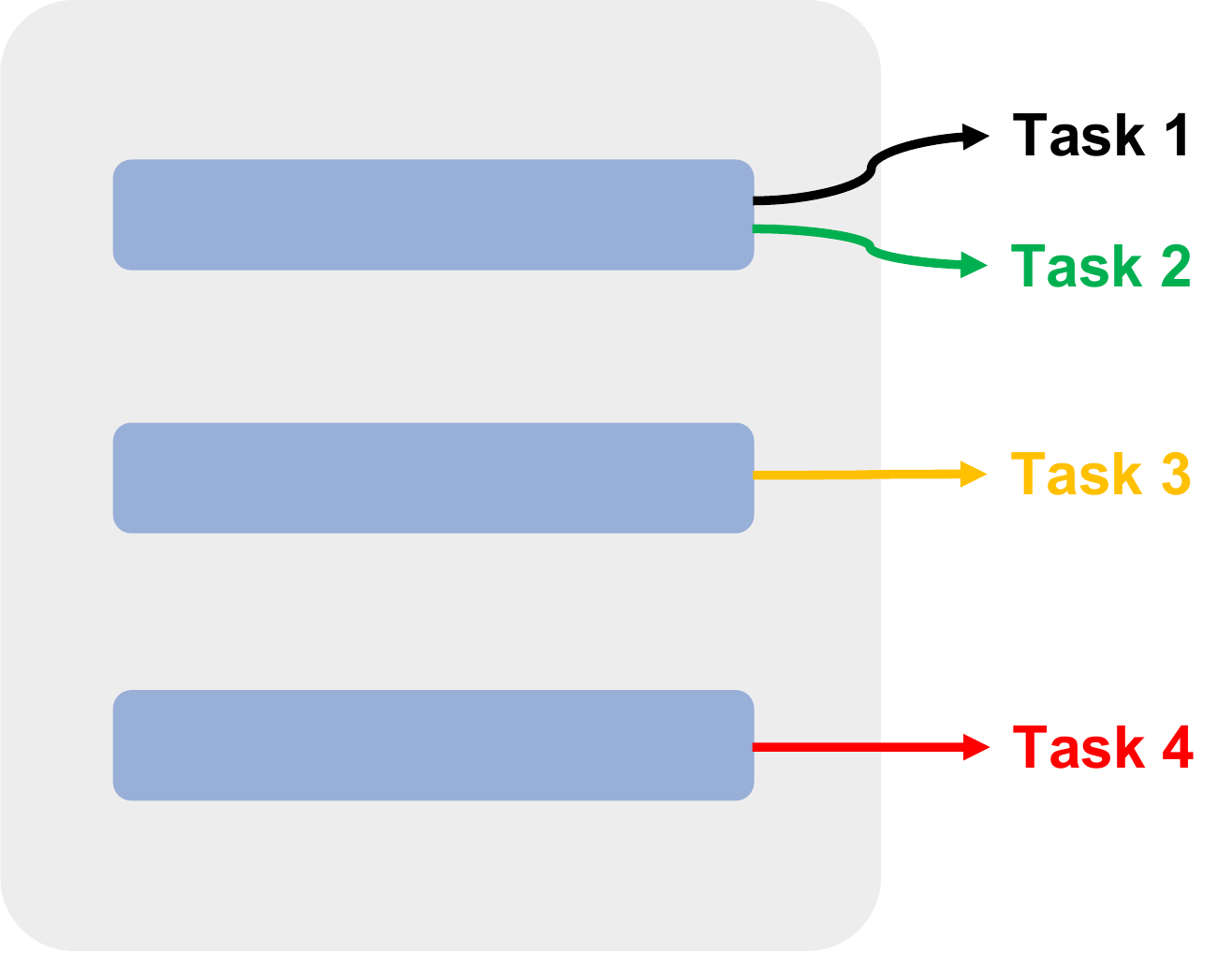}};
  \end{tikzpicture}
  \caption{Cluster MTL}\label{fig:cluster}
\end{subfigure}
\hspace{20pt}
\begin{subfigure}[t]{.3\textwidth}
  \begin{tikzpicture}
  \centering    
  \node at (0,0) {\includegraphics[width=\linewidth]{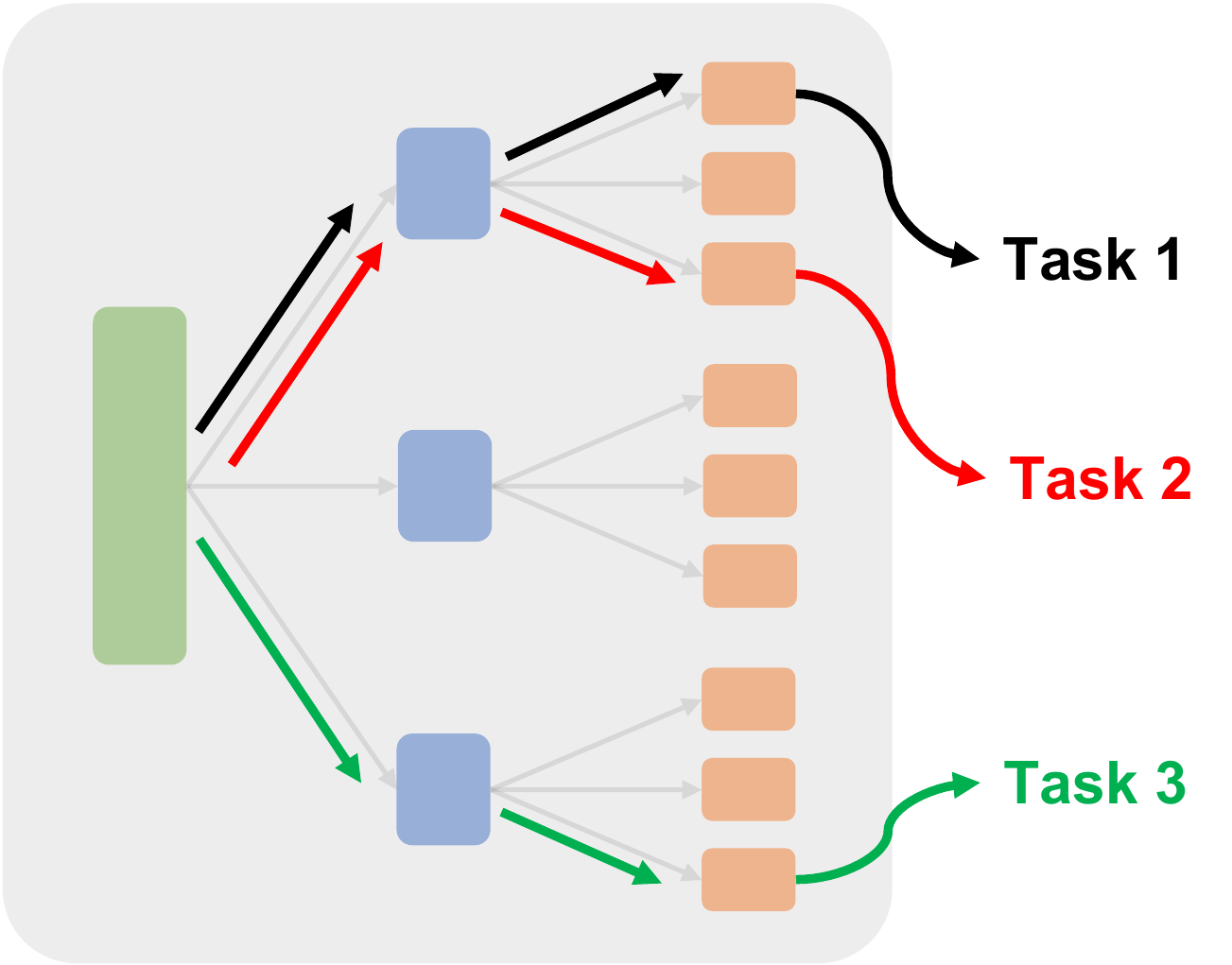}};
  \end{tikzpicture}
  \caption{Hierarchical MTL}\label{fig:hierarchical}
\end{subfigure}
\caption{Three specific MTL settings: Vanilla MTL, Cluster MTL and Hierarchical MTL. In vanilla MTL, all the tasks share the same representation $\phi\in\Phi$, and each task learns its specific head $h_t\in\Hc$. It corresponds to the setting that $|\Ac|=1$, $L=1$ and $K_1=1$. In Cluster MTL, tasks are clustered into groups and  different groups are assigned with different and uncorrelated representations. If we assume there are $K$ clusters, then $|\Ac|=K$, $L=1$ and $K_1=K$.  While, Fig.~\ref{fig:intro_tree} shows the Hierarchical MTL with only two layers, here we present the more general Hierarchical MTL setting. Assume the degree of a hierarchical supernet is $K$ (In Fig.~\ref{fig:hierarchical}, $K=3$), then $|\Ac|=K^{L-1}$ where $L$ is the number of layers in supernet, and $K_\ell=K^{\ell-1}$. }\label{fig:special cases}
\end{figure}

\section*{Organization of the Supplementary Material}
The supplementary material (SM) is organized as follows.
\begin{enumerate}
    \item In Appendix~\ref{app:notion} we introduce additional notions used throughout the supplementary material.
    \item {Appendix~\ref{app:proof main} provides our main proofs in Section~\ref{sec:main} and introduces two direct corollaries of Theorem~\ref{thm:main}. We also provide a data-dependent bound in terms of empirical Gaussian complexity (rather than worst-case). In Appendix~\ref{app:e2etransfer} we also provide end-to-end transfer learning bound by introducing a proper notion of task diversity.}

    \item Appendix~\ref{subexp loss} provides additional guarantees (Thm \ref{thm subgauss}) for parametric classes via non-data-dependent covering argument. The advantages of Theorem \ref{thm subgauss} are: (1) Sample complexity has linear dependence on supernet depth $L$ (rather than exponential), (2) It applies to unbounded loss functions, (3) It is also a supporting result for the proof of Theorem~\ref{cor linear}\&\ref{e2e thm}.

    \item Appendix~\ref{app:proof linear} provides our proofs in Section~\ref{sec linear}. {We also introduce Corollary~\ref{cor linear app}, which is a direct application of Theorem~\ref{thm:main}. 
     Lemma~\ref{e2e fail} proves the necessity of our Assumption~\ref{mtldiverse}.}
        \item In Appendix \ref{app:tfail2}, we include a short discussion on the challenges of transfer learning: Specifically, we provide a lemma/example that shows that, under the assumptions of Theorem \ref{e2e thm}, if ground-truth MTL pathways are not known, there are MTL settings for which transfer learning can provably fail. This construction highlights the (combinatorial) challenge of finding the right task clusterings during MTL phase that are actually useful for transfer phase.

    \item Appendix~\ref{app:numeric} provides further details, algorithms, and results on numerical experiments in Section \ref{hierarchy}. 
\end{enumerate}

\section{Useful Definitions}\label{app:notion}
We will start with some useful notions. Let $\|\cdot\|$ denote the $\ell_2$-norm of a vector, and $[L]$ denote the set $\{1,2,\dots,L\}$. We denote the $K$ times Cartesian product of a hypothesis set $\Qc$ with itself by $\Qc^K$. Now assume we have a hypothesis set $\Qc:\Xc\rightarrow\R^{r}$ and an input dataset of size $n$, defined by $\X=\{\x_1,\dots,\x_n\}$, where $\x_i\in\Xc$. Let $\{\sigma_{ij}\}_{i\in[n],j\in[r]}$ denote Rademacher variables uniformly and independently taking values in $\{-1,~1\}$ and $\{g_{ij}\}_{i\in[n],j\in[r]}$ denote i.i.d. standard random Gaussian variables. Then we can define the empirical and population Rademacher/Gaussian complexities of a hypothesis set $\Qc$ over inputs $\X$ and data space $\Xc$ with sample size $n$ as
\begin{align*}
   & \text{Empirical/Population Rademacher complexities:}~~~\widehat{\Rc}_\X(\Qc)=\E_{\sigma_{ij}}\left[\sup_{q\in\Qc}\frac{1}{n}\sum_{i=1}^n\sum_{j=1}^r\sigma_{ij}q_j(\x_i)\right],~~~~~\Rc_n(\Qc)=\E_{\X}\left[\widehat\Rc_\X(\Qc)\right],\\
    &\text{Empirical/Population Gaussian complexities:}~~~\widehat{\Gc}_\X(\Qc)=\E_{g_{ij}}\left[\sup_{q\in\Qc}\frac{1}{n}\sum_{i=1}^n\sum_{j=1}^rg_{ij}q_j(\x_i)\right],~~~~~\Gc_n(\Qc)=\E_{\X}\left[\widehat\Gc_\X(\Qc)\right],
\end{align*}
where we have $q\in\Qc$ and $q(\x)=[q_1(\x),\dots,q_r(\x)]^\top$. Note that in vector notation one can also write $\widehat{\Rc}_\X(\Qc)=\E_{\bsgm_{i}}\left[\sup_{q\in\Qc}\frac{1}{n}\sum_{i=1}^n\bsgm_{i}^\top q(\x_i)\right]$ and $\widehat{\Gc}_\X(\Qc)=\E_{\g}\left[\sup_{q\in\Qc}\frac{1}{n}\sum_{i=1}^n\g_i^\top q(\x_i)\right]$, where $\bsgm_i$ and $\g_i$ are $r$-dimensional with independent Rademacher/Gaussian variables in each entry. Also recall that worst-case versions $\widetilde{\cal{R}}_n,\Gt_n$ are obtained by taking supremum over the input space.
\section{Proofs in Section~\ref{sec:main}}\label{app:proof main}
We first introduce some lemmas used throughout this section, then provide the proofs of our mean results.
\subsection{Supporting Lemmas}
The following is a seminal contraction lemma due to Talagrand \cite{talagrand2006generic}.
\begin{lemma}[Talagrand's Contraction inequality] \label{tal lem}Let $\beps=(\eps_i)_{i=1}^n$ be i.i.d.~random variables with symmetric sign (e.g.~Rademacher, standard normal). Let $(\phi_i)_{i=1}^n$ be $L$-Lipschitz functions and $\Fc$ be a hypothesis set. We have that
\[
\E_{\beps}\left[\sup_{f\in \Fc} \sum_{i=1}^n \eps_i \phi_i(f(\x_i))\right]\leq L\E_{\beps}\left[\sup_{f\in \Fc} \sum_{i=1}^n \eps_i f(\x_i)\right].
\]
\end{lemma}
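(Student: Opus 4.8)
The plan is to reproduce the classical Ledoux--Talagrand comparison argument, which normalizes and then ``peels off'' the functions $\phi_i$ one coordinate at a time. First, since $\phi_i/L$ is $1$-Lipschitz and each side of the claimed inequality is positively homogeneous of degree one in the map $f\mapsto(f(\x_1),\dots,f(\x_n))$, it suffices to treat $L=1$; the general bound follows by applying that case to $\phi_i/L$ and multiplying through by $L$. I would then reduce matters to the following single-coordinate statement: for any real-valued function $u$ on the index set $\Fc$ (not required to be an evaluation map), any $1$-Lipschitz $\varphi:\R\to\R$, any point $\x$, and a single symmetric sign $\eps$,
\[
\E_{\eps}\!\left[\sup_{f\in\Fc}\big(u(f)+\eps\,\varphi(f(\x))\big)\right]\le \E_{\eps}\!\left[\sup_{f\in\Fc}\big(u(f)+\eps\,f(\x)\big)\right].
\]

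To prove this single-coordinate inequality for Rademacher $\eps$, I would write $\E_\eps$ as a half-sum over the $+1$ and $-1$ branches, fix $\delta>0$, choose near-maximizers $f_1$ of $u(f)+\varphi(f(\x))$ and $f_2$ of $u(f)-\varphi(f(\x))$, and use $\varphi(f_1(\x))-\varphi(f_2(\x))\le|f_1(\x)-f_2(\x)|=\theta\,(f_1(\x)-f_2(\x))$ with $\theta=\sgn{f_1(\x)-f_2(\x)}$. Recombining,
\[
\big(u(f_1)+\varphi(f_1(\x))\big)+\big(u(f_2)-\varphi(f_2(\x))\big)\le\big(u(f_1)+\theta f_1(\x)\big)+\big(u(f_2)-\theta f_2(\x)\big)\le\sup_f\big(u(f)+\theta f(\x)\big)+\sup_f\big(u(f)-\theta f(\x)\big),
\]
and sending $\delta\downarrow0$ gives the claim once one notes that $\tfrac12\big[\sup_f(u(f)+\theta f(\x))+\sup_f(u(f)-\theta f(\x))\big]=\E_\eps\sup_f(u(f)+\eps f(\x))$ for \emph{either} value of $\theta\in\{\pm1\}$. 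For a general symmetric $\eps$ (e.g.\ standard Gaussian), I would condition on $|\eps|=r$, where the conditional law of $\eps$ is uniform on $\{\pm r\}$, run the same pairing with the $r$-Lipschitz map $r\varphi$, and then average over $r$; since $-\eps$ has the same law as $\eps$, the same cancellation goes through.

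Finally, I would iterate the single-coordinate lemma. Conditioning on $\eps_2,\dots,\eps_n$ and taking $u(f)=\sum_{i=2}^n\eps_i\phi_i(f(\x_i))$, $\varphi=\phi_1$, $\x=\x_1$, the lemma replaces the first summand; averaging back over $\eps_2,\dots,\eps_n$ by Fubini leaves $\E_{\beps}\sup_f\big(\eps_1 f(\x_1)+\sum_{i\ge2}\eps_i\phi_i(f(\x_i))\big)$. Repeating for $i=2,\dots,n$, each time folding the already-linearized terms into the arbitrary function $u$, produces the right-hand side after $n$ steps. The step I expect to be the main obstacle is making the single-coordinate argument fully rigorous: handling suprema that need not be attained via the $\delta$-approximation, carefully tracking the data-dependent sign $\theta$, and verifying that ``$u$'' in the lemma really can be an arbitrary function on $\Fc$ --- this last point is exactly what licenses absorbing the partially-contracted, still-random linear terms into $u$ at each stage of the iteration.
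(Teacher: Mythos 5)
Your proof is correct, but note that the paper does not actually prove this lemma: it is stated as a black-box citation to Talagrand, with a follow-up remark that the vector-valued Gaussian variant (which is what the paper actually needs downstream, since the modules $\psi_\ell^k$ are $\R^{p_\ell}$-valued) instead follows from the Sudakov--Fernique inequality. What you have written is a faithful and correct reconstruction of the classical Ledoux--Talagrand peeling argument: the reduction to $L=1$ by homogeneity is sound, the single-coordinate step with $\delta$-near-maximizers and the data-dependent sign $\theta$ is handled correctly (the key observation that $\tfrac12\bigl[\sup_f(u+\theta f)+\sup_f(u-\theta f)\bigr]$ equals $\E_\eps\sup_f(u+\eps f)$ for either value of $\theta$ is exactly what makes the dependence of $\theta$ on $f_1,f_2$ harmless), the extension to general symmetric $\eps$ by conditioning on $|\eps|$ is valid, and the iteration via conditioning and Fubini is licensed precisely because $u$ is allowed to be an arbitrary function of $f$, as you point out. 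The one thing your argument does not cover is the vector-valued version invoked later in the appendix; your scalar peeling does not directly extend coordinate-wise without care (the sup couples the output coordinates), which is presumably why the authors switch to Sudakov--Fernique for that case. As a proof of the lemma as literally stated, yours is complete.
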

As a corollary of this, we can deduce that adjusted empirical Gaussian complexity $n\Gh_{\X}(\Fc)$ is non-decreasing in sample size $n$.
\begin{corollary} \label{cor tal}Let $\Xc$ be a bounded input space and $\Fc:\Xc\to\R$ be a hypothesis set. Let $\X_m$ be a dataset of size $m$ and $\X_n=(\x_i)_{i=1}^n$ be a dataset of size $n$ that contains $\X_m$. We have that
\[
m\Gh_{\X_m}(\Fc)\leq n\Gh_{\X_n}(\Fc).
\]
\end{corollary}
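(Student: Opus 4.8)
The plan is to unfold the definition of empirical Gaussian complexity and reduce the claim to a monotonicity statement for a single Gaussian supremum. Writing $(g_i)_{i=1}^n$ for i.i.d.\ standard normals as in Definition~\ref{def:worst-case}, and using that these variables are exchangeable so that relabeling the data points leaves $\Gh$ unchanged, I may assume $\X_n=(\x_i)_{i=1}^n$ with prefix $\X_m=(\x_i)_{i=1}^m$. Then $m\Gh_{\X_m}(\Fc)=\E[\sup_{f\in\Fc}\sum_{i=1}^m g_if(\x_i)]$ and $n\Gh_{\X_n}(\Fc)=\E[\sup_{f\in\Fc}\sum_{i=1}^n g_if(\x_i)]$, so the corollary is exactly
\[
\E\Big[\sup_{f\in\Fc}\sum_{i=1}^m g_if(\x_i)\Big]\le\E\Big[\sup_{f\in\Fc}\sum_{i=1}^n g_if(\x_i)\Big].
\]

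To obtain this as the advertised corollary of Lemma~\ref{tal lem}, I would apply Talagrand's contraction inequality over the dataset $\X_n$ with Lipschitz constant $L=1$ and the maps $\phi_i=\mathrm{id}$ for $i\le m$ and $\phi_i\equiv 0$ for $m<i\le n$ (each is $1$-Lipschitz, the zero map being $0$-Lipschitz). The left-hand side of Lemma~\ref{tal lem} then collapses to $\E[\sup_f\sum_{i=1}^m g_if(\x_i)]$ and the right-hand side equals $\E[\sup_f\sum_{i=1}^n g_if(\x_i)]$, which is precisely the displayed inequality. A fully self-contained alternative, which also sidesteps any concern about the exact hypotheses under which Lemma~\ref{tal lem} is stated, is to condition on $g_1,\dots,g_m$, integrate out $g_{m+1},\dots,g_n$, and invoke Jensen's inequality: $\E_{g_{m+1:n}}[\sup_f\sum_{i=1}^n g_if(\x_i)]\ge\sup_f\big(\sum_{i=1}^m g_if(\x_i)+\sum_{i>m}\E[g_i]f(\x_i)\big)=\sup_f\sum_{i=1}^m g_if(\x_i)$ since $\E[g_i]=0$; taking expectation over $g_1,\dots,g_m$ then gives the claim. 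Boundedness of $\Xc$ (and of $\Fc$) is used only to ensure all suprema and expectations above are finite.

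There is essentially no serious obstacle here; the only places needing a line of care are the harmless WLOG reindexing and, on the first route, verifying that the contraction lemma applies with an identically-zero $\phi_i$ — a point the Jensen argument renders moot.
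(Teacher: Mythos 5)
Your primary argument (applying Lemma~\ref{tal lem} with $\phi_i=\mathrm{id}$ for $i\le m$ and $\phi_i\equiv 0$ for $i>m$) is exactly the paper's proof, and it is correct. The self-contained Jensen alternative you sketch is also valid and is a nice way to avoid any fuss over the contraction lemma's hypotheses, but it is not needed.
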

We note that, when $\Fc:\Xc\rightarrow \R^p$ is vector valued and we apply $p\times n$ $L$-Lipschitz functions $\phi_{ij}$, the identical results (Lemmas \ref{tal lem} and Corollary \ref{cor tal}) follow from Sudakov-Fernique inequality under Gaussian $\beps\in\R^{n\times p}$ (e.g.~Exercise 7.2.13 of \cite{vershynin2018high}).

This also implies usual (distributional) and worst-case Gaussian complexities are also non-decreasing.

\begin{proof} Let $(\phi_i)_{i=1}^n$ be functions that are identity for $i\leq m$ and zero for $i>m$. Observe that
\[
m\Gh_{\X_m}=\E_{\beps}\left[\sup_{f\in\Fc}\sum_{i=1}^m\eps_if(\x_i)\right]=\E_{\beps}\left[\sup_{f\in \Fc} \sum_{i=1}^n \eps_i \phi_i(f(\x_i))\right]\leq n\Gh_{\X_n}.
\]
\end{proof}

The following lemma shows that adjusted worst-case Gaussian complexity $\sqrt{n}\Gt_{\X}(\Fc)$ is essentially non-decreasing in sample size $n$.
\begin{lemma} [Worst-case Gaussian Complexity over Input Space and Sample Size]\label{lemma:worst case} For any bounded input space $\Xc$ and hypothesis set $\Fc$, we have that
\[
\sup_{1\leq m\leq n} \sqrt{m}\Gt_m(\Fc)\leq \sqrt{2n}\Gt_n(\Fc).
\]
\end{lemma}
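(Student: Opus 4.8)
The plan is to reduce the claim to the following statement: for every $m$ with $1\le m\le n$ and every input configuration $\X_m\in\Xc^m$, there is a configuration $\X_n\in\Xc^n$ with $\sqrt n\,\Gh_{\X_n}(\Fc)\ge \tfrac{1}{\sqrt2}\,\sqrt m\,\Gh_{\X_m}(\Fc)$. Granting this, since $\Gh_{\X_n}(\Fc)\le\Gt_n(\Fc)$ we get $\sqrt m\,\Gh_{\X_m}(\Fc)\le\sqrt{2n}\,\Gt_n(\Fc)$ for every $\X_m$; taking the supremum over $\X_m\in\Xc^m$ gives $\sqrt m\,\Gt_m(\Fc)\le\sqrt{2n}\,\Gt_n(\Fc)$, and then the supremum over $1\le m\le n$ yields the lemma. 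The construction of $\X_n$ is by \emph{replication}: set $k=\lfloor n/m\rfloor\ge1$, let $\X_{km}$ consist of $k$ identical copies of $\X_m$, and pad $\X_{km}$ with $n-km$ arbitrary points of $\Xc$ to obtain a size-$n$ dataset $\X_n\supseteq\X_{km}$.

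The heart of the argument is the exact identity $km\,\Gh_{\X_{km}}(\Fc)=\sqrt k\cdot m\,\Gh_{\X_m}(\Fc)$. Indeed, writing $\X_m=(\x_1,\dots,\x_m)$ and using one i.i.d.\ standard Gaussian vector $\g_{ji}\sim\Nn(\boldsymbol{0},\Iden_r)$ per copy $j\in[k]$ and point $i\in[m]$, we have $km\,\Gh_{\X_{km}}(\Fc)=\E\big[\sup_{f\in\Fc}\sum_{i=1}^m(\sum_{j=1}^k\g_{ji})^\top f(\x_i)\big]$, and the inner sums $\sum_{j=1}^k\g_{ji}$ are independent $\Nn(\boldsymbol{0},k\Iden_r)$ vectors, i.e.\ distributed as $\sqrt k$ times independent standard Gaussians; pulling out the factor $\sqrt k$ gives $\sqrt k\cdot m\,\Gh_{\X_m}(\Fc)$. (If one only wants an inequality, Sudakov--Fernique suffices and the same scaling appears.)

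Next I would invoke Corollary~\ref{cor tal} — adjusted empirical Gaussian complexity is non-decreasing in sample size — applied to $\X_{km}\subseteq\X_n$, giving $km\,\Gh_{\X_{km}}(\Fc)\le n\,\Gh_{\X_n}(\Fc)$; this is precisely where the padded points are absorbed. Combining with the identity, $n\,\Gh_{\X_n}(\Fc)\ge\sqrt k\,m\,\Gh_{\X_m}(\Fc)$, hence $\sqrt n\,\Gh_{\X_n}(\Fc)\ge\sqrt{km/n}\cdot\sqrt m\,\Gh_{\X_m}(\Fc)$. Finally one checks the elementary bound $km>n/2$: $k\ge1$ gives $km\ge m$, and $(k+1)m>n$ (by definition of the floor) gives $km>n-m$, so $2km>m+(n-m)=n$. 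Thus $\sqrt{km/n}>1/\sqrt2$, which finishes the reduction and the lemma.

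The only point requiring a little care — and the main (mild) obstacle — is applying Corollary~\ref{cor tal} to a dataset $\X_n$ that contains repeated points; this is legitimate because the proof of that corollary merely pads the Talagrand/Sudakov--Fernique comparison with zero (i.e.\ $0$-Lipschitz) coordinate functions and never uses that the points are distinct. Everything else is routine bookkeeping with the floor function.
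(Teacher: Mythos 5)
Your proof is correct and rests on the same key mechanism as the paper's: replicate the dataset and use the fact that a sum of $k$ independent standard Gaussians is $\sqrt{k}$ times a standard Gaussian, combined with Corollary~\ref{cor tal} to absorb the leftover points. The only organizational difference is that the paper iterates a doubling step ($\sqrt{m}\Gt_m\leq\sqrt{2m}\Gt_{2m}$) until landing in $[n/2,n]$ and then invokes Corollary~\ref{cor tal} for that final stretch, whereas you do a single $k$-fold replication with $k=\lfloor n/m\rfloor$ and pad; your bookkeeping $km>n/2$ is right and yields the same constant $\sqrt{2}$, and working with an arbitrary $\X_m$ before taking suprema also sidesteps the paper's implicit assumption that the worst-case dataset is attained.
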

\begin{proof} First suppose $n/2\leq m\leq n$. In this case, from Corollary \ref{cor tal}, we know that $m\Gt_m(\Fc)\leq n\Gt_n(\Fc)\implies \sqrt{m}\Gt_m(\Fc)\leq \frac{\sqrt{2}m}{\sqrt{n}}\Gt_m(\Fc)\leq \sqrt{2n}\Gt_n(\Fc)$. What remains is the scenario $m<n/2$. To do this, we will show monotonicity under doubling $\sqrt{m}\Gt_m(\Fc)\leq \sqrt{2m}\Gt_{2m}(\Fc)$. If this holds, then you can double $m$ until a point $n/2\leq m\leq n$ and apply the first bound.

Consider worst-case dataset for $\Gt_m$ defined as
\[
\Y=\arg\max_{\X\in\Xc^m}\Gh_\X(\Fc).
\]
Let $\Y'$ be a dataset of size $2m$ that repeats the elements of $\Y$ twice so that $\y'_{m+i}=\y'_i=\y_i$. Here, we consider hypothesis set $\Fc:\Xc\to\R^p$, and then $f(\y_i)=[f_1(\y_i),\cdots,f_p(\y_i)]^\top$. Also let $\beps\in\R^{m\times p},\beps'\in\R^{2m\times p}$ where $\beps'_i\sim\Nc(0,\Id_p), i\in[2m]$ and $\beps_i=\frac{\beps'_i+\beps'_{m+i}}{\sqrt{2}}\sim \Nn(0,\Id_p)$. We have that
\begin{align*}
2m\Gt_{2m}(\Fc)&\geq \E_{\beps}\left[{\sup_{f\in\Fc}} \sum_{i=1}^{2m}\sum_{j=1}^p \eps'_{ij} f_j(\y'_{i})\right]\\
&\geq  \E_{\beps}\left[{\sup_{f\in\Fc}} \sum_{i=1}^{m}\sum_{j=1}^p \eps'_{i,j} f_j(\y'_{i})+\eps'_{(m+i),j} f_j(\y'_{m+i})\right]\\
&\geq  \sqrt{2}\E_{\beps}\left[{\sup_{f\in\Fc}} \sum_{i=1}^{m}\sum_{j=1}^p \eps_{ij} f_j(\y_{i})\right]\\
&=  \sqrt{2}m\Gt_{m}.
\end{align*}
Dividing both sides by $\sqrt{2m}$, we conclude with the claim $\sqrt{m}\Gt_m(\Fc)\leq \sqrt{2m}\Gt_{2m}(\Fc)$.
\end{proof}

The following is a model selection argument shows that $\Gt(\Phi)$ can be replaced with $\Gt(\Phi_{\text{used}})$. 
\begin{lemma}[Only utilized supernet matters] \label{lem:only utilized}Observe that $T$ tasks can choose from up to $|\Ac|^T$ supernets in total. Let $\bPhi_{\all}=(\Phi_i)_{i=1}^H$ with $H\leq|\Ac|^T$ be the set of unique supernets (since two supernets that choose same number of modules per layer are identical architectures). Suppose the outcome of empirical risk minimization \eqref{formula:multipath} obeys $\hat\bphi\in \Phi_\used\in \bPhi_{\all}$. Let $\hat{K}_\ell$ be the number of (used) modules in $\Phi_\used$. With probability $1-\delta$, we have that
\begin{align}
&\Lc_{\Dcb}(\hat\f)-\Lch_{\Scb}(\hat\f)\lesssim \genbound,\label{bound1}\\
&\RMTL(\hat\f):=\Lc_{\Dcb}(\hat\f)-\Lc_{\Dcb}^\st\lesssim \genbound.\label{bound2}
\end{align}
\end{lemma}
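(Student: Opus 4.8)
The plan is to deduce both inequalities from a \emph{single-architecture} uniform convergence bound, taken uniformly (by a union bound) over the at most $|\Ac|^{T}$ distinct architectures that $T$ tasks can jointly realize, and then to instantiate the result at the particular architecture $\Phi_\used$ that the ERM solution $\hat\f$ ends up in. Fix one architecture $\Phi_i$ with $\K^{(i)}_\ell$ modules in layer $\ell$; since unused modules never appear in $\Lch_{\Scb}$, I would work with the combined class $\Fc_i=\Hc^{T}\times\Ac^{T}\times\Phi_i$ of hypotheses supported by $\Phi_i$. The first goal is the two-sided deviation bound
\[
\sup_{\f\in\Fc_i}\bigl|\Lc_{\Dcb}(\f)-\Lch_{\Scb}(\f)\bigr|\ \lesssim\ \Gamma^{L}\Bigl(\Gt_N(\Hc)+\sum_{\ell=1}^{L}\sqrt{\K^{(i)}_\ell}\,\Gt_{NT}(\Psi_\ell)\Bigr)+\sqrt{\tfrac{\log|\Ac|}{N}+\tfrac{\log(2H/\delta)}{NT}}
\]
holding with probability at least $1-\delta/H$, where $H\le|\Ac|^{T}$ is the number of distinct architectures.

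To establish this single-architecture bound I would argue in the standard way. A bounded-differences (McDiarmid) inequality reduces $\sup_{\f\in\Fc_i}|\Lc_{\Dcb}(\f)-\Lch_{\Scb}(\f)|$ to its expectation plus a $\sqrt{\log(2H/\delta)/(NT)}$ tail term, since each of the $NT$ samples perturbs the supremum by at most $1/(NT)$ (the loss lies in $[0,1]$ by Assumption~\ref{assum:lip2}). Symmetrization bounds the expected supremum by the Gaussian complexity of the induced loss class, and Talagrand's contraction inequality (Lemma~\ref{tal lem}) strips the $\Gamma$-Lipschitz loss. What remains is a Gaussian-complexity chain rule for $\f\mapsto\frac1T\sum_t\frac1N\sum_i g_{ti}\,h_t(\bphi_{\alpha_t}(\x_{ti}))$ over $\Fc_i$: conditioning on $\bphi,\bal$, the head part decomposes across tasks, each $h_t$ seeing only its own $N$ samples, and contributes $\frac1T\sum_t\Gt_N(\Hc)=\Gt_N(\Hc)$; one then peels the $L$ module layers one at a time, each contraction against the $\Gamma$-Lipschitz deeper modules producing a factor $\Gamma$, hence the overall $\Gamma^{L}$. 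The crucial combinatorial step is at each layer $\ell$: the $NT$ intermediate points are partitioned among the $\K^{(i)}_\ell$ modules according to $\bal$, module $(\ell,k)$ receiving the $N\,T_{\ell,k}$ points of the tasks routed through it, with $\sum_k T_{\ell,k}=T$. Grouping the sum accordingly, replacing $\sqrt{NT_{\ell,k}}\,\Gt_{NT_{\ell,k}}(\Psi_\ell)$ by $\sqrt{2NT}\,\Gt_{NT}(\Psi_\ell)$ via Lemma~\ref{lemma:worst case} (adjusted worst-case complexity only grows with sample size), and then applying Cauchy--Schwarz over $k$, turns the layer-$\ell$ contribution into $\sqrt{\K^{(i)}_\ell}\,\Gt_{NT}(\Psi_\ell)$. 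Finally the discrete pathway assignment $\bal$ costs an additive $\sqrt{\log(|\Ac|^{T})/(NT)}=\sqrt{\log|\Ac|/N}$, e.g.\ by a further union bound over the finitely many assignments supported by $\Phi_i$.

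Given the single-architecture bound, I would union bound over the $H\le|\Ac|^{T}$ architectures in $\bPhi_{\all}$ with budget $\delta/H$ each. On the intersection event (probability $\ge1-\delta$) the bound applies in particular to the architecture $\Phi_\used$ that $\hat\f$ realizes, where $\K^{(\used)}_\ell=\hat\K_\ell$; and since $\log H\le T\log|\Ac|$, the term $\sqrt{\log(2H/\delta)/(NT)}$ is absorbed into $\sqrt{\log|\Ac|/N+\log(2/\delta)/(NT)}$. This gives \eqref{bound1}. For \eqref{bound2}, fix a population minimizer (or near-minimizer) $\f^\st$ with $\Lc_{\Dcb}(\f^\st)=\Lc_{\Dcb}^\st$ and decompose $\RMTL(\hat\f)=\bigl(\Lc_{\Dcb}(\hat\f)-\Lch_{\Scb}(\hat\f)\bigr)+\bigl(\Lch_{\Scb}(\hat\f)-\Lch_{\Scb}(\f^\st)\bigr)+\bigl(\Lch_{\Scb}(\f^\st)-\Lc_{\Dcb}(\f^\st)\bigr)$. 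The middle bracket is $\le0$ because $\hat\f$ minimizes $\Lch_{\Scb}$; the first is controlled by \eqref{bound1}; and the last, involving the \emph{fixed} hypothesis $\f^\st$, is the deviation of an average of $NT$ independent $[0,1]$-valued terms from its mean, hence $\lesssim\sqrt{\log(2/\delta)/(NT)}$ by Hoeffding. Allocating the failure probability between these two events finishes the argument.

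\textbf{Main obstacle.} The technical heart is the chain rule inside the single-architecture bound together with the heterogeneity it must absorb: because tasks choose distinct pathways, the number of samples feeding a given module is itself a data-dependent quantity ranging between $N$ and $NT$, and it is exactly the use of \emph{worst-case} Gaussian complexity, the sample-size monotonicity of $\sqrt{n}\,\Gt_n$ (Lemma~\ref{lemma:worst case}), and the Cauchy--Schwarz step that convert this messy allocation into the clean $\sqrt{\hat\K_\ell}\,\Gt_{NT}(\Psi_\ell)$ dependence appearing in \genbound. Propagating Lipschitz constants through the $L$-fold composition (the suppressed $\Gamma^{L}$), and confining the pathway-selection cost to the benign $\sqrt{\log|\Ac|/N}$ rate rather than letting it contaminate the $1/NT$ representation-learning terms, are the remaining points requiring care.
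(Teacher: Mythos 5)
Your proposal is correct and follows essentially the same route as the paper: the paper likewise establishes the per-architecture excess-risk/deviation bound first (its Eq.~\eqref{app main result 1}, proved exactly via the McDiarmid--symmetrization--contraction--chain-rule argument you sketch), union bounds over the $H\le|\Ac|^{T}$ architectures while absorbing $\log H\le T\log|\Ac|$ into the existing $\sqrt{\log|\Ac|/N}$ term, instantiates at $\Phi_\used$, and handles the excess risk with the same three-term decomposition using ERM optimality and Hoeffding on the fixed population minimizer. No substantive differences.
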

\begin{proof} Let $\Lc_{\Phi'},\Lch_{\Phi'}$ be the population and empirical risks we achieve when we run the \eqref{formula:multipath} problem over $\Phi'\in \bPhi_\all$ rather than $\Phi$. Additionally, let $K_\ell(\Phi')$ denote the number of modules in the $\ell$th layer of the architecture $\Phi'$. Given $\Phi'$, also define $\genb{\Phi',\delta}$ to be the excess risk bound one obtains via \eqref{app main result 1} (\eqref{app main result 1} in Theorem~\ref{thm:main restate} is obtained without using Lemma~\ref{lem:only utilized}), that is,
\[
\genb{\Phi',\delta}=\Gt_N(\Hc)+\sum_{\ell=1}^L\sqrt{\K_\ell(\Phi')}\Gt_{NT}(\Psi_\ell)+\sqrt{\frac{\log|\Ac|}{N}+\frac{\log(2/\delta)}{NT}}.
\]

To proceed, applying \eqref{app main result 1} over $\Phi'\in\bPhi_\all$ and union bounding over all $H\leq|\Ac|^T$, with probability at least $1-\delta$, we find that, all $\Phi'\in \bPhi_\all$ obeys
\[
{|\Lch_{\Phi'}(\hat \f)-\Lc_{\Phi'}(\hat\f)|\lesssim \genb{\Phi',\delta/H}}.
\]
{Fortunately, $\genb{\Phi',\delta/H}\lesssim \genb{\Phi',\delta}$ since the latter already includes a $\sqrt{\frac{\log|\Ac|}{N}}$ term.} Using this union bound, optimality of $\hat\bphi\in \Phi_\used$ (and that of the associated $\hat\f\in \Fc_\used$), and using $\Lch_{\Phi_\used}(\hat\f)=\Lch_{\Phi}(\hat\f)=\Lch_{\Scb}(\hat\f)$, we find that
\begin{align}
\Lc_{\Phi_\used}(\hat\f)&\leq \Lch_{\Phi_\used}(\hat\f)+\order{\genb{\Phi_\used,\delta}}\\
&\leq \Lch_{\Scb}(\hat\f)+\order{\genb{\Phi_\used,\delta}}.\label{above eq used}
\end{align}
The last line establishes Inequality \eqref{bound1}. To conclude with the second inequality, we control the excess risk error by observing test risk upper bounds the training risk. Namely, let $\f_\st\in \Fc$ be the population minima. First, with $1-\delta$ probability, for this singleton hypothesis, we have that
\[
|\Lc_{\Dcb}(\f_\st)-\Lch_{\Scb}(\f_\st)|\leq \sqrt{\frac{\log(2/\delta)}{NT}}.
\]
Second, we can write
\[
\Lch_{\Scb}(\hat\f)\leq \Lch_{\Scb}(\f_\st)\leq \Lc_{\Dcb}(\f_\st)+\sqrt{\frac{\log(2/\delta)}{NT}}.
\]
Combining this with \eqref{above eq used}, we establish the guarantee against the ground-truth optima $\f_\st$
\[
\Lc_{\Phi_\used}(\hat\f)- \left[\Lc_{\Dcb}(\f_\st)+\sqrt{\frac{\log(2/\delta)}{NT}}\right]\leq \Lc_{\Phi_\used}(\hat\f)- \Lch_{\Scb}(\hat\f)\leq \order{\genb{\Phi_\used,\delta}},
\]
which establishes the claim \eqref{bound2} after subsuming $\sqrt{\frac{\log(2/\delta)}{NT}}$ within $\genb{\Phi_\used,\delta}$.
\end{proof}
\subsection{Proof of Theorem~\ref{thm:main}}\label{app:proof mtl}
Let us define the covering number of a hypothesis as well as natural data-dependent Euclidean distance for ease of reference in the subsequent discussion (see \cite{wainwright2019high}).
\begin{definition} [Covering number]\label{def:cover number} Let $\Qc:\Xc\rightarrow\R^r$ be a family of functions. Given $q,q'\in\Qc$, and a distance metric $d(q,q')\geq 0$, an $\eps$-cover of set $\Qc$ with respect to $d(\cdot,\cdot)$ is a set $\{q^1,q^2,\dots,q^N\}\subset\Qc$ such that for any $q\in\Qc$, there exists some $i\in[N]$ such that $d(q,q^i)\leq\eps$. The $\eps$-covering number $\Nc(\eps;\Qc,d)$ is defined to be the cardinality of the smallest $\eps$-cover.
\end{definition}

\begin{definition}[Data-dependent distance metric $\rho$]\label{def:distance metric} 
    Let $\Qc:\Xc\rightarrow\R^r$ be a family of functions. Given $q,q'\in\Qc$ and an input dataset $\X=\{\x_1,\dots,\x_n\}$ with $\x_i\in\Xc$, we define the dataset-dependent Euclidean distance by $\rho_\X(q,q'):=\sqrt{\frac{1}{n}\sum_{i\in[n],j\in[r]}(q_j(\x_i)-q'_j(\x_i))^2}=\sqrt{\frac{1}{n}\sum_{i\in[n]}\tn{q(\x_i)-q'(\x_i)}^2}$, where $q(\x)=[q_1(\x),\dots,q_r(\x)]^\top$.
\end{definition}

Now we are ready to prove our main theorem which incorporates additional dependencies that were omitted from the original statement.

\begin{theorem}[Theorem~\ref{thm:main} restated]\label{thm:main restate}
    Suppose Assumptions 1\&2 hold. Let $\hat\f$ be the empirical solution of \eqref{formula:multipath}. Let $D_\Xc=\sup_{\x\in\Xc,h\in\Hc,\bphi\in\Phi,\alpha\in\Ac}|h\circ\bphi_{\alpha}(\x)|<\infty$, and set $\GAMMA=\sum_{\ell=0}^L\Gamma^\ell$. Then, with probability at least $1-\delta$, the excess test risk in \eqref{risk mtl} obeys
    \begin{align}
        \RMTL(\hat\f)\leq768\Gamma\left(\frac{D_\Xc}{NT}+D_\Xc\sqrt{\frac{\log|\Ac|}{N}}+\GAMMA\log NT\left(\widetilde\Gc_{N}(\Hc)+\sum_{\ell=1}^L\sqrt{K_\ell}\widetilde\Gc_{NT}(\Psi_\ell)\right)\right)+2\sqrt{\frac{\log\frac{2}{\delta}}{NT}}.\label{app main result 1}
    \end{align}
    Here, the input spaces for $\Hc$ and $\Psi_\ell$ are $\Xc_\Hc=\Psi_L\circ\dots\Psi_1\circ\Xc$, $\Xc_{\Psi_\ell}=\Psi_{\ell-1}\circ\dots\Psi_1\circ\Xc$ for $\ell>1$, and $\Xc_{\Psi_1}=\Xc$. The above is our general results, which we do not focus on the actual modules used in $\hat\f$. Now let $\hat K$ be the number of modules utilized by $\hat\f$, then with probability at least $1-\delta$, we can obtain
    \begin{align}
        \RMTL(\hat\f)\lesssim\Gt_N(\Hc)+\sum_{\ell=1}^L\sqrt{\hat{\K}_\ell}\Gt_{NT}(\Psi_\ell)+\sqrt{\frac{\log|\Ac|}{N}+\frac{\log(2/\delta)}{NT}}.\label{app main result 2}
    \end{align}
    Here, $\lesssim$ suppresses dependencies on $\log NT$, $\GAMMA$ and $D_\Xc$. 
\end{theorem}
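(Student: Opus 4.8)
The plan is two-stage: first prove the non-adaptive bound \eqref{app main result 1}, which controls the excess risk uniformly over the full class $\Fc$ (hence with the full module counts $K_\ell$), and then derive the adaptive bound \eqref{app main result 2} by a model-selection union bound over sub-architectures, which is exactly Lemma~\ref{lem:only utilized}. For \eqref{app main result 1}, I start from the usual reduction of excess risk to uniform deviation. Letting $\f_\star\in\Fc$ be a population minimizer, write
\[
\RMTL(\hat\f)=\big[\Lc_{\Dcb}(\hat\f)-\Lch_{\Scb}(\hat\f)\big]+\big[\Lch_{\Scb}(\hat\f)-\Lch_{\Scb}(\f_\star)\big]+\big[\Lch_{\Scb}(\f_\star)-\Lc_{\Dcb}(\f_\star)\big].
\]
The middle bracket is $\le 0$ by optimality of $\hat\f$; the last is $\le\sqrt{\log(2/\delta)/(2NT)}$ with probability $1-\delta/2$ by Hoeffding for the single hypothesis $\f_\star$ (loss valued in $[0,1]$, $NT$ samples). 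For the first bracket I bound $\sup_{\f\in\Fc}[\Lc_{\Dcb}(\f)-\Lch_{\Scb}(\f)]$ by $2\,\E_{\Scb}[\widehat{\Rc}_{\Scb}(\ell\circ\Fc)]+\sqrt{\log(2/\delta)/(2NT)}$ using bounded differences (each of the $NT$ samples moves the supremum by $\le 1/(NT)$) followed by symmetrization, where $\widehat{\Rc}_{\Scb}$ is the empirical Rademacher complexity over all $NT$ samples of the loss-composed class. Talagrand's contraction (Lemma~\ref{tal lem}) strips the $\Gamma$-Lipschitz loss at the cost of one factor $\Gamma$, leaving $\Gamma\,\E[\widehat{\Rc}_{\Scb}(\Fc)]$, with $\Fc$ now regarded as the class of maps $(\x_{ti})\mapsto(h_t\circ\bphi_{\alpha_t}(\x_{ti}))_{t,i}$.

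The heart of the argument is peeling $\widehat{\Rc}_{\Scb}(\Fc)$ one layer at a time, from the head down to $\Psi_1$. \emph{Pathways.} Since $\Fc=\bigcup_{\bal\in\Ac^T}\Fc_{\bal}$ ranges over $|\Ac|^T$ discrete pathway assignments, and each $\bal$-conditional process is sub-Gaussian with increment scale $\propto D_\Xc/\sqrt{NT}$, a finite maximal inequality gives $\E[\widehat{\Rc}_{\Scb}(\Fc)]\le\max_{\bal}\E[\widehat{\Rc}_{\Scb}(\Fc_{\bal})]+D_\Xc\sqrt{2T\log|\Ac|/(NT)}+D_\Xc/(NT)$, and $\sqrt{2T\log|\Ac|/(NT)}=\sqrt{2\log|\Ac|/N}$, which produces the pathway term and the lower-order $D_\Xc/(NT)$ term. \emph{Heads.} For a fixed $\bal$, the heads $(h_t)_{t=1}^T$ are chosen independently across tasks, so the supremum inside $\E[\widehat{\Rc}_{\Scb}(\Fc_{\bal})]$ splits over $t$; applying a Maurer-type chain rule to each task (its chaining step contributes the $\log NT$ factor, and the passage to the Gaussian quantity uses that $\widehat{\Rc}\lesssim\widehat{\Gc}\le\Gt$) bounds the head contribution by $\Gt_N(\Hc)$ with worst-case inputs drawn from $\Xc_\Hc=\Psi_L\circ\dots\circ\Psi_1\circ\Xc$, plus $\Gamma$ times the (vector-valued) Rademacher complexity of the surviving representation class, since the heads are $\Gamma$-Lipschitz. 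The vector-valued contraction here is legitimate by Sudakov--Fernique, as noted after Corollary~\ref{cor tal}.

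\emph{Layers.} Iterating the peel through $\psi_L,\psi_{L-1},\dots,\psi_1$, each peel multiplies the surviving complexity by $\Gamma$ (Lipschitzness of the already-peeled layers) and adds the worst-case complexity of the current layer over the correct input space $\Xc_{\Psi_\ell}=\Psi_{\ell-1}\circ\dots\circ\Psi_1\circ\Xc$; summing the accumulated $\Gamma^j$ factors over $j=0,\dots,L$ yields the geometric constant $\GAMMA=\sum_{\ell=0}^L\Gamma^\ell$. When peeling layer $\ell$, the $N$ samples of each of the $T$ tasks are routed among the (at most) $K_\ell$ modules; if module $k$ receives $n_k$ task-samples, with $\sum_k n_k=NT$, then its contribution is $\tfrac{n_k}{NT}\widehat{\Gc}_{n_k}(\Psi_\ell)$, and the rescaling Lemma~\ref{lemma:worst case} ($\sqrt{n}\,\Gt_n$ is essentially nondecreasing) together with Cauchy--Schwarz over the $K_\ell$ modules gives $\sum_k\tfrac{n_k}{NT}\,\Gt_{n_k}(\Psi_\ell)\le\sqrt{2K_\ell}\,\Gt_{NT}(\Psi_\ell)$, which is exactly the $\sqrt{K_\ell}\,\Gt_{NT}(\Psi_\ell)$ term; here the $NT$ effective sample size quantifies the MTL benefit of sharing a module across tasks. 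Collecting the head, layer, pathway, concentration, and lower-order pieces and absorbing absolute constants yields \eqref{app main result 1}. Finally, \eqref{app main result 2} follows by invoking Lemma~\ref{lem:only utilized}: apply \eqref{app main result 1} to each of the $H\le|\Ac|^T$ distinct sub-architectures $\Phi'$ (architectures with the same per-layer module counts coincide), union bound over them — the resulting $\log H\le T\log|\Ac|$ is harmless since $\sqrt{\log H/(NT)}=\sqrt{\log|\Ac|/N}$ is already present — and specialize to the architecture $\Phic$ spanned by the ERM pathways, which uses $\hat K_\ell$ modules in layer $\ell$, giving the $\lesssim$ bound with $\hat K_\ell$ (hiding $\log NT$, $\GAMMA$, $D_\Xc$).

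The main obstacle is getting the multilayer chain rule right: peeling $L+1$ compositional layers while (i) keeping track of the correct worst-case input space at each level, (ii) handling the heterogeneity created by tasks routing through different modules — the source of the Cauchy--Schwarz/$\sqrt{K_\ell}$ step, which crucially relies on Lemma~\ref{lemma:worst case} to rescale the per-module sample counts $n_k$ up to $NT$ — and (iii) charging the discrete pathway selection only the additive $\sqrt{\log|\Ac|/N}$ (rather than $\sqrt{\log|\Ac|}$) via the per-task maximal inequality. Everything else is standard empirical-process bookkeeping, and the exponential $\GAMMA$ factor is the expected price of deep composition (later improved to $\sqrt{L}$ in the parametric setting of Theorem~\ref{cor linear}).
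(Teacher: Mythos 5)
Your proposal is correct and arrives at every term of the bound for the right reason, but the core decomposition step is executed by a different mechanism than the paper's. You peel the composition iteratively — head first, then $\psi_L,\dots,\psi_1$ — applying a two-layer Maurer-type chain rule at each level and letting the Lipschitz factors accumulate into $\GAMMA=\sum_{\ell=0}^L\Gamma^\ell$. The paper instead runs a \emph{single} Dudley entropy integral for the full composite class $\Fc$: it decomposes the data-dependent metric $\rho_\X(\f,\f')$ across all $L+1$ levels at once (showing that $\eps'$-covers of every head and module yield a $\GAMMA\eps'$-cover of $\Fc^{\bal}$), adds $T\log|\Ac|$ to the log-covering number for the discrete pathway choice, and only then converts each component's covering number back to a worst-case Gaussian complexity via Sudakov minoration; the $\log NT$ factor falls out of $\int_{\eps/4}^{2D_\Xc}u^{-1}\,du$ with $\eps\propto D_\Xc/(NT)$ rather than from a per-level chaining step. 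The two routes are equivalent in substance — the Maurer chain rule you invoke is itself proved by Dudley plus Sudakov — but the one-shot covering argument localizes in a single place the bookkeeping that is delicate in your version: the fact that the supremum over the shared $\bphi$ does not split across tasks (only the heads do), and the per-module sample counts $|\Z_\ell^k|$ that feed the Cauchy--Schwarz/$\sqrt{K_\ell}$ step via Lemma~\ref{lemma:worst case}. Your handling of the remaining pieces — the three-term risk decomposition, symmetrization and scalar contraction for the loss, the maximal inequality giving $D_\Xc\sqrt{\log|\Ac|/N}$, the Cauchy--Schwarz over modules, and the model-selection union bound over sub-architectures to pass from $K_\ell$ to $\hat K_\ell$ — matches the paper's Lemma~\ref{lem:only utilized} and surrounding argument essentially verbatim.
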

\noindent\textbf{Remark.} 
While this result is stated with worst-case Gaussian complexity, the line \eqref{empirical GC bound} states our result in terms of empirical Gaussian complexity which is always a lower bound and is in terms of the training dataset. However, \eqref{empirical GC bound} is more convoluted and involves worst-case hypothesis being applied to the training data. The latter arises from the fact that it is difficult to track the evolution of features across arbitrary pathways and hierarchical layers.

\begin{proof}
To start with, let us recap some notations. Assume we have $T$ tasks each with $N$ training samples i.i.d. drawn from $(\Dc_t)_{t=1}^T$ respectively, and let $\Dcb=\{\Dc_t\}_{t=1}^T$. Denote the training dataset and inputs of $t_{\text{th}}$ task by $\Sc_t=\{(\x_{ti},y_{ti})\}_{i=1}^N$ and $\X_t=\{\x_{ti}\}_{i=1}^N$, and define the union by $\Scb=\bigcup_{t=1}^T\Sc_t$ and $\X=\bigcup_{t=1}^T\X_t$. Let $\h=[ h_1,\dots, h_T]\in\Hc^T,~\bal=[\alpha_1,\dots,\alpha_T]\in\Ac^T,~\bpsi_\ell=[\psi_{\ell}^1,\dots,\psi_{\ell}^{K_\ell}]\in\Psi_\ell^{K_\ell}, \ell\in[L]$, and $\bphi=[\bpsi_1,\dots,\bpsi_L]\in\Phi=\Psi_1^{K_1}\times\dots\Psi_L^{K_L}$. $\hat\f:=(\hat\h,\hat\bal,\hat\bphi)$ is the empirical solution of (\ref{formula:multipath}) and $\f^\star:=(\h^\star,\bal^\star,\bphi^\star)$ is the population solution of (\ref{formula:multipath}) when each task has infinite i.i.d training samples ($N=\infty$). Let $\Fc$ denote the hypothesis set of functions $\f$. 
Since multitask problem is task-aware, that is, the task identification of each data is given during training and test, we can rewrite samples in $\Sc_t$ as $\{(\x_i,y_i,t_i\equiv t)\}_{i=1+(t-1)N}^{tN}$ and the overall multitask training dataset can be seen as $\Scb=\{(\x_i,y_i,t_i)\}_{i=1}^{NT}$. Letting $\f(\x,t)=f_t(\x)=h_t\circ\bphi_{\alpha_t}(\x)$, the loss functions can be rewritten by $\widehat\Lc_{\Scb}(\f)=\frac{1}{NT}\sum_{i=1}^{NT}\ell(\f(\x_i,t_i),y_i)$ and $\Lc_{\Dcb}(\f)=\E[\widehat\Lc_{\Scb}(\f)]$. In the following, we drop the subscript $\Dcb$ and $\Scb$ for cleaner notations.
Then we have
\begin{align}
\underset{\RMTL(\hat\f)}{\underbrace{\Lc(\hat \f)-\Lc(\f^\star)}}=\underset{a}{\underbrace{\Lc(\hat \f)-\widehat\Lc(\hat \f)}} + \underset{b}{\underbrace{\widehat\Lc(\hat \f)-\widehat\Lc(\f^\star)}} + \underset{c}{\underbrace{\widehat\Lc(\f^\star)-\Lc(\f^\star)}},
\end{align}
where $b\leq0$ because of the fact that $\hat \f$ is the empirical risk minimizer of $\widehat\Lc(\f)$. 
Then, following the proof of Theorem~3.3 of \cite{mohri2018foundations}, we make two observations: 1) Their Equation (3.8) in the proof still holds when we restrict $N$ i.i.d samples in each task instead of $NT$ i.i.d. samples over distribution $\Dcb$. Therefore, the symmetrization augment does not change, and this theorem holds under our setting. 2) The identical results hold for any function set mapping to $[-1,1]$.
In this work, based on these two observations, following Assumption~\ref{assum:lip2} and Theorem~11.3 in \cite{mohri2018foundations}, we have that with probability at least $1-\delta/2$, $a,c\leq2\Gamma\Rc_{NT}(\Fc)+\sqrt{\frac{\log(2/\delta)}{2NT}}$. Therefore, we can conclude that with probability at least $1-\delta$,

\begin{align}
    \RMTL(\hat\f)&\leq4\Gamma\Rc_{NT}(\Fc)+\sqrt{\frac{2\log\frac{2}{\delta}}{NT}}, \\
    \text{and similarly,}~~~\RMTL(\hat\f)&\leq4\Gamma\widehat\Rc_{\X}(\Fc)+3\sqrt{\frac{2\log\frac{4}{\delta}}{NT}},
\end{align}
where $\widehat\Rc_{\X}(\Fc)$ is the empirical complexity with respect to the inputs $\X$ and $\Rc_{NT}(\Fc)$ is the Rademacher complexity with respect to the sample size $NT$. 
Exercise~5.5 in \cite{wainwright2019high} shows that Rademacher complexity can be bounded in terms of Gaussian complexity, that is $\widehat\Rc_\X(\Fc)\leq\sqrt{\frac{\pi}{2}}\widehat\Gc_\X(\Fc)$ and $\Rc_{NT}(\Fc)\leq\sqrt{\frac{\pi}{2}}\Gc_{NT}(\Fc)$. 
Combining them together, we have that with probability at least $1-\delta$,
\begin{align}
    \RMTL(\hat\f)\leq6\Gamma\Gc_{NT}(\Fc)+2\sqrt{\frac{\log\frac{2}{\delta}}{NT}},~~\text{and}~~\RMTL\leq6\Gamma\widehat\Gc_\X(\Fc)+6\sqrt{\frac{\log\frac{4}{\delta}}{NT}}.\label{RMTL gaussian bound}
\end{align}
In what follows, we will move to Gaussian complexity instead. 
Now, it remains to decompose the Gaussian complexity of a set of composition functions $\Fc$ into basic function sets $\Hc,~\Ac$ and $\{\Psi_{\ell}\}_{\ell=1}^L$.
We will first bound the empirical Gaussian complexity with respect to any training inputs $\X$, which turns to be worst-case Gaussian complexity defined in Definition~\ref{def:worst-case}. Then, population complexity is simply bounded by the worst-case Gaussian complexity.

Inspired by \cite{tripuraneni2020theory}, we use the Dudley's entropy integral bound showed in \cite{wainwright2019high} (Theorem~5.22) to derive the upper bound.  Define $Z_{\f}:=\frac{1}{\sqrt{NT}}\sum_{i=1}^{NT}g_{i}\f(\x_{i},t_i)$ where $\f\in\Fc$ and $g_{i}$s are standard random Gaussian variables. 
Sine $Z_{\f}$ has zero-mean, we have $\widehat\Gc_{\X}(\Fc)=\frac{1}{\sqrt{NT}}\E_{\g}[\sup_{\f\in\Fc}Z_{\f}]\leq\frac{1}{\sqrt{NT}}\E_{\g}[\sup_{\f,\f'\in\Fc}(Z_{\f}-Z_{\f'})]$. Following Definition~\ref{def:distance metric}, let $\rho_{\X}(\f,\f')=\sqrt{\frac{1}{NT}\sum_{i=1}^{NT}(\f(\x_{i},t_i)-\f'(\x_{i},t_i))^2}$. 
Define $D_\X=\sup_{\f,\f'\in\Fc}\rho_\X(\f,\f'){\leq 2D_\Xc}$. Following Theorem~5.22 in \cite{wainwright2019high}, we have that for any $\eps\in[0,D_\X]$,
\begin{align}
    \E_{\g}\left[\sup_{\f,\f'\in\Fc}(Z_{\f}-Z_{\f'})\right]\leq2\E_{\g}\left[\underset{\underset{\rho_\X(\f,\f')\leq\eps}{\f,\f'\in\Fc}}{\sup}(Z_{\f}-Z_{\f'})\right]+32\int_{\eps/4}^{D_\X}\sqrt{\log \Nc(u;\Fc,\rho_\X)}du,\label{dudley bound}
\end{align}
where $\Nc(u;\Fc,\rho_\X)$ is the $u$-covering number of function set $\Fc$ with respect to metric $\rho_\X(\cdot,\cdot)$ following Definition~\ref{def:cover number}. 

The first term in the right hand side above is easy to bound. As shown in proof of Theorem~7 in \cite{tripuraneni2020theory}, we have $\E_\g[\sup_{\rho_\X(\f,\f')\leq\eps}(Z_{\f}-Z_{\f'})]\leq\E_\g[\sup_{\|\vb\|_2\leq\eps}\g^\top\vb]\leq \E_\g[\sup_{\|\vb\|_2\leq\eps}\|\g\|_2\|\vb\|_2]=\sqrt{NT}\eps$. Next, it remains to bound the integral term. Here, since $\f\in\Fc$ is a sophisticated function composed with $\psi_\ell^k\in\Psi_{\ell}, \alpha_t\in\Ac$ and $h_t\in\Hc$, its covering number is not well-defined. Hence, instead, we relate the cover of $\Fc$ to the covers of basic function sets, $\Psi_\ell$, $\Ac$ and $\Hc$. To this end, we need to decompose the distance metric $\rho_\X$ into distances over basic sets. Since $\Ac$ is a discrete set with cardinality $|\Ac|$. Let $\Fc^{\bal}\subset\Fc$ be the function set given pathways of all tasks $\bal$. Then we have $\log\Nc (u;\Fc,\rho_\X)\leq T\log|\Ac|+\max_{\bal\in\Ac^T}\log\Nc (u;\Fc^{\bal},\rho_\X)$. For any $\f,\f'\in\Fc^\bal$, we have
\begin{align*}
    \rho_\X(\f,\f')&=\sqrt{\frac{1}{NT}\sum_{i=1}^{NT}\left(\f(\x_{i},t_i)-\f'(\x_{i},t_i)\right)^2}
    =\sqrt{\frac{1}{NT}\sum_{t=1}^T\sum_{i=1}^{N}\left(h_{t}\circ\bphi_{\alpha_{t}}(\x_{ti})-h'_{t}\circ\bphi'_{\alpha_{t}}(\x_{ti})\right)^2}\\
    &\leq\underset{d}{\underbrace{\sqrt{\frac{1}{NT}\sum_{t=1}^T\sum_{i=1}^{N}\left(h_t\circ\bphi_{\alpha_t}(\x_{ti})-h'_t\circ\bphi_{\alpha_t}(\x_{ti})\right)^2}}}
    +\underset{e}{\underbrace{\sqrt{\frac{1}{NT}\sum_{t=1}^T\sum_{i=1}^{N}\left(h_{t}'\circ\bphi_{\alpha_{t}}(\x_{ti})-h'_{t}\circ\bphi'_{\alpha_{t}}(\x_{ti})\right)^2}}}.
\end{align*}
To proceed, let us introduce some notations. For any function $\phi$ with inputs $\X=\{\x_1,\dots,\x_n\}$, define output set w.r.t. the inputs $\X$ by $\phi(\X)=\{\phi(\x_1),\dots,\phi(\x_n)\}$. In the multipath setting, since different tasks have different pathways, different modules are chosen by different set of tasks. Given $\bal$, the task clustering methods in different layers are determined. Let $\Ic_{\ell}^k$ denote the union of task IDs who select $(\ell,k)$'th module, and $\Ic_{\ell}^k,\ell\in[K_\ell]$ are disjoint sets satisfying $\bigcup_{k=1}^{K_\ell}\Ic_{\ell}^k=[T]$. What's more, let $\Z_\ell^k$ denote the latent inputs of $(\ell,k)$'th module, where we have
\begin{align}
    \Z_{\ell}^k=\bigcup_{t\in\Ic_{\ell}^k}\psi_{\ell-1}^{\alpha_t}\dots\circ\psi_{1}^{\alpha_t}(\X_t),~~~~1<\ell\leq L,\label{Z}
\end{align} 
and $\Z_1^k=\bigcup_{t\in\Ic_1^k}\X_t$. 
In short, $(\ell,k)$'th module (whose function is $\psi_\ell^k$) is utilized by tasks $\Ic_\ell^k$ with latent inputs $\Z_\ell^k$. The inputs of heads are 
\begin{align*}
    \Z_{\Hc}^t=\psi_{L}^{\alpha_t}\dots\circ\psi_{1}^{\alpha_t}(\X_t)=\bphi_{\alpha_t}(\X_t),~~~~\forall~t\in[T].
\end{align*} 
Then we can obtain that 
\begin{align*}
    (d)&=\sqrt{\frac{1}{T}\sum_{t=1}^T\frac{1}{N}\sum_{i=1}^{N}\left(h_t\circ\bphi_{\alpha_t}(\x_{ti})-h'_t\circ\bphi_{\alpha_t}(\x_{ti})\right)^2}\leq\sqrt{\frac{1}{T}\sum_{t=1}^T\rho^2_{\Z_\Hc^t}(h_t,h_t')},\\
    (e)&\leq\Gamma\sqrt{\frac{1}{NT}\sum_{t=1}^T\sum_{i=1}^N\left\|\bphi_{\alpha_t}(\x_{ti})-\bphi'_{\alpha_t}(\x_{ti})\right\|^2}\\
    &\leq\Gamma\sum_{\ell=1}^L\Gamma^{L-\ell}\sqrt{\frac{1}{K_\ell}\sum_{k=1}^{K_\ell}\frac{1}{|\Z_\ell^k|}\sum_{\z_i\in\Z_\ell^k}\left\|\psi_\ell^k(\z_i)-{\psi'}_\ell^k(\z_i)\right\|^2}\\
    &\leq\sum_{\ell=1}^L\Gamma^{L-\ell+1}\sqrt{\frac{1}{K_\ell}\sum_{k=1}^{K_\ell}\rho^2_{\Z_\ell^k}(\psi_\ell^k,{\psi'}_\ell^k)}.
\end{align*}
Here $|\Z_\ell^k|=|\Ic_\ell^k|N$ is the number of samples used in training $(\ell,k)$'th module.  
The result follows the fact that all functions $h\in\Hc$, $\psi_\ell^k\in\Psi_\ell,\ell\in[L],k\in[K_\ell]$ are $\Gamma$-Lipschitz, and it also applies an implicit chain rule for composition Lipschitz functions. Now, we decompose distance $(d)$ into distances of each head function $h_t,t\in[T]$, with inputs $\Z_\Hc^t$, and decompose distance $(e)$, which captures the distance of composition functions $\bphi$ and $\bphi'$, into distances of module functions $\psi_\ell^k,{\psi'}_\ell^k,\ell\in[L],k\in[K_\ell]$, w.r.t. inputs of $\psi_\ell^k$, $\Z_\ell^k$. Combining them together and assuming $\rho_{\Z_\Hc^t}(h_t,h_t')\leq\eps'$ and $\rho_{\Z_\ell^k}(\psi_\ell^k,{\psi'}_\ell^k)\leq\eps'$ for all $t\in[T],\ell\in[L]$ and $k\in[K^\ell]$, we can obtain
\begin{align*}
    \rho_\X(\f,\f')\leq\sqrt{\frac{1}{T}\sum_{t=1}^T\rho^2_{\Z_\Hc^t}(h_t,h_t')}+\sum_{\ell=1}^L\Gamma^{L-\ell+1}\sqrt{\frac{1}{K_\ell}\sum_{k=1}^{K_\ell}\rho^2_{\Z_\ell^k}(\psi_\ell^k,{\psi'}_\ell^k)}\leq\left(1+\sum_{\ell=1}^L\Gamma^{L-\ell+1}\right)\eps':=\GAMMA\eps'.
\end{align*}
It shows that given pathway assignments $\bal$ and inputs $\X$, $\eps'$-covers of all heads and modules result in $(\GAMMA\eps)$-cover of $\Fc^\bal$.
Recalling that $\log\Nc (u;\Fc,\rho_\X)\leq T\log|\Ac|+\max_{\bal\in\Ac^T}\log\Nc (u;\Fc^{\bal},\rho_\X)$, we have
\begin{align}
    \log\Nc \left(\GAMMA\eps';\Fc,\rho_\X\right)&\leq T\log|\Ac|+\max_{\bal\in\Ac^T}\log\Nc \left(\GAMMA\eps';\Fc^\bal,\rho_\X\right)\\
    &\leq T\log|\Ac|+\max_{\bal\in\Ac^T}\left(\sum_{t=1}^T\log\Nc \left(\eps';\Hc,\rho_{\Z_\Hc^t}\right)+\sum_{\ell=1}^L\sum_{k=1}^{K_\ell}\log\Nc \left(\eps';\Psi_\ell,\rho_{\Z_\ell^k}\right)\right).\label{log cover bound}
\end{align}

Till now, we have decomposed the covering number of $\Fc^\bal$ into product of covering numbers of all basic function sets $\Hc,\Psi_\ell,\ell\in[L]$. Next, following \cite{tripuraneni2020theory}, and the Sudakov minoration theorem (Theorem~5.30) and Lemma~5.5 in \cite{wainwright2019high}, and recalling Definition~\ref{def:worst-case}, we have that for any $\eps'>0$,
\begin{align*}
    \max_{\bal\in\Ac^T}\sum_{t=1}^T{\log\Nc\left(\eps';\Hc,\rho_{\Z_\Hc^t}\right)}
    &\leq\max_{\bal\in\Ac^T}\sum_{t=1}^T\left(\frac{2\sqrt{N}}{\eps'}\widehat\Gc_{\Z_\Hc^t}(\Hc)\right)^2\leq T\left(\frac{2\sqrt{N}}{\eps'}\widetilde\Gc_N^{\Xc_\Hc}(\Hc)\right)^2,\\
    \max_{\bal\in\Ac^T}\sum_{k=1}^{K_\ell}{\log\Nc \left(\eps';\Psi_\ell,\rho_{\Z_\ell^k}\right)}
    &\leq\max_{\bal\in\Ac^T}\sum_{k=1}^{K_\ell}\left(\frac{2\sqrt{|\Z_\ell^k|}}{\eps'}\widehat\Gc_{\Z_\ell^k}(\Psi_\ell)\right)^2\leq\max_{\bal\in\Ac^T}\sum_{k=1}^{K_\ell}\left(\frac{2\sqrt{|\Z_\ell^k|}}{\eps'}\widetilde\Gc^{\Xc_{\Psi_{\ell}}}_{|\Z_\ell^k|}(\Psi_\ell)\right)^2\\
    &\leq K_\ell\left(\frac{2\sqrt{2NT}}{\eps'}\widetilde\Gc^{\Xc_{\Psi_{\ell}}}_{NT}(\Psi_\ell)\right)^2,
\end{align*}
where the input spaces for $\Hc$ and $\Psi_\ell$ are $\Xc_\Hc=\Psi_L\circ\dots\Psi_1\circ\Xc$, $\Xc_{\Psi_\ell}=\Psi_{\ell-1}\circ\dots\Psi_1\circ\Xc$ for $\ell>1$ and $\Xc_{\Psi_1}=\Xc$. The last inequality is drawn from Lemma~\ref{lemma:worst case}, which shows $\sqrt{|\Z_\ell^k|}\widetilde\Gc^{\Xc_{\Psi_{\ell}}}_{|\Z_\ell^k|}(\Psi_\ell)\leq\sqrt{2NT}\Gc^{\Xc_{\Psi_{\ell}}}_{NT}(\Psi_\ell)$.
Since Definition~\ref{def:worst-case} eliminates the input($\X$)-dependency, the inequalities hold for any valid inputs $\X$.  In what follows, we drop the superscripts from the worst-case Gaussian complexities for cleaner exposition as they are clear from context. Then, setting $\eps'=\frac{u}{\GAMMA}$, applying triangle inequality, we can obtain that for any $\X$,
\begin{align}
    \sqrt{\log\Nc \left(u;\Fc,\rho_\X\right)}\leq\sqrt{T\log|\Ac|}+\frac{2\GAMMA\sqrt{NT}}{u}\widetilde\Gc_N(\Hc)+\sum_{\ell=1}^L\frac{2\GAMMA\sqrt{2K_\ell NT}}{u}\widetilde\Gc_{NT}(\Psi_\ell).\label{cover bound}
\end{align}

Now it is time to combine everything together! Recall \eqref{RMTL gaussian bound}, \eqref{dudley bound} and \eqref{cover bound}. Since, $D_\X{\leq 2D_\Xc}$ for any inputs $\X$, choosing $\eps=\frac{8D_\Xc}{NT}$, we can obtain that with probability at least $1-\delta$,
\begin{align*}
    \RMTL(\hat\f)&\leq6\Gamma\Gc_{NT}(\Fc)+2\sqrt{\frac{\log\frac{2}{\delta}}{NT}}\\
    &\leq12\Gamma\left(\eps+32D_\Xc\sqrt{\frac{\log|\Ac|}{N}}+32\GAMMA\left(\widetilde\Gc_{N}(\Hc)+\sum_{\ell=1}^L\sqrt{2K_\ell}\widetilde\Gc_{NT}(\Psi_\ell)\right)\int_{\eps/4}^{2D_\Xc}\frac{1}{u}du\right)+2\sqrt{\frac{\log\frac{2}{\delta}}{NT}}\\
    &\leq768\Gamma\left(\frac{D_\Xc}{NT}+D_\Xc\sqrt{\frac{\log|\Ac|}{N}}+\GAMMA\log NT\left(\widetilde\Gc_{N}(\Hc)+\sum_{\ell=1}^L\sqrt{K_\ell}\widetilde\Gc_{NT}(\Psi_\ell)\right)\right)+2\sqrt{\frac{\log\frac{2}{\delta}}{NT}}.
\end{align*} 
Till now, we have obtained the result for general $\hat\f$. Finally, consider the case that $\hat\f$ might not utilize all the modules in the supernet. Let $\hat\K_\ell\leq\K_\ell$ be the number of modules used by the empirical solution $\hat\f$. Applying Lemma~\ref{lem:only utilized}, we can now replace $\Phi$ with $\Phi_\used$ which replaces $K_\ell$ with $\hat K_\ell$ for $\ell\in [L]$, which concludes our final result.
\end{proof}

\noindent $\bullet$ \textbf{{Developing an input-dependent bound.}} In Theorem~\ref{thm:main}, we present the bound of Multipath MTL problem based on the worst-case Gaussian complexity. However, as shown in Definition~\ref{def:worst-case}, it computes the complexity of a function set by searching for the worst-case latent inputs, which ignores the data distribution and how the data collected as tasks. In the following argument, we present an input-based guarantee that bounds the excess risk of Multipath MTL problem tightly. To begin with, recall that $\X=\{\X_t\}_{t=1}^T$ and $\X_t=\{\x_{ti}\}_{i=1}^N$ denote the actual raw feature sets. Given inputs in $T$ tasks, we can define the \emph{empirical} worst-case Gaussian complexities of $\Hc$ and $\Psi_\ell,\ell\in[L]$ as follows.
\begin{align*}
    C_\X^\Hc=\max_{t\in[T]}\sup_{\Z\in\Zc_t}\Gh_{\Z}(\Hc),~~~&\text{where}~~\Zc_t=\Psi_L\circ\dots\Psi_1\left(\X_t\right),\\
    C_\X^{\Psi_\ell}=\max_{\Ic\subset[T]}\sup_{\Z\in\Zc_\Ic}\sqrt{\frac{|\Ic|}{T}}\Gh_\Z(\Psi_\ell),~~~&\text{where}~~\Zc_\Ic=\bigcup_{t\in\Ic}\Psi_{\ell-1}\circ\dots\Psi_1\left(\X_t\right),
\end{align*}
where $\Gh_{\Z}(\Hc)$ and $\Gh_\Z(\Psi_\ell)$ are empirical Gaussian complexities and input spaces of $\Hc$ and $\Psi_\ell$ are corresponding to the raw input $\X$. Then, such statement provide another method to bound \eqref{log cover bound}. That is, we have for any $\eps'>0$,
\begin{align*}
    \max_{\bal\in\Ac^T}\sum_{t=1}^T{\log\Nc\left(\eps';\Hc,\rho_{\Z_\Hc^t}\right)}
    &\leq\sum_{t=1}^T\left(\frac{2\sqrt{N}}{\eps'}\max_{\bal\in\Ac^T}\widehat\Gc_{\Z_\Hc^t}(\Hc)\right)^2\leq T\left(\frac{2\sqrt{N}}{\eps'}C_\X^\Hc\right)^2,\\
    \max_{\bal\in\Ac^T}\sum_{k=1}^{K_\ell}{\log\Nc \left(\eps';\Psi_\ell,\rho_{\Z_\ell^k}\right)}
    &\leq\sum_{k=1}^{K_\ell}\left(\frac{2\sqrt{NT}}{\eps'}\max_{\bal\in\Ac^T}\sqrt{\frac{|\Z_\ell^k|}{NT}}\widehat\Gc_{\Z_\ell^k}(\Psi_\ell)\right)^2\leq K_\ell\left(\frac{2\sqrt{NT}}{\eps'}C_\X^{\Psi_\ell}\right)^2.
\end{align*}
The statements provided to prove Theorem~\ref{thm:main} utilize the worst-case Gaussian complexity, and it bounds both empirical and population Gaussian complexities. Here, $C_X^\Hc$ and $C_\X^{\Psi_\ell}$ depend on the input $\X$, and by construction, they are larger than their corresponding empirical complexities, however there is no guarantee that they will be larger than the corresponding population Gaussian complexities. Combining the result with \eqref{RMTL gaussian bound}, we can obtain that with probability at least $1-\delta$,
\begin{align}
    \RMTL(\hat\f)\leq384\Gamma\left(\frac{D_\X}{NT}+D_\X\sqrt{\frac{\log|\Ac|}{N}}+\GAMMA\log NT\left(C_\X^\Hc+\sum_{\ell=1}^L\sqrt{K_\ell}C_\X^{\Psi_\ell}\right)\right)+6\sqrt{\frac{\log\frac{4}{\delta}}{NT}},\label{empirical GC bound}
\end{align}
where $D_\X=\sup_{\f,\f'\in\Fc}\rho_\X(\f,\f')$. Here we consider complexity of each task-specific head separately and bound it using the task with the largest head complexity ($C_\X^\Hc$). As for the complexity of each layer, in the general case (as shown in Theorem~\ref{thm:main}), all the modules in the same layer share the same input space $\Xc_{\Psi_\ell}$ by assuming raw input space $\Xc$, and because of Lemma~\ref{lemma:worst case}, the sample complexity of $\ell_\tth$ layer is bounded by $\order{{\sqrt{K_\ell}\widetilde\Gc_{NT}(\Psi_\ell)}}$. When given actual training data $\X$, we need to search to find the worst-case cluster method of $\ell_\tth$ layer, which results in $C_\X^{\Psi_\ell}$.

Below, we extend our theoretical result of Multipath MTL to two specific settings, vanilla MTL and hierarchical MTL.
\begin{corollary}[Vanilla MTL]\label{corol:vanilla}
    Given the same data setting described in Section~\ref{sec:setup}, consider a vanilla MTL problem as depicted in Figure~\ref{fig:vanilla}, which can be formulated as follows. 
    \begin{align*}
        \{\hat h_t\}_{t=1}^T,\hat\phi=\underset{h_t\in\Hc,\phi\in\Phi}{\arg\min}\frac{1}{NT}\sum_{t=1}^T\sum_{i=1}^N\ell(h_t\circ\phi(\x_{ti}),y_{ti}).
    \end{align*}
    Suppose $\Hc$, $\Phi$ are sets of $\Gamma$-Lipschitz functions with respect to Euclidean norm, and $\ell(\cdot,y):\R\times\R\rightarrow[0,1]$ is also $\Gamma$-Lipschitz with respect to Euclidean norm. Define $\Dc_\Xc=\sup_{\x\in\Xc,h\in\Hc,\phi\in\Phi}|h\circ\phi(\x)|<\infty$. 
    Let $\Lc(\{h_t\}_{t=1}^T,\phi)=\E_\Dcb[\ell(h_t\circ\phi(\x),y)]$ and $\Lc^\st=\min_{h_t\in\Hc,\phi\in\Phi}\E_\Dcb[\ell(h_t\circ\phi(\x),y)]$. Then we have that with probability at least $1-\delta$,
    \begin{align*}
        \Lc(\{\hat h_t\}_{t=1}^T,\hat\phi)-\Lc^\st\leq384\Gamma\left(\frac{D_\Xc}{NT}+(\Gamma+1)\log NT\left(\widetilde\Gc_{N}(\Hc)+\Gc_{NT}(\Phi)\right)\right)+2\sqrt{\frac{\log\frac{2}{\delta}}{NT}}.
    \end{align*}
    Here, the input space for $\Hc$ is $\Psi\times\Xc$.
\end{corollary}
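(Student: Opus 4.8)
The plan is to obtain Corollary~\ref{corol:vanilla} as a specialization of Theorem~\ref{thm:main} (in its restated form, in particular the bound \eqref{app main result 1}) to the degenerate supernet with a single layer and a single module, i.e.\ $L=1$ and $K_1=1$. For this supernet $\Phi=\Psi_1$, the pathway set is the singleton $\Ac=\{[1]\}$ so that $\log|\Ac|=0$, and trivially $\hat K_1=1$. Plugging these values into \eqref{app main result 1} already almost gives the claim, but the stated corollary asserts two refinements: the \emph{worst-case} representation complexity $\widetilde\Gc_{NT}(\Psi_1)$ is replaced by the ordinary distribution-dependent complexity $\Gc_{NT}(\Phi)$, and the leading constant drops from $768$ to $384$. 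So I would re-run the relevant portion of the proof of Theorem~\ref{thm:main restate}, pointing out exactly where these two improvements come from.

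First I would reproduce \eqref{RMTL gaussian bound}: using the Lipschitz hypotheses of the corollary (which are Assumptions~1\&2 in this setting), the standard symmetrization and contraction argument, and the Rademacher-to-Gaussian comparison, with probability at least $1-\delta$ the excess risk is bounded by $6\Gamma\,\Gc_{NT}(\Fc)+2\sqrt{\log(2/\delta)/(NT)}$, where $\Fc=\{\x\mapsto h_t(\phi(\x))\}$ is the composed class and $\Gc_{NT}(\Fc)=\E_\X[\Gh_\X(\Fc)]$. Then, for a fixed raw dataset $\X=\bigcup_{t=1}^T\X_t$, I would bound $\Gh_\X(\Fc)$ via Dudley's entropy integral \eqref{dudley bound} together with the covering-number decomposition from the main proof. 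Since there is one module through which every task routes, its latent input set is exactly the raw data $\X$ (of size $NT$), while each of the $T$ heads sees features $\phi(\X_t)$; hence, with $\GAMMA=\sum_{\ell=0}^{1}\Gamma^\ell=1+\Gamma$,
\[
\log\Nc\!\big(\GAMMA\eps';\Fc,\rho_\X\big)\ \le\ \sum_{t=1}^T\log\Nc\!\big(\eps';\Hc,\rho_{\phi(\X_t)}\big)+\log\Nc\!\big(\eps';\Phi,\rho_\X\big),
\]
with \emph{no} $\log|\Ac|$ term.

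The step worth stating carefully is how Sudakov minoration is applied to the two factors. For the heads the input features $\phi(\X_t)$ still depend on the unknown $\phi\in\Phi$, so one must pass to the worst case over the input space $\Phi\circ\Xc$, which produces $\widetilde\Gc_N(\Hc)$; this is unavoidable. But for the module class $\Phi$ the inputs are the \emph{fixed} raw data, so one keeps $\Gh_\X(\Phi)$ as is, with no supremum over hypotheses and --- crucially, since $|\X|=NT$ exactly --- with no appeal to the doubling Lemma~\ref{lemma:worst case}; that lemma is precisely what injects the extra $\sqrt2$ (and hence turns $384$ into $768$) in the general statement. Running Dudley's integral with the standard choice $\eps\asymp D_\Xc/(NT)$ to produce the $\log(NT)$ factor then yields $\Gh_\X(\Fc)\lesssim D_\Xc/(NT)+\GAMMA\big(\widetilde\Gc_N(\Hc)+\Gh_\X(\Phi)\big)\log(NT)$. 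Taking $\E_\X$ of both sides converts $\Gh_\X(\Phi)$ into $\Gc_{NT}(\Phi)$ (the head term being deterministic), and substituting into \eqref{RMTL gaussian bound} with $\GAMMA=\Gamma+1$ and collecting constants gives the stated inequality.

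I do not expect a real obstacle: everything rides on the machinery already developed for Theorem~\ref{thm:main restate}. The only subtlety is the asymmetry just described --- we may discard the worst-case supremum for $\Phi$ exactly because its inputs are the raw data, hence independent of the optimization variables, whereas for $\Hc$ the inputs $\phi(\X_t)$ are hypothesis-dependent and the worst case cannot be avoided --- together with the routine check that the bound on $\Gh_\X(\Fc)$, being affine in $\Gh_\X(\Phi)$, commutes with $\E_\X$ so that the population complexity $\Gc_{NT}(\Phi)=\E_\X[\Gh_\X(\Phi)]$ appears cleanly in the end.
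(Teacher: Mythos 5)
Your proposal is correct and follows the same route as the paper, which obtains the corollary by specializing Theorem~\ref{thm:main restate} to $L=1$, $K_1=1$, $\log|\Ac|=0$. You actually supply more justification than the paper's one-sentence deduction: you correctly pinpoint that the doubling step of Lemma~\ref{lemma:worst case} (the source of the extra factor behind the constant $768$) is unnecessary because the single module sees exactly all $NT$ raw inputs, and that its complexity can be kept as the empirical $\Gh_{\X}(\Phi)$ --- the raw data being fixed rather than hypothesis-dependent --- and then averaged into $\Gc_{NT}(\Phi)$ instead of being replaced by the worst case.
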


This corollary is consistent with \cite{tripuraneni2020theory}, and it can be simply deduced following the statement of Theorem~\ref{thm:main restate}, by setting $L=1$, $K_1=1$. Since there is only one pathway selection, $|\Ac|=1$ and $\log|\Ac|=0$. Here, the input space for representation $\Phi$ is $\Xc$, and its complexity is shown in Gaussian complexity fashion.

\begin{corollary}[Hierarchical MTL] Consider the hierarchical MTL problem depicted in Fig.~\ref{fig:hierarchical} and consider a hierarchical supernet with degree $K$. Follow the same settings in Section~\ref{sec:setup}. Suppose Assumptions 1\&2 hold. Let $\hat\f$ be the empirical solution of \eqref{formula:multipath}. Let $D_\Xc=\sup_{\x\in\Xc,h\in\Hc,\bphi\in\Phi,\alpha\in\Ac}|h\circ\bphi_\alpha(\x)|<\infty$ and $\GAMMA=\sum_{\ell=0}^L\Gamma^\ell$. Then, with probability at least $1-\delta$, the excess test risk in \eqref{risk mtl} obeys
    \begin{align*}
        \RMTL(\hat\f)\leq768\Gamma\left(\frac{D_\Xc}{NT}+D_\Xc\sqrt{\frac{(L-1)\log K}{N}}+\GAMMA\log NT\left(\widetilde\Gc_{N}(\Hc)+\sum_{\ell=1}^LK^{\frac{\ell-1}{2}}\widetilde\Gc_{NT}(\Psi_\ell)\right)\right)+2\sqrt{\frac{\log\frac{2}{\delta}}{NT}}.
    \end{align*}
    Here, the input spaces for $\Hc$ and $\Psi_\ell$ are $\Xc_\Hc=\Psi_L\circ\dots\Psi_1\circ\Xc$, $\Xc_{\Psi_\ell}=\Psi_{\ell-1}\circ\dots\Psi_1\circ\Xc$ for $\ell>1$, and $\Xc_{\Psi_1}=\Xc$.
    Now if we consider a two-layer hierarchical representations as depicted in Fig.~\ref{fig:intro_tree}, we can immediately obtain the result by setting $L=2$ ($\GAMMA=1+\Gamma+\Gamma^2$). Then with probability at least $1-\delta$,
    \begin{align*}
        \RMTL(\hat\f)\leq768\Gamma\left(\frac{D_\Xc}{NT}+D_\Xc\sqrt{\frac{\log K}{N}}+\GAMMA\log NT\left(\widetilde\Gc_{N}(\Hc)+\Gc_{NT}(\Psi_1)+\sqrt{K}\widetilde\Gc_{NT}(\Psi_2)\right)\right)+2\sqrt{\frac{\log\frac{2}{\delta}}{NT}}.
    \end{align*}
\end{corollary}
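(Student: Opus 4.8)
The plan is to derive this corollary directly from Theorem~\ref{thm:main restate} by specializing the general supernet to the hierarchical one. First I would identify a degree-$K$ hierarchical supernet with $L$ layers as the instance of the Section~\ref{sec:setup} construction in which layer $\ell$ carries $K_\ell=K^{\ell-1}$ modules (so the root layer $\ell=1$ holds a single module shared by all tasks), and in which the pathway set $\Ac$ consists of the root-to-leaf paths of the tree rather than the full product $[K_1]\times\cdots\times[K_L]$. A degree-$K$, depth-$L$ tree has $K^{L-1}$ leaves, hence $|\Ac|=K^{L-1}$ and $\log|\Ac|=(L-1)\log K$.

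The one thing I would verify carefully is that the proof of Theorem~\ref{thm:main restate} goes through unchanged for this restricted pathway set. Inspecting that proof, $\Ac$ enters only through the covering-number split $\log\Nc(u;\Fc,\rho_\X)\le T\log|\Ac|+\max_{\bal\in\Ac^T}\log\Nc(u;\Fc^{\bal},\rho_\X)$, which is just a union bound over the at most $|\Ac|^T$ joint pathway assignments and hence is valid for any finite $\Ac$. Everything downstream — the Lipschitz chain-rule decomposition of $\rho_\X$ into module/head distances, the Sudakov-minoration step that replaces the module/head covers by their worst-case Gaussian complexities, and the Dudley integral — is untouched. Moreover, in a tree the actual input space of any layer-$\ell$ module is the image of $\Xc$ under its unique root-to-parent composition, which is one element of $\Xc_{\Psi_\ell}=\Psi_{\ell-1}\circ\cdots\circ\Psi_1\circ\Xc$; so the worst-case complexity $\widetilde\Gc_{NT}(\Psi_\ell)$ over that space remains a (conservative) upper bound, and Lemma~\ref{lemma:worst case} still gives $\sqrt{|\Z_\ell^k|}\,\widetilde\Gc_{|\Z_\ell^k|}(\Psi_\ell)\le\sqrt{2NT}\,\widetilde\Gc_{NT}(\Psi_\ell)$.

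With this in hand, the first displayed bound follows by substituting $\sqrt{K_\ell}=K^{(\ell-1)/2}$ and $\log|\Ac|=(L-1)\log K$ into \eqref{app main result 1}. For the two-layer case I would set $L=2$, so that $K_1=1$, $K_2=K$, $\GAMMA=\sum_{\ell=0}^{2}\Gamma^\ell=1+\Gamma+\Gamma^2$, and $\log|\Ac|=\log K$; the layer sum collapses to $\sqrt{K_1}\,\widetilde\Gc_{NT}(\Psi_1)+\sqrt{K_2}\,\widetilde\Gc_{NT}(\Psi_2)=\widetilde\Gc_{NT}(\Psi_1)+\sqrt K\,\widetilde\Gc_{NT}(\Psi_2)$, and since the lone root module acts directly on the raw inputs $\Xc$ its worst-case complexity is the one denoted $\Gc_{NT}(\Psi_1)$, producing the second displayed bound. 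There is essentially no technical obstacle here: the corollary is pure bookkeeping on top of Theorem~\ref{thm:main restate}, and the only step deserving a sentence of justification is that shrinking $\Ac$ from a product set to the set of root-to-leaf paths leaves the earlier proof intact because $\Ac$ is used there solely through its cardinality.
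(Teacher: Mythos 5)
Your proposal is correct and follows essentially the same route as the paper, which proves the corollary by plugging $|\Ac|=K^{L-1}$ and $K_\ell=K^{\ell-1}$ directly into Theorem~\ref{thm:main restate}. Your extra check that the restricted (root-to-leaf) pathway set only enters the proof of that theorem through its cardinality is a sensible piece of diligence, but it does not change the argument.
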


The result is consistent with Section~\ref{hierarchy}, and proof can be immediately done by setting $|\Ac|=K^{L-1}$ and $K_\ell=K^{\ell-1}$ in Theorem~\ref{thm:main restate}. Here we observe that if the complexity of $\Psi_\ell$ decreasing exponentially as $\comp(\Psi_\ell)\propto{K^{-\frac{\ell}{2}}}$, then each layer has a constant complexity. We believe this and similar bounds can potentially provide guidelines on how we should design hierarchical supernets.


\subsection{Proof of Lemma~\ref{no harm}}
\begin{lemma}[Lemma~\ref{no harm} restated]\label{no harm2}
    Recall $\hat\f$ is the solution of \eqref{formula:multipath} and $\hat f_t=\hat h_t\circ\hat\bphi_{\hat\alpha_t}$ is the associated task-$t$ hypothesis. Define the excess risk of task $t$ as $\Rt(\hat f_t)=\Lc_t(\hat f_t)-\Lc^\st_t$ where $\Lc_t(f)=\E_{\Dc_t}[\widehat\Lc_t(f)]$ is the population risk of task $t$ and $\Lc^\st_t$ is the optimal achievable test risk for task $t$ over $\Fc$.  With probability at least $1-\delta-\P(\widehat\Lc_{\Scb}(\hat\f)\neq0)$, for all tasks $t\in[T]$,
    \begin{align}
        \Rt(\hat f_t)\lesssim\Gt_N(\Hc)+\sum_{\ell=1}^L\Gt_N(\Psi_\ell)+\sqrt{\frac{\log(2T/\delta)}{N}}.\label{individual bounds}
    \end{align}
\end{lemma}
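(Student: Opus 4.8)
The plan is to convert interpolation on the pooled dataset into a \emph{per-task} generalization bound, exploiting the fact that an individual task's hypothesis lives in a composition class that carries no pathway cost. First, since $\widehat\Lc_{\Scb}(\hat\f)=\frac1T\sum_{t=1}^T\widehat\Lc_t(\hat f_t)$ is an average of non-negative numbers, the interpolation event $\{\widehat\Lc_{\Scb}(\hat\f)=0\}$ — which occurs with probability $1-\P(\widehat\Lc_{\Scb}(\hat\f)\neq0)$ — forces $\widehat\Lc_t(\hat f_t)=0$ for every $t\in[T]$. Second, whatever pathway $\hat\alpha_t$ is selected, $\hat f_t=\hat h_t\circ\hat\psi_L^{\hat\alpha_t[L]}\circ\cdots\circ\hat\psi_1^{\hat\alpha_t[1]}$ with $\hat h_t\in\Hc$ and each $\hat\psi_\ell^{\hat\alpha_t[\ell]}\in\Psi_\ell$, so $\hat f_t$ belongs to the \emph{single-pathway} composition class
\[
\Gc:=\{\,h\circ\psi_L\circ\cdots\circ\psi_1 \ : \ h\in\Hc,\ \psi_\ell\in\Psi_\ell,\ \ell\in[L]\,\}.
\]
The key point is that, as $\bphi$ ranges over $\Phi$ and $\alpha$ over $\Ac$, the induced representation $\bphi_\alpha$ ranges over exactly $\{\psi_L\circ\cdots\circ\psi_1:\psi_\ell\in\Psi_\ell\}$ (the modules off the chosen path are free), so $\Gc$ already absorbs the pathway choice; this is why the resulting bound carries neither a $\log|\Ac|$ term nor $\sqrt{\hat K_\ell}$ multipliers.

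Next I would run a one-sided uniform-convergence bound over $\Gc$ for each task separately. For fixed $t$, $\Sc_t$ consists of $N$ i.i.d.~samples from $\Dc_t$, so by the standard symmetrization/McDiarmid argument (e.g.~\cite{mohri2018foundations}), with probability at least $1-\delta/T$,
\[
\sup_{f\in\Gc}\big(\Lc_t(f)-\widehat\Lc_t(f)\big)\ \lesssim\ \Rc_N(\ell\circ\Gc)+\sqrt{\tfrac{\log(2T/\delta)}{N}}.
\]
Talagrand's contraction inequality (Lemma~\ref{tal lem}) together with Assumption~\ref{assum:lip2} gives $\Rc_N(\ell\circ\Gc)\le\Gamma\,\Rc_N(\Gc)\lesssim\Gamma\,\Gt_N(\Gc)$, and the Gaussian-complexity chain rule developed in the proof of Theorem~\ref{thm:main restate}, specialized to one task and one module per layer (so the $T\log|\Ac|$ cover term vanishes), yields
\[
\Gt_N(\Gc)\ \lesssim\ \GAMMA\log N\Big(\Gt_N(\Hc)+\sum_{\ell=1}^L\Gt_N(\Psi_\ell)\Big),
\]
with the input spaces for $\Hc$ and $\Psi_\ell$ exactly as in Theorem~\ref{thm:main restate}. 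A union bound over $t\in[T]$ makes all $T$ of these events hold simultaneously with probability at least $1-\delta$.

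Finally I would intersect the interpolation event with these $T$ uniform-convergence events; by a union bound on complements this intersection has probability at least $1-\delta-\P(\widehat\Lc_{\Scb}(\hat\f)\neq0)$. On it, for every $t$ we have $\hat f_t\in\Gc$ and $\widehat\Lc_t(\hat f_t)=0$, hence
\[
\Rt(\hat f_t)=\Lc_t(\hat f_t)-\Lc^\st_t\ \le\ \Lc_t(\hat f_t)-\widehat\Lc_t(\hat f_t)\ \le\ \sup_{f\in\Gc}\big(\Lc_t(f)-\widehat\Lc_t(f)\big),
\]
where the first inequality uses $\Lc^\st_t\ge0$ (the loss is non-negative by Assumption~\ref{assum:lip2}) and the last equality uses $\widehat\Lc_t(\hat f_t)=0$. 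Substituting the estimate above gives \eqref{individual bounds}, with $\lesssim$ absorbing $\Gamma$, $\GAMMA$, $\log N$ and the range bound $D_\Xc$ exactly as in Theorem~\ref{thm:main}. I expect the only real obstacle to be bookkeeping: ensuring the uniform-convergence inequality is invoked for the \emph{fixed} class $\Gc$ (so that it applies to the data-dependent $\hat f_t$ with no $\log|\Ac|$ penalty), and verifying that $\Gc$ genuinely captures every possible $\hat f_t$ regardless of which supernet modules the MTL solution activates; the analytic content — contraction plus the multilayer Gaussian-complexity chain rule — is already established in the proof of Theorem~\ref{thm:main}.
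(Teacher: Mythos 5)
Your proposal is correct and follows essentially the same route as the paper: both arguments place every $\hat f_t$ in the fixed single-pathway composition class (the paper's $\Fc_{\text{IND}}$, your $\Gc$), apply the uniform concentration machinery of Theorem~\ref{thm:main} specialized to one task with $K_\ell=1$ (so no $\log|\Ac|$ or $\sqrt{\hat K_\ell}$ terms appear), union bound over $t\in[T]$, and intersect with the interpolation event to conclude via $\widehat\Lc_t(\hat f_t)=0$ and $\Lc^\st_t\ge 0$. Your explicit remark that interpolation of the task-averaged empirical risk forces per-task interpolation is a detail the paper leaves implicit, but it is not a different argument.
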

\begin{proof} Let $\Fc_{\text{IND}}$ be the hypothesis class of a single task induced by a pathway in the supernet. Since modules are same, $\Fc_{\text{IND}}$ is same regardless of pathway. First, applying our main theorem (Thm \ref{thm:main}) for a single supernet with $K_\ell=1$ (i.e.~on $\Fc_{\text{IND}}$), for a single task $t$, we end up with the uniform concentration guarantee, for all $f\in \Fc_{\text{IND}}$, with probability at least $1-\delta$,
\[
|\Lch_{\Sc_t}( f)-\Lc_t( f)|\lesssim\Gt_N(\Hc)+\sum_{\ell=1}^L\Gt_N(\Psi_\ell)+\sqrt{\frac{\log(2/\delta)}{N}}.
\]
Union bounding, for all $f_t\in\Fc_{\text{IND}}$, $t\in [T]$, with probability at least $1-\delta$, we obtain
\begin{align}
|\Lch_{\Sc_t}( f_t)-\Lc_t( f_t)|\lesssim\Gt_N(\Hc)+\sum_{\ell=1}^L\Gt_N(\Psi_\ell)+\sqrt{\frac{\log(2T/\delta)}{N}}.\label{worst guarantee}
\end{align}
Let us call this intersection event $\Ec_\all$. Intersecting this with the events $\min_{f_t\in\Fc_{\text{IND}}} \Lch_{\Sc_t}(f_t)=0$ for $t\in [T]$, we exactly end up with \eqref{individual bounds}. Thus, the statement is indeed what one would obtain by union bounding individualized training.

To proceed, we argue that same bound holds when solving \eqref{formula:multipath}. We know \eqref{worst guarantee} holds for all $f_t$ chosen from $\Fc_\text{IND}$, therefore it holds for $\hat f_t$, $t\in[T]$. Consider its intersection with the event $\P(\widehat\Lc_{\Scb}(\hat\f)\neq0)$. Given that $\Lch_{\Sc_t}(\hat f_t)=0$, we obtain $\Rt(\hat f_t)\leq \Lc_t(\hat f_t)$ upper bounded by the RHS of \eqref{worst guarantee}.

\end{proof}

\subsection{Proof of Theorem~\ref{thm:tfl}}
\begin{theorem}[Theorem~\ref{thm:tfl} restated]\label{thm:tfl restate} Suppose Assumptions 1\&2 hold. Let supernet $\hat\bphi$ be the solution of \eqref{formula:multipath} and $\hat f_{\hat{\bphi}}$ be the empirical minima of \eqref{formula:transfer} with respect to supernet $\hat\bphi$. Let $D_\Xc=\sup_{\x\in\Xc,\alpha\in\Ac,h\in\Hc_\tgt}|h\circ\hat\bphi_\alpha(\x)|<\infty$. Then with probability at least $1-\delta$,
    \begin{align*}
        \RTFL(\hat f_{\hat\bphi})\leq\Bias_{\tgt}(\hat\bphi)+768\Gamma\left(\frac{D_\Xc}{M}+D_\Xc\sqrt{\frac{\log|\Ac|}{M}}+\log M\cdot\widetilde\Gc_{M}(\Hc_\tgt)\right)+2\sqrt{\frac{\log\frac{2}{\delta}}{M}},
    \end{align*}
    where input space of $\Gt_M(\Hc_\tgt)$ is given by $\{\hat\bphi_\alpha\circ\Xc\bgl \alpha\in\Ac\}$.
\end{theorem}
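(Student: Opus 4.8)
The plan is to specialize the empirical-process machinery of Theorem~\ref{thm:main restate} to the transfer problem, which is nothing but a single-task instance with $M$ samples in which the representation $\hat\bphi$ is frozen and only the pathway $\alpha\in\Ac$ and the head $h\in\Hc_\tgt$ are free. First I would invoke the decomposition \eqref{TFL risk},
\[
\RTFL(\hat f_{\hat\bphi})=\big[\Lc_\tgt(\hat f_{\hat\bphi})-\Lc_\tgt(f^\star_{\hat\bphi})\big]+\big[\Lc_\tgt(f^\star_{\hat\bphi})-\Lc^\st_\tgt\big],
\]
and note that since $f^\star_{\hat\bphi}$ is a population minimizer of \eqref{formula:transfer} over $h\in\Hc_\tgt,\alpha\in\Ac$, the second bracket equals $\min_{h\in\Hc_\tgt,\alpha\in\Ac}\Lc_\tgt(h\circ\hat\bphi_\alpha)-\Lc^\st_\tgt=\Bias_\tgt(\hat\bphi)$ by Definition~\ref{def:model distance}. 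Hence only the ``variance'' bracket needs to be bounded, and the whole argument is conditional on the (arbitrary, then later $\hat\bphi$) supernet.

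For the variance term, let $\Qc_\tgt:=\{h\circ\hat\bphi_\alpha\bgl h\in\Hc_\tgt,\alpha\in\Ac\}$ be the class actually searched in \eqref{formula:transfer}. Since $\hat f_{\hat\bphi}$ minimizes the empirical risk over $\Qc_\tgt$ while $f^\star_{\hat\bphi}$ minimizes the population risk over $\Qc_\tgt$, the standard ERM comparison (add and subtract $\widehat\Lc_\tgt(\hat f_{\hat\bphi})$ and $\widehat\Lc_\tgt(f^\star_{\hat\bphi})$ and use empirical optimality of $\hat f_{\hat\bphi}$) gives $\Lc_\tgt(\hat f_{\hat\bphi})-\Lc_\tgt(f^\star_{\hat\bphi})\le 2\sup_{f\in\Qc_\tgt}|\Lc_\tgt(f)-\widehat\Lc_\tgt(f)|$. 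Running the symmetrization/contraction step used for Theorem~\ref{thm:main restate} --- Theorem~11.3 of \cite{mohri2018foundations} under Assumption~\ref{assum:lip2}, followed by the Rademacher-to-Gaussian comparison of \cite{wainwright2019high} --- shows that with probability at least $1-\delta$ this supremum is $\lesssim\Gamma\,\Gc_M(\Qc_\tgt)+\sqrt{\log(2/\delta)/M}$.

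It remains to bound the Gaussian complexity $\Gc_M(\Qc_\tgt)$, for which I would reuse the Dudley entropy-integral step from the proof of Theorem~\ref{thm:main restate}; it collapses dramatically because no module has to be learned. Since $\Ac$ is discrete, $\log\Nc(u;\Qc_\tgt,\rho_{\X_\tgt})\le\log|\Ac|+\max_{\alpha\in\Ac}\log\Nc(u;\Hc_\tgt,\rho_{\hat\bphi_\alpha(\X_\tgt)})$; and for each fixed $\alpha$ one has $\rho_{\X_\tgt}(h\circ\hat\bphi_\alpha,h'\circ\hat\bphi_\alpha)=\rho_{\hat\bphi_\alpha(\X_\tgt)}(h,h')$ exactly, so no Lipschitz factor from the heads enters --- which is why the final bound carries a single $\Gamma$ rather than $\GAMMA$. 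Sudakov minoration (Theorem~5.30 and Lemma~5.5 of \cite{wainwright2019high}) then bounds $\log\Nc(u;\Hc_\tgt,\rho_{\hat\bphi_\alpha(\X_\tgt)})$ by $(2\sqrt{M}\,\widehat\Gc_{\hat\bphi_\alpha(\X_\tgt)}(\Hc_\tgt)/u)^2\le(2\sqrt{M}\,\Gt_M(\Hc_\tgt)/u)^2$, where $\Gt_M(\Hc_\tgt)$ is the worst-case complexity over the input space $\{\hat\bphi_\alpha\circ\Xc\bgl\alpha\in\Ac\}$. Substituting into Dudley's bound, truncating at $\eps\asymp D_\Xc/M$ and integrating $\int_{\eps/4}^{2D_\Xc}u^{-1}\,\d u=\order{\log M}$, gives $\Gc_M(\Qc_\tgt)\lesssim\frac{D_\Xc}{M}+D_\Xc\sqrt{\frac{\log|\Ac|}{M}}+\log M\cdot\Gt_M(\Hc_\tgt)$. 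Combining this with the bias identification and the ERM comparison, and tracking the explicit constants as in Theorem~\ref{thm:main restate}, produces the stated bound with constant $768\Gamma$.

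There is essentially no obstacle here: the hard work is inherited from Theorem~\ref{thm:main restate}. The two points requiring care are (i) checking that the population minimum of \eqref{formula:transfer} realizes exactly the minimum appearing in Definition~\ref{def:model distance}, so that the bias bracket collapses to $\Bias_\tgt(\hat\bphi)$; and (ii) correctly identifying the relevant input space $\{\hat\bphi_\alpha\circ\Xc\bgl\alpha\in\Ac\}$ for the head complexity, since distinct pathways present $\Hc_\tgt$ with distinct latent features. A cleaner alternative to the Dudley step would be to union-bound the fixed-pathway Rademacher bound over the $|\Ac|$ elements of $\Ac$ directly, which yields the same $\sqrt{\log|\Ac|/M}$ overhead; either route works, but the Dudley route keeps the presentation uniform with the rest of the appendix.
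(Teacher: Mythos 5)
Your proposal is correct and follows essentially the same route as the paper's proof: the same variance/bias decomposition with the bias bracket identified as $\Bias_\tgt(\hat\bphi)$, the same symmetrization and Rademacher-to-Gaussian step, and the same Dudley argument in which the discrete pathway set contributes $\log|\Ac|$ to the metric entropy and the exact identity $\rho_{\X}(h\circ\hat\bphi_\alpha,h'\circ\hat\bphi_\alpha)=\rho_{\hat\bphi_\alpha(\X)}(h,h')$ reduces everything to the head class over the input space $\{\hat\bphi_\alpha\circ\Xc\bgl\alpha\in\Ac\}$. No gaps to report.
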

\begin{proof}
	For short notation, let $\Hc:=\Hc_\tgt$. 
	We consider the transfer learning problem over a target task, with distribution $\Dc_\tgt$ and training dataset $\Sc_\tgt=\{(\x_i,y_i)\}_{i=1}^M$ with $M$ samples i.i.d. drawn from $\Dc_\tgt$. 
	Let $\hat\bphi$ and $\bphi^\star$ denote the empirical and population solution of (\ref{formula:multipath}). 
	Then, we can recap the excess transfer learning risk 
\begin{align}
    \RTFL&(\hat f_{\hat\bphi})=\Lc_{\tgt}(\hat f_{\hat\bphi})-\Lc_{\tgt}^\st=\underset{\text{variance}(a)}{\underbrace{\Lc_{\tgt}(\hat f_{\hat\bphi})-\Lc_{\tgt}(f^\star_{\hat\bphi})}}+\underset{\text{\bias}(b)}{\underbrace{\Lc_{\tgt}( f^\star_{\hat\bphi})-\Lc_{\tgt}^\st}}.\nn
\end{align}
Following Definition~\ref{def:model distance}, $b=\Bias_\Tc(\hat\bphi)$, and it remains to bound variance $(a)$. Let $\hat f_{\hat\bphi}:=(\hat h_{\hat\bphi},\hat\alpha_{\hat\bphi})$ and $f^\st_{\hat\bphi}:=(h^\st_{\hat\bphi},\alpha^\st_{\hat\bphi})$. For short notations, we remove the subscript $\hat\bphi$, and we assume supernet $\hat\bphi$ is implied. Following the similar statements in Appendix~\ref{app:proof mtl}, we can decompose variance as follows.
\begin{align*}
	a=\Lc_{\tgt}(\hat f)-\Lc_{\tgt}(f^\st)=\underset{c}{\underbrace{\Lc_{\tgt}(\hat f)-\widehat\Lc_{\tgt}(\hat f)}}+\underset{d}{\underbrace{\widehat\Lc_{\tgt}(\hat f)-\widehat\Lc_{\tgt}(f^\st)}} +\underset{e}{\underbrace{\widehat\Lc_{\tgt}(f^\st)-\Lc_{\tgt}(f^\st)}}
\end{align*}
where $\Lc_\tgt(f)=\E_{\Dc_\tgt}[\ell(h\circ\hat\bphi_\alpha(\x),y)]$ and $\widehat \Lc_\tgt(f)=\frac{1}{M}\sum_{i=1}^M\ell(h\circ\hat\bphi_\alpha(\x_i),y_i)$ where $f=(h,\alpha)$ and $(\x_i,y_i)\in\Sc_\tgt$. Since $\hat f$ minimizes the training loss given $\hat\bphi$, $d\leq0$. Let $\X$ denote the input dataset, that is, $\X=\{\x_i\}_{i=1}^M$. Same as Inequality~\eqref{RMTL gaussian bound} in Appendix~\ref{app:proof mtl}, we derive the similar result that with probability at least $1-\delta$,
\begin{align*}
	&\RTFL(\hat f)\leq\Bias_{\tgt}(\hat\bphi)+6\Gamma\Gc_{M}(\Hc\circ\hat\bphi(\Ac))+2\sqrt{\frac{\log\frac{2}{\delta}}{M}},\\~~\text{and}~~&\RTFL(\hat f)\leq\Bias_{\tgt}(\hat\bphi)+6\Gamma\widehat\Gc_{\X}(\Hc\circ\hat\bphi(\Ac))+6\sqrt{\frac{\log\frac{2}{\delta}}{M}},
\end{align*}
where $\widehat\Gc_{\X}(\Hc\circ\hat\bphi(\Ac))=\E_\g\left[\sup_{h\in\Hc,\alpha\in\Ac}\frac{1}{M}\sum_{i=1}^{M}g_ih\circ\hat\bphi_\alpha(\x_i)\right]$ and $\Gc_M(\Hc\circ\hat\bphi(\Ac))=\E_{\Dc_\tgt}\left[\widehat\Gc_{\X}(\Hc\circ\hat\bphi(\Ac))\right]$. Following the Definition~\ref{def:distance metric}, let $D=\sup_{h,h'\in\Hc,\alpha,\alpha'\in\Ac}\rho_{\X}(h\circ\hat\bphi_\alpha,h'\circ\hat\bphi_{\alpha'})\leq 2D_\Xc$. By applying the Dudley's theorem, and following the same statements in Appendix~\ref{app:proof mtl}, we obtain that given any $\eps\in[0, D]$
\begin{align*}
	\widehat\Gc_{\X}(\Hc\circ\hat\bphi(\Ac))\leq2\eps+\frac{32}{\sqrt{M}}\int_{\eps/4}^{D}\sqrt{\log\Nc\left(u;\Hc\circ\hat\bphi(\Ac),\rho_{\X}\right)}du.
\end{align*}
Now we need to decompose the covering number of $\Hc\circ\hat\bphi(\Ac)$ into the covering numbers of separate hypothesis sets $\Hc$ and $\Ac$. For short notations, let $\Hc(\Ac):=\Hc\circ\hat\bphi(\Ac)$ and $\Hc(\alpha):=\Hc\circ\hat\bphi_\alpha$, and we omit the subscript $\X$ from $\rho$. Since pathway set $\Ac$ is discrete with cardinality $|\Ac|$, the covering number of $\Hc(\Ac)$ is the product of covering number of $\Hc(\alpha)$ for all $\alpha\in\Ac$, and can be bounded by the $|\Ac|$ times product of the worst-case covering number of $\Hc(\alpha)$, that is $\Nc(u;\Hc(\Ac),\rho)=\Pi_{\alpha\in\Ac}\Nc(u;\Hc(\alpha),\rho)\leq\max_{\alpha\in\Ac}\Nc^{|\Ac|}(u;\Hc(\alpha),\rho)$. Logarithm of it results in $\log\Nc(u;\Hc(\Ac),\rho)\leq\log|\Ac|+\max_{\alpha\in\Ac}\Nc(u;\Hc(\alpha),\rho)$.  Now let $\Z_\alpha=\hat\bphi_\alpha(\X)=\{\hat\bphi_\alpha(\x_i):\x_i\in\X\}$, which is the set of latent inputs of prediction head. Then for any given $\alpha\in\Ac$, 
\begin{align*}
	&\rho_{\X}(h\circ\hat\bphi_\alpha,h'\circ\hat\bphi_{\alpha})=\sqrt{\frac{1}{M}\sum_{i=1}^M\left(h\circ\hat\bphi_\alpha(\x_i)-h'\circ\hat\bphi_{\alpha}(\x_i)\right)^2}=\sqrt{\frac{1}{M}\sum_{i=1}^M\left(h(\z_i)-h'(\z_i)\right)^2}=\rho_{\Z_\alpha}(h,h'),
\end{align*}
where $\z_i=\hat\bphi_\alpha(\x_i)$ and then $\Z_\alpha=\{\z_1,\dots,\z_M\}$. 
Such equality states that if pathway $\alpha$ is fixed, $u$-cover of head $\Hc$ results in $u$-cover of the prediction function, and simply, $\Nc(u;\Hc(\alpha),\rho_{\X})=\Nc(u;\Hc,\rho_{\Z_\alpha})$. 
Next, following the same statements in Appendix~\ref{app:proof mtl}, if we utilize the Sudakov minoration theorem in \cite[]{wainwright2019high}, we obtain $\sqrt{\log\Nc(u;\Hc,\rho_{\Z_\alpha})}\leq\frac{2\sqrt{M}}{u}\widehat\Gc_{\Z_\alpha}(\Hc)$. Finally, combining all we have together obtains
\begin{align*}
	\widehat\Gc_{\X}(\Hc(\Ac))&\leq2\eps+\frac{32}{\sqrt{M}}\int_{\eps/4}^{D}\sqrt{\log\Nc\left(u;\Hc(\Ac),\rho_{\X}\right)}du\leq2\eps+32D\sqrt{\frac{\log|\Ac|}{M}}+64\max_{\alpha\in\Ac}\widehat\Gc_{\Z_\alpha}(\Hc)\int_{\eps/4}^{D}\frac{1}{u}du\\
	&\leq2\eps+32D\sqrt{\frac{\log|\Ac|}{M}}+64\log\frac{4D}{\eps}\max_{\alpha\in\Ac}\widehat\Gc_{\Z'_\alpha}(\Hc)\leq 64\left(\frac{D}{M}+D\sqrt{\frac{\log|\Ac|}{M}}+\log M\max_{\alpha\in\Ac}\widehat\Gc_{\Z_\alpha}(\Hc)\right),
\end{align*}
by choosing $\eps=\frac{4D}{M}$. 

\noindent$\bullet$ \textbf{Input-dependent bound.} If we define the worst case \emph{empirical} Gaussian complexity based on the raw input data $\X$ and given supernet $\hat\bphi$, that is $C_\X^\Hc:=\max_{\alpha\in\Ac}\widehat\Gc_{\Z_\alpha}(\Hc)$, where $\Z_\alpha$ shows as above with respect to $\hat\bphi$ and $\alpha$, we have that with probability at least $1-\delta$,
\begin{align*}
	\RTFL(\hat f_{\hat\bphi})\leq\Bias_{\tgt}(\hat\bphi)+384\Gamma\left(\frac{D}{M}+D\sqrt{\frac{\log|\Ac|}{M}}+\log M\cdot C_\X^\Hc\right)+6\sqrt{\frac{\log\frac{4}{\delta}}{M}}.
\end{align*}

Furthermore, let input space be $\Xc$. If we define the worst case Gaussian complexity independent to the specific training dataset and supernet, that is, $\widetilde\Gc_M^{\Xc_\Hc}(\Hc):=\sup_{\Z\in\Xc_\Hc^M}\widehat\Gc_{\X}(\Hc)$, where $\Xc_\Hc=\{\hat\bphi_\alpha\circ\Xc|\alpha\in\Ac\}$, then we have that 
\begin{align*}
	\Gc_{M}(\Hc(\Ac))\leq64\left(\frac{D_\Xc}{M}+D_\Xc\sqrt{\frac{\log|\Ac|}{M}}+\log M\cdot\widetilde\Gc_M^{\Xc_\Hc}(\Hc)\right),
\end{align*}
which leads to the result that with probability at least $1-\delta$,
\begin{align*}
	\RTFL(\hat f_{\hat\bphi})\leq\Bias_{\tgt}(\hat\bphi)+384\Gamma\left(\frac{D_\Xc}{M}+D_\Xc\sqrt{\frac{\log|\Ac|}{M}}+\log M\cdot\widetilde\Gc_{M}(\Hc)\right)+2\sqrt{\frac{\log\frac{2}{\delta}}{M}}.
\end{align*}
Here input space of $\Hc$ is given by $\Xc_\Hc=\{\hat\bphi_\alpha\circ\Xc|\alpha\in\Ac\}$.
\end{proof}

\subsection{End-to-End Transfer Learning}\label{app:e2etransfer}
In this section, we present an end-to-end transfer learning guarantee based on task diversity. We start with two useful definitions: supernet distance and task diversity. Here, supernet distance has been mentioned in Section~\ref{sec:main tfl} and following provides the intact definition. It measures the performance gap of two supernets. Similar to the previous work \cite{chen2021weighted,tripuraneni2020theory,xu2021representation}, we define task diversity in Definition~\ref{def:task diversity}. It captures the similarity of target task to source tasks over a supernet by comparing their representation distance over it. Finally, using the task diversity argument, we can immediately obtain the theoretical guarantee for transfer learning risk.

\begin{definition}[Supernet Distance]\label{def:app supernet distance}Consider a transfer learning with optimal pathway \eqref{formula:transfer} problem. Recall the definitions $\Dc_\tgt$ and $\Hc_\tgt$ stated in Section~\ref{sec:setup}. Given two supernets $\bphi$ and $\bphi'$, define the supernet/representation distance of $\bphi$ from $\bphi'$ for a target $\tgt$ as
\begin{align*}
	\Dist_\tgt(\bphi;\bphi')=\Bias_\tgt(\bphi)-\Bias_\tgt(\bphi')=\min_{h\in\Hc_\tgt,\alpha\in\Ac}\Lc_\tgt(h\circ\bphi_\alpha)-\min_{h\in\Hc_\tgt,\alpha\in\Ac}\Lc_\tgt(h\circ\bphi'_\alpha).
\end{align*}	
\end{definition}
Here, we do not restrict the supernet distance to target task $\tgt$ only. Given source task $t\in[T]$, we can still define the corresponding supernet distance of $\bphi$ from $\bphi'$ as
\begin{align}
	\Dist_t(\bphi;\bphi')=\min_{h\in\Hc,\alpha\in\Ac}\Lc_t(h\circ\bphi_\alpha)-\min_{h\in\Hc,\alpha\in\Ac}\Lc_t(h\circ\bphi'_\alpha),\label{def:source supernet distance}
\end{align} 
and the hypothesis set for head is $\Hc$ instead.

\begin{definition}[Task Diversity]\label{def:task diversity}
    For any supernets $\bphi$ and $\bphi'$, given $T$ source tasks with distribution $(\Dc_t)_{t=1}^T$ and a target task with distribution $\Dc_\tgt$, we say that the source tasks are $(\nu,\epsilon)$-diverse over the target task for a supernet $\bphi'$ if for any $\bphi\in\Phi$,
    \begin{align*}
        \Dist_\tgt(\bphi;\bphi')\leq\left(\frac{1}{T}\sum_{t=1}^T\Dist_t(\bphi;\bphi')\right)/\nu+\epsilon,
    \end{align*}
    where we assume that head hypothesis sets $\Hc$, $\Hc_\tgt$ are implied for source and target distances.
\end{definition}
\begin{theorem}[End-to-end transfer learning]
	Suppose Assumption 1\&2 hold. Let supernet $\hat\bphi$ and $\bphi^\st$ be the empirical and population solutions of \eqref{formula:multipath} and $\hat f_{\hat\bphi}$ be the empirical minima of \eqref{formula:transfer} with respect to supernet $\hat\bphi$. Assume the source tasks used in Multipath MTL phase are $(\nu,\epsilon)$-diverse over target task $\tgt$ for the optimal supernet $\bphi^\st$. Then with probability at least $1-2\delta$,
	\begin{align*}
		\scalemath{0.9}{\RTFL(\hat f_{\hat\bphi})\lesssim\Bias_\tgt(\bphi^\st)+\frac{1}{\nu}\left(\Gt_N(\Hc)+\sum_{\ell=1}^L\sqrt{{\hat\K}_\ell}\Gt_{NT}(\Psi_\ell)+\sqrt{\frac{\log|\Ac|}{N}}\right)+\sqrt{\frac{\log|\Ac|}{M}}+\Gt_M(\Hc_\tgt)+\frac{1}{\nu}\sqrt{\frac{\log\frac{2}{\delta}}{NT}}+\sqrt{\frac{\log\frac{2}{\delta}}{M}}+\epsilon}.
	\end{align*}
	Here, the input spaces for $\Hc$, $\Psi_\ell$ and $\Hc_\tgt$ are same to the statements in Theorem~\ref{thm:main} and Theorem~\ref{thm:tfl}.
\end{theorem}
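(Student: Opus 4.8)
The plan is to chain three ingredients: Theorem~\ref{thm:tfl restate} (target-side variance plus the pathway/head cost), the definition of supernet distance together with the $(\nu,\epsilon)$-diversity assumption (to trade the target bias for an \emph{averaged source} quantity), and Theorem~\ref{thm:main} via \eqref{app main result 2} (to control that averaged source quantity by the excess MTL risk). First I would apply Theorem~\ref{thm:tfl restate} to the fixed supernet $\hat\bphi$: with probability at least $1-\delta$,
\[
\RTFL(\hat f_{\hat\bphi})\lesssim \Bias_\tgt(\hat\bphi)+\sqrt{\tfrac{\log|\Ac|}{M}}+\Gt_M(\Hc_\tgt)+\sqrt{\tfrac{\log(2/\delta)}{M}},
\]
so the whole task reduces to bounding $\Bias_\tgt(\hat\bphi)$ by MTL-phase quantities.

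Next I would decompose $\Bias_\tgt(\hat\bphi)=\Bias_\tgt(\bphi^\st)+\Dist_\tgt(\hat\bphi;\bphi^\st)$ via Definition~\ref{def:app supernet distance}, and invoke the diversity assumption (Definition~\ref{def:task diversity}) with $\bphi'=\bphi^\st$, $\bphi=\hat\bphi$ to get $\Dist_\tgt(\hat\bphi;\bphi^\st)\le \tfrac1\nu\cdot\tfrac1T\sum_{t=1}^T\Dist_t(\hat\bphi;\bphi^\st)+\epsilon$. The crucial observation is that the averaged source distance is at most the excess MTL risk. Indeed, since \eqref{formula:multipath}'s population objective separates over tasks once the supernet is fixed, the optimal supernet $\bphi^\st$ is exactly the minimizer of $\bphi\mapsto\tfrac1T\sum_t\min_{h,\alpha}\Lc_t(h\circ\bphi_\alpha)$, hence $\tfrac1T\sum_t\min_{h,\alpha}\Lc_t(h\circ\bphi^\st_\alpha)=\Lc^\st_\Dcb$; on the other hand, plugging the heads and pathways $(\hat h_t,\hat\alpha_t)$ already returned by \eqref{formula:multipath} gives $\tfrac1T\sum_t\min_{h,\alpha}\Lc_t(h\circ\hat\bphi_\alpha)\le \tfrac1T\sum_t\Lc_t(\hat h_t\circ\hat\bphi_{\hat\alpha_t})=\Lc_\Dcb(\hat\f)$. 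Subtracting, $\tfrac1T\sum_t\Dist_t(\hat\bphi;\bphi^\st)\le \Lc_\Dcb(\hat\f)-\Lc^\st_\Dcb=\RMTL(\hat\f)$.

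Then I would invoke Theorem~\ref{thm:main}, i.e.~\eqref{app main result 2}: with probability at least $1-\delta$, $\RMTL(\hat\f)\lesssim \Gt_N(\Hc)+\sum_{\ell=1}^L\sqrt{\hat\K_\ell}\Gt_{NT}(\Psi_\ell)+\sqrt{\tfrac{\log|\Ac|}{N}+\tfrac{\log(2/\delta)}{NT}}$. A union bound over the Theorem~\ref{thm:tfl restate} event and the Theorem~\ref{thm:main} event gives probability at least $1-2\delta$; assembling the displays — $\RTFL(\hat f_{\hat\bphi})\lesssim\Bias_\tgt(\bphi^\st)+\tfrac1\nu\RMTL(\hat\f)+\epsilon+\sqrt{\tfrac{\log|\Ac|}{M}}+\Gt_M(\Hc_\tgt)+\sqrt{\tfrac{\log(2/\delta)}{M}}$, then substituting the $\RMTL$ bound and splitting $\sqrt{a+b}\le\sqrt a+\sqrt b$ on the $N$-vs-$NT$ term — yields exactly the claimed inequality.

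The only genuinely delicate step is the identification $\tfrac1T\sum_t\min_{h,\alpha}\Lc_t(h\circ\bphi^\st_\alpha)=\Lc^\st_\Dcb$, which relies on the task-wise separability of the population \eqref{formula:multipath} objective after fixing the supernet; beyond that, the proof is bookkeeping — union bounds, the elementary $\sqrt{\cdot}$ splitting, and keeping the head classes $\Hc$ (source) and $\Hc_\tgt$ (target) consistently attached to the right distances throughout. I expect this identification, together with tracking the two distinct head hypothesis classes, to be the main (if modest) obstacle.
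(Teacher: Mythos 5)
Your proposal is correct and follows essentially the same route as the paper's proof: apply the transfer-learning bound (Theorem~\ref{thm:tfl}) to reduce to $\Bias_\tgt(\hat\bphi)$, decompose via the supernet distance, invoke $(\nu,\epsilon)$-diversity, bound $\frac1T\sum_t\Dist_t(\hat\bphi;\bphi^\st)$ by $\RMTL(\hat\f)$ using exactly the two observations you identify (suboptimality of $(\hat h_t,\hat\alpha_t)$ for the fixed $\hat\bphi$, and task-wise optimality of $(h^\st_t,\alpha^\st_t)$ for $\bphi^\st$), then apply Theorem~\ref{thm:main} and a union bound. The ``delicate'' identification you flag is precisely the step the paper also relies on, justified the same way by per-task separability of the population objective once the supernet is fixed.
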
 

\begin{proof}
	Recall Theorem~\ref{thm:tfl restate}. To state end-to-end transfer learning risk, we need to bound supernet bias $\Bias_\tgt(\hat\bphi)$. Following Definition~\ref{def:app supernet distance}, we have that $\Bias_\tgt(\hat\bphi)=\Dist_\tgt(\hat\bphi;\bphi^\st)+\Bias_\tgt(\bphi^\st)$. Next, from Definition~\ref{def:task diversity}, since we assume source tasks are $(\nu,\epsilon)$-diverse over target task $\tgt$ for the supernet $\bphi^\st$, we can obtain $\Dist_\tgt(\hat\bphi;\bphi^\st)\leq\left(\frac{1}{T}\sum_{t=1}^T\Dist_t(\hat\bphi;\bphi^\st)\right)/\nu+\epsilon$. To process, following \eqref{def:source supernet distance}, we have
	\begin{align*}
		\frac{1}{T}\sum_{t=1}^T\Dist_t(\hat\bphi;\bphi^\st)&=\frac{1}{T}\sum_{t=1}^T\left(\min_{h\in\Hc,\alpha\in\Ac}\Lc_t(h\circ\hat\bphi_\alpha)-\min_{h\in\Hc,\alpha\in\Ac}\Lc_t(h\circ\bphi^\st_\alpha)\right)\\
		&\leq\frac{1}{T}\sum_{t=1}^T\left(\Lc_t(\hat h_t\circ\hat\bphi_{\hat\alpha_t})-\Lc_t(h_t^\st\circ\bphi^\star_{\alpha_t^\star})\right)=\Lc_\Dcb(\hat\f)-\Lc^\st_\Dcb=\RMTL(\hat\f).
	\end{align*}
	Here, $(\{\hat h_t,\hat\alpha_t\}_{t=1}^T,\hat\bphi)$ and $(\{h_t^\st,\alpha^\st_t\}_{t=1}^T,\bphi^\st)$ are the empirical and population solutions of \eqref{formula:multipath}, and we set $\hat\f:=(\{\hat h_t,\hat\alpha_t\}_{t=1}^T,\hat\bphi)$. The inequality term holds from the fact that: 1) $\min_{h\in\Hc,\alpha\in\Ac}\Lc_t(h\circ\hat\bphi_\alpha)\leq\Lc_t(\hat h_t\circ\hat\bphi_{\hat\alpha_t})$, and 2) $\min_{h\in\Hc,\alpha\in\Ac}\Lc_t(h\circ\bphi^\st_\alpha)=\Lc_t(h_t^\st\circ\bphi^\star_{\alpha_t^\star})$ since $h_t^\st$ and $\alpha_t^\st$ can be seen as the optimal solutions given supernet $\bphi^\st$. Combining them together with Theorem~\ref{thm:main} and Theorem~\ref{thm:tfl} completes the proof.
\end{proof}






\section{\MP MTL under Subexponential Loss Functions}\label{subexp loss}

The goal of this section is proving an MTL result under unbounded loss functions (e.g.~least-squares). 
The high-level proof strategy is essentially a simplified version of proof of Theorem \ref{thm:main}, where we use a more naive covering argument for parametric classes that have $\order{\log(1/\eps)}$ covering numbers. For this reason, we will make some simplifications in the proof to avoid repetitions. Instead, we will highlight key differences such as how the concentration argument changes due to unbounded losses. We first make the following assumptions.
\begin{assumption} \label{subexp ass1}For any task distribution $(\x,y)\sim\Dc_t$ and for any task hypothesis $f_t\in\Fc_t$ (induced by $\alpha_t,h_t,\bphi$), we have that $\ell(y,f_t(\x))$ is a $\Xi$ subexponential random variable for some $\Xi>0$. Additionally, assume loss is a $\Gamma>0$ Lipschitz function of $\hat y=f_t(\x)$.
\end{assumption}
We also assume a standard covering assumption. Note that, unlike the proof of Theorem \ref{thm:main}, we focus on parametric classes and use data-agnostic covers.
\begin{assumption} \label{subexp ass2}All modules $\psi_\ell^k\in\Psi_\ell$ are $\Gamma$ Lipschitz and map $\psi_\ell^k:\R^{p_{\ell-1}}\rightarrow\R^{p_\ell}$. For the sake of simplicity assume $\psi_\ell^k(0)=0$ (e.g.~neural net layer with ReLU activation). Additionally, for any Euclidean ball of radius $R$, the covering dimension of $\Psi_\ell$ follows the parametric classes, namely, 
\[
\Nc(\eps;\Psi_\ell,R)\leq d_\ell\log(\frac{3R}{\eps}),
\]
where $d_\ell$ is the covering dimension of $\Psi_\ell$. Verbally, there exists a cover $\Psi^\eps_\ell$, $|\Psi^\eps_\ell|\leq\Nc(\eps;\Psi_\ell,R)$, such that for any $\tn{\x}\leq R$ and for any $\psi\in \Psi_\ell$, there exists $\psi'\in \Psi^\eps_\ell$ such that $\tn{\psi'(\x)-\psi(\x)}\leq \eps$. Additionally, let head $\Hc$ be $\Gamma$ Lipschitz and $d_\Hc$ be the covering dimension for $\Hc$. 
\end{assumption}

\begin{theorem} \label{thm subgauss}Suppose $\Xc\subset\Bc^p(R)$ and Assumptions \ref{subexp ass1} and \ref{subexp ass2} hold. Suppose we have a \eqref{formula:multipath} problem with $\NT$ samples in total where all training samples are independent, however, task sample sizes are arbitrary\footnote{In words, tasks don't have to have $N$ samples each. We will simply control the gap between empirical and population. If tasks have different sizes, then their population weights will similarly change.}. Note that, in the specific setting of Theorem \ref{thm:main}, we have $\NT=NT$ with identical sample sizes. {Assume that $\NT\gtrsim\DoF(\Fc)\log(\NT)+T\log|\Ac|$}. Declare population risk $\Lc_{\Dcb}(\f)=\E[\Lch_{\Scb}(\f)]$. We have that with probability at least $1-\delta$, for all \MP hypothesis $\f\in\Fc$
\begin{align}
|\Lch_{\Scb}(\f)-\Lc_{\Dcb}(\f)|\lesssim   \Xi \sqrt{\frac{L\cdot\DoF(\Fc)+T\log|\Ac|+\log(2/\delta)}{\NT}}.\label{non exp bound}
\end{align}
The right hand side bounds similarly hold for the excess risk $\RMTL(\hat\f)$ where $\hat\f$ is the ERM solution. {$\lesssim$ subsumes the logarithmic dependence on $R,\Gamma,L,\NT$.} The exact bound is below \eqref{exct dep}. Finally, if we solve \eqref{formula:multipath} with fixed pathway choices (rather than searching over $\Ac$), with same probability {and assuming $\NT\gtrsim\DoF(\Fc)\log(\NT)$} we have the simplified bound
\begin{align}
|\Lch_{\Scb}(\f)-\Lc_{\Dcb}(\f)|\lesssim   \Xi \sqrt{\frac{L\cdot\DoF(\Fc)+\log(2/\delta)}{\NT}}.\label{non exp bound2}
\end{align}
\end{theorem}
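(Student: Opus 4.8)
\begin{proofsk}
The plan is to swap the Rademacher/Gaussian-complexity machinery of Theorem~\ref{thm:main} for a direct \emph{discretize-then-union-bound} argument, which is what makes unbounded (subexponential) losses tractable and, simultaneously, upgrades the $\Gamma^{L}$ prefactor to a harmless $\log\Gamma$. First I would pin down the range of the intermediate features: since $\Xc\subset\Bc^p(R)$, every module is $\Gamma$-Lipschitz with $\psi_\ell^k(0)=0$, so the output of layer $\ell$ always lies in $\Bc^{p_\ell}(\Gamma^\ell R)$ and the input to any head lies in $\Bc^{p_L}(\Gamma^L R)$ (up to additive $\eps$-slack once we perturb modules, which I will absorb into the radius). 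This is the only place $\Gamma^{L}$ shows up, and it will only ever be seen through a logarithm.

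Next I would build a data-agnostic cover of $\Fc$. For each module $\psi_\ell^k$ take an $\eps$-net of $\Psi_\ell$ over $\Bc^{p_{\ell-1}}(O(\Gamma^{\ell-1}R))$, of cardinality at most $\exp(d_\ell\log(3\Gamma^{\ell-1}R/\eps))$ by Assumption~\ref{subexp ass2}; for each of the $T$ heads take an $\eps$-net of $\Hc$ over $\Bc^{p_L}(O(\Gamma^{L}R))$; and for the pathways simply enumerate $\Ac^T$, of size at most $|\Ac|^T$. Propagating the per-component errors through the $\Gamma$-Lipschitz composition exactly as in the proof of Theorem~\ref{thm:main} (telescoping layer by layer), replacing every component by its net element moves each task hypothesis $h_t\circ\bphi_{\alpha_t}$ uniformly over $\Xc$ by at most $\GAMMA\eps$, and hence—by the $\Gamma$-Lipschitzness of $\ell(\cdot,y)$ in its first slot—moves $\Lch_{\Scb}$ and $\Lc_{\Dcb}$ by at most $\Gamma\GAMMA\eps$ \emph{simultaneously for all} $\f\in\Fc$. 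The resulting cover $\Fc^\eps$ therefore satisfies
\[
\log|\Fc^\eps|\ \le\ T\log|\Ac|+\sum_{\ell=1}^{L}K_\ell\,d_\ell\log\!\Big(\tfrac{3\Gamma^{\ell-1}R}{\eps}\Big)+T\,d_\Hc\log\!\Big(\tfrac{3\Gamma^{L}R}{\eps}\Big)\ \lesssim\ T\log|\Ac|+\DoF(\Fc)\big(L\log\Gamma+\log(R/\eps)\big).
\]
Choosing $\eps=(\Gamma^{L}\NT)^{-1}$ keeps the discretization error $\Gamma\GAMMA\eps=O(1/\NT)$ at lower order while $\log(1/\eps)=O(L\log\Gamma+\log\NT)$, which the final $\lesssim$ absorbs.

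The concentration step is then elementary. For a \emph{fixed} $\f'\in\Fc^\eps$, $\Lch_{\Scb}(\f')=\tfrac1{\NT}\sum_{i=1}^{\NT}\ell(y_i,f'_{t_i}(\x_i))$ is an average of $\NT$ independent $\Xi$-subexponential variables (Assumption~\ref{subexp ass1}), so Bernstein's inequality for subexponential variables \cite[Theorem~2.8.1]{vershynin2018high} gives $\P\big(|\Lch_{\Scb}(\f')-\Lc_{\Dcb}(\f')|\ge u\big)\le 2\exp\!\big(-c\NT\min\{u^2/\Xi^2,\,u/\Xi\}\big)$. Union-bounding over $\Fc^\eps$ and setting $u\asymp\Xi\sqrt{(\log|\Fc^\eps|+\log(2/\delta))/\NT}$, the hypothesis $\NT\gtrsim\DoF(\Fc)\log\NT+T\log|\Ac|$ forces $u\lesssim\Xi$, so Bernstein is in its Gaussian regime and the linear $u/\Xi$ branch is dominated. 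A triangle inequality against the nearest cover element then yields \eqref{non exp bound} uniformly over $\f\in\Fc$, together with the explicit constant-tracking form recorded in the statement. The excess-risk bound for $\hat\f$ follows from the standard split $\RMTL(\hat\f)=[\Lc_{\Dcb}(\hat\f)-\Lch_{\Scb}(\hat\f)]+[\Lch_{\Scb}(\hat\f)-\Lch_{\Scb}(\f^\star)]+[\Lch_{\Scb}(\f^\star)-\Lc_{\Dcb}(\f^\star)]$, where the middle bracket is $\le 0$ by ERM optimality and the outer two are bounded by \eqref{non exp bound}. Finally, if pathways are fixed rather than optimized, we drop the $\Ac^T$ enumeration from the cover, removing the $T\log|\Ac|$ term and giving \eqref{non exp bound2}.

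The main obstacle is bookkeeping rather than a new idea, with two load-bearing points: (i) verifying that the exponential Lipschitz blow-up $\GAMMA\sim\Gamma^{L}$ is benign—it enters only through the range radii $\Gamma^\ell R$ and through $\log(1/\eps)$, contributing $O(L\log\Gamma)$ to the metric entropy, which is exactly what turns the $\Gamma^{L}$ of Theorem~\ref{thm:main} into the $\log\Gamma$ (absorbed by $\lesssim$) here; and (ii) checking that the stated sample-size condition places the subexponential deviation in its Gaussian regime, so that the result is a clean $\Xi\sqrt{(L\cdot\DoF(\Fc)+T\log|\Ac|)/\NT}$ with no leftover $1/\NT$ heavy-tail term.
\end{proofsk}
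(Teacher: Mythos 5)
Your proposal is correct and follows essentially the same route as the paper's proof: a data-agnostic $\eps$-cover of the modules, heads, and pathway assignments whose metric entropy is $\DoF(\Fc)(L\log\Gamma+\log(R/\eps))+T\log|\Ac|$, subexponential Bernstein plus a union bound over the cover (with the sample-size condition placing the deviation in its subgaussian regime), and a layer-by-layer Lipschitz perturbation argument with $\eps\asymp(\Gamma^{L}\NT)^{-1}$ so the discretization error is lower order. The only cosmetic difference is that the paper uses layer-dependent resolutions $\eps_\ell=\eps/\Gamma^{L-\ell+1}$ to get a $(L+1)\eps$ perturbation whereas you use a uniform $\eps$ and absorb the geometric factor $\GAMMA$ into the choice of $\eps$; both yield the same bound.
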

The theorem is automatically applicable to loss functions bounded by $\Xi>0$. Also note that, this theorem avoids exponential depth dependence compared to Theorem \ref{thm:main}. This is primarily because of the strong coverability of parametric classes which (essentially) applies a log operation to the Lipschitz constant of $\Fc$.

\begin{proof} Let $R_0=R$ and note that, at the $\ell$th layer, the input(output) space has radius $R_{\ell-1}(R_\ell)$ where $R_\ell=\Gamma^{\ell} R$. Let $\Psi_\ell$ denote the hypothesis set of the modules of $\ell$th layer. Fix an $\eps$ cover $\Fc_\eps$ for the sets $(\Psi^{K_\ell}_\ell)_{\ell=1}^L, \Hc^T$ and $\Ac^T$, such that $\Psi_\ell$ is covered according to its input space radius $R_{\ell-1}$ with resolution $\eps_\ell=\frac{\eps}{\Gamma^{L-\ell+1}}$, where $\ell$ is layer depth and prediction head is layer $L+1$. This implies that
\begin{align}
\log |\Fc_\eps|&\leq Td_\Hc\log\frac{3R_L}{\eps}+T\log|\Ac|+\sum_{\ell=1}^L K_\ell d_\ell\log\frac{3R_{\ell-1}}{\eps_\ell}\label{F decomposition}\\
&= (Td_\Hc+\sum_{\ell=1}^L K_\ell d_\ell)\log\frac{3R_L}{\eps}+T\log|\Ac|\\
&\leq \DoF(\Fc)\log\frac{3R_L}{\eps}+T\log|\Ac|\\
&=\DoF(\Fc)\left(\log\frac{3R}{\eps}+L\log \Gamma\right)+T\log|\Ac|.
\end{align}

\noi\textbf{$\bullet$ Step 1: Union bound over the cover.} We now show a uniform concentration argument over this cover. Since each sample is independent of others and each loss is $\Xi$ subexponential, using subexponential Bernstein inequality (e.g.~Prop 5.16 of \cite{vershynin2010introduction}), we have that
\[
\Pro\left(|\Lch_{\Scb}(\f)-\Lc_{\Dcb}(\f)|\geq \frac{t}{\sqrt{\NT}}\right)\leq 2\exp\left(-c\min\left\{\frac{t^2}{\Xi^2},\frac{t\sqrt{\NT}}{\Xi}\right\}\right)
\]
{Let $\eps=\frac{1}{\Gamma {(L+1)} \NT}$, and recall that we assumed $\NT\gtrsim \log |\Fc_\eps|+\tau$.} Now, setting $t\propto \sqrt{ \log |\Fc_\eps|+\tau}$ and union bounding over all $\f\in \Fc_\eps$, we find that, uniformly over $\Fc_\eps$, 
\begin{align}
\Pro\left(|\Lch_{\Scb}(\f)-\Lc_{\Dcb}(\f)|\geq \Xi\sqrt{\frac{\log |\Fc_\eps|+\tau}{\NT}}\right)\leq 2e^{-\tau}.\label{con bound}
\end{align}


\noi\textbf{$\bullet$ Step 2: Perturbation analysis.} Now that covering analysis is done, we proceed with controlling the perturbation. Let $\fb\in \Fc$ be a \MP MTL hypothesis. We choose $\fb'\in\Fc_\eps$ such that:
\begin{itemize}
\item $\f'$ chooses the same pathways.
\item $\f'$ chooses heads $(h'_t)_{t=1}^T$ and modules $(({\psi'}^k_\ell)_{k=1}^{K_\ell})_{\ell=1}^L$ such that these hypotheses are $\eps$ close over their respective input spaces to the hypotheses of $\f$ denoted by $(h_t)_{t=1}^T$ and modules $((\psi^k_\ell)_{k=1}^{K_\ell})_{\ell=1}^L$.
\end{itemize}
Fix an arbitrary $\x\in\Xc$ and task $t\in [T]$. Set the short-hand notation $\psit_\ell=\psi^{\alpha_t}_{\ell}$ and $\psit'_\ell={\psi'}^{\alpha_t}_{\ell}$. Along the pathway $\alpha_t$, define the functions
\[
f^\ell_t(\x)=\begin{cases}f_t(\x)\quad\text{if}\quad \ell=L+1,\\f'_t(\x)\quad\text{if}\quad \ell=0,\\h'_t\circ \psit'_L\circ\dots \psit'_{\ell+1}\circ\psit_{\ell}\circ\dots\circ\psit_1(\x)\quad\text{if}\quad 1\leq \ell\leq L.\end{cases}
\]
Let $\x_\ell=\psit_{\ell}\circ\dots\circ\psit_1(\x)$. Recall that, $\Psi_\ell$ is covered with resolution $\eps_\ell$. Now, through a standard perturbation decomposition, we find that
\begin{align}
|f_t(\x)-f'_t(\x)|&\leq \sum_{\ell=0}^{L} |f^{\ell+1}_t(\x)-f^\ell_t(\x)|\\
&\leq \sum_{\ell=0}^{L} |h'_t\circ \psit'_L\circ\dots \psit_{\ell+1}(\x_\ell)-h'_t\circ \psit'_L\circ\dots \psit'_{\ell+1}(\x_\ell)|\\
&\leq \sum_{\ell=0}^{L} \Gamma^{L-\ell}\eps_{\ell+1}\\
&={(L+1)}\eps.
\end{align}

This establishes that if tasks choose the same pathways, proposed $\eps$ cover ensures that for all $\x\in\Xc$ and task $t$, $\Fc_\eps$ is an ${(L+1)}\eps$ cover of $\Fc$. To conclude, using $\Gamma$ Lipschitzness of the loss function, we obtain
\begin{align}
&\Lc_{\Dcb}(\fb)-\Lc_{\Dcb}(\fb')\leq \sup_{\x,t} |\ell(y,f_t(\x))-\ell(y,f'_t(\x))|\leq \sup_{\x,t}\Gamma |f_t(\x)-f'_t(\x)|\leq \Gamma {(L+1)}\eps\\
&\Lch_{\Scb}(\fb)-\Lch_{\Scb}(\fb')\leq \sup_{\x,t} |\ell(y,f_t(\x))-\ell(y,f'_t(\x))|\leq \sup_{\x,t}\Gamma |f_t(\x)-f'_t(\x)|\leq \Gamma {(L+1)}\eps.
\end{align}
Combining with uniform concentration, we found that, for all $\fb\in \Fc$, with probability $1-\delta$,
\begin{align}
|\Lch_{\Scb}(\f)-\Lc_{\Dcb}(\f)|&\lesssim  \Xi \sqrt{\frac{\log |\Fc_\eps|+\log(2/\delta)}{\NT}}+\Gamma {(L+1)}\eps\\
&\lesssim  \Xi \sqrt{\frac{\DoF(\Fc)\left(\log\frac{3R}{\eps}+L\log \Gamma\right)+T\log|\Ac|+\log(2/\delta)}{\NT}}+\Gamma {(L+1)}\eps.
\end{align}
Recall that $\eps=\frac{1}{\Gamma {(L+1)} \NT}$, then we obtain the advertised uniform concentration guarantee
\begin{align}
|\Lch_{\Scb}(\f)-\Lc_{\Dcb}(\f)|\lesssim   \Xi \sqrt{\frac{\DoF(\Fc)\left(\log(3R\Gamma {(L+1)}\NT)+L\log {\Gamma}\right)+T\log|\Ac|+\log(2/\delta)}{\NT}}.\label{exct dep}
\end{align}
We get the simplified statement \eqref{non exp bound} after ignoring the log factors. Finally, \eqref{non exp bound2} arises by repeating above argument step-by-step while ignoring $|\Ac|$ term in \eqref{F decomposition}.
\end{proof}
\section{Proofs in Section~\ref{sec linear}}\label{app:proof linear}

\subsection{{A direct corollary of Theorem~\ref{thm:main} to linear representations}}
We start with a lemma that controls the worst-case Gaussian complexity of linear models. The proof is standard and stated for completeness.
\begin{lemma} [Linear models]\label{lemma:linear model} Let $\Bc\subset \R^{d\times p}$ be a set of matrices with operator norm bounded by a constant $C>0$ and let $\Xc\subset\Bc^p(R)$ (subset of $\ell_2$ ball of radius $R$) . Then
\[
\Gt_{n}^\Xc(\Bc)\leq CR\sqrt{\frac{dp}{n}}.
\] 
\end{lemma}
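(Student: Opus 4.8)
The plan is to reduce the supremum over $\Bc$ to a single linear functional of a Gaussian matrix and then control that matrix in expectation. Fix an input set $\Z=(\x_i)_{i=1}^n\in\Xc^n$. Using the Frobenius inner product $\langle A,B\rangle=\mathrm{tr}(A^{\top}B)$, I would first rewrite
\[
\sum_{i=1}^n \g_i^\top B\x_i=\Big\langle \sum_{i=1}^n \g_i\x_i^\top,\,B\Big\rangle=\langle \M,B\rangle,\qquad \M:=\sum_{i=1}^n \g_i\x_i^\top\in\R^{d\times p}.
\]
Since every $B\in\Bc$ satisfies $\|B\|\le C$ and hence $\|B\|_F\le C\sqrt{\min(d,p)}$, Cauchy--Schwarz gives $\sup_{B\in\Bc}\langle\M,B\rangle\le C\sqrt{\min(d,p)}\,\|\M\|_F$, so that
\[
\Gh_\Z(\Bc)=\frac1n\,\E_{\g}\Big[\sup_{B\in\Bc}\langle\M,B\rangle\Big]\le \frac{C\sqrt{\min(d,p)}}{n}\,\E_{\g}\|\M\|_F.
\]
(One could equivalently invoke operator/nuclear norm duality, $\sup_{\|B\|\le C}\langle\M,B\rangle=C\|\M\|_*$, together with $\|\M\|_*\le\sqrt{\min(d,p)}\,\|\M\|_F$; the Cauchy--Schwarz route avoids even that.)

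Next I would bound $\E\|\M\|_F$ through its second moment. Writing $\M_{jk}=\sum_{i=1}^n (\g_i)_j(\x_i)_k$ and using $\E[(\g_i)_j(\g_{i'})_j]=0$ for $i\ne i'$ and $\E[(\g_i)_j^2]=1$, all cross terms cancel and
\[
\E\|\M\|_F^2=\sum_{j=1}^d\sum_{k=1}^p\sum_{i=1}^n (\x_i)_k^2=d\sum_{i=1}^n\|\x_i\|^2\le d\,nR^2,
\]
where the final step uses $\x_i\in\Bc^p(R)$. Jensen's inequality then yields $\E\|\M\|_F\le R\sqrt{dn}$, and substituting back,
\[
\Gh_\Z(\Bc)\le \frac{C\sqrt{\min(d,p)}}{n}\cdot R\sqrt{dn}=CR\sqrt{\frac{d\min(d,p)}{n}}\le CR\sqrt{\frac{dp}{n}}.
\]
Because this estimate holds for every $\Z\in\Xc^n$, taking the supremum over $\Z$ gives $\Gt_n^\Xc(\Bc)\le CR\sqrt{dp/n}$, as claimed.

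There is no genuine obstacle in this argument; it is a routine Gaussian-complexity computation. The only points that need a little care are the duality/Cauchy--Schwarz step that converts $\sup_{B\in\Bc}\langle\M,B\rangle$ into a multiple of $\|\M\|_F$, and the bookkeeping that absorbs the factor $\sqrt{d\min(d,p)}$ into the stated $\sqrt{dp}$ (valid since $\min(d,p)\le p$). If a $\sqrt{d\min(d,p)}$ dependence were acceptable the bound could be stated slightly more sharply, but $CR\sqrt{dp/n}$ is exactly what is needed for the subsequent linear-representation results.
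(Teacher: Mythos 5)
Your proof is correct and follows essentially the same route as the paper's: rewrite the Gaussian sum as a trace/Frobenius inner product with $\sum_i \g_i\x_i^\top$, apply Cauchy--Schwarz using the Frobenius-norm bound implied by the operator-norm constraint, and control $\E\|\M\|_F$ via its second moment and Jensen. The only (harmless) difference is that you retain the slightly sharper factor $\sqrt{\min(d,p)}$ before relaxing it to $\sqrt{p}$, whereas the paper bounds $\|\B\|_F\le C\sqrt{p}$ directly.
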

\begin{proof} Set $\X_{\eps}=\sum_{i=1}^n\x_i\g_i^\top=\X^\top\Gb$ where $\X\in\R^{n\times p}$ is dataset and $\Gb\distas\Nn(0,1)\in\R^{n\times d}$. Applying Cauchy-Schwarz, we write
\begin{align}
\Gt_n^{\Xc}(\Bc)&= \frac{1}{n}\sup_{\X\in \Xc^n}\E\left[\sup_{\B\in\Bc} \sum_{i=1}^n\g_i^\top \B\x_i\right]=\frac{1}{n}\sup_{\X\in \Xc^n}\E\left[\sup_{\B\in\Bc} \text{trace}(\X_\eps\B)\right]\\
&{\leq C\frac{\sqrt{p}}{n}\sup_{\X\in \Xc^n}\E[\tf{\X_\eps}]\leq C\frac{\sqrt{p}}{n}\sup_{\X\in \Xc^n}\sqrt{\E[\tf{\X^\top\Gb}^2]}\leq CR\sqrt{\frac{dp}{n}}}.
\end{align}
\end{proof}
\begin{corollary}\label{cor linear app} Suppose {Assumptions~\ref{assum:lip2}\&\ref{bounded assume}} hold and input set $\Xc\subset \Bc^p(c\sqrt{p})\footnote{Observe that, this input space is rich enough to capture a random vector with $\order{1}$ subgaussian norm. For instance, a standard normal vector would fall into this set with exponentially high probability as soon as $c>1$.}$ for a constant $c>0$. Let $\hat\f$ be empirical solution of \eqref{LMP-MTL}. Then, with probability at least $1-\delta$, 
    \[
    \RMTL(\hat\f)\lesssim\sqrt{\frac{{p}\cdot\DoF(\Fc)}{NT}}+\sqrt{\frac{\log|\Ac|}{N}+\frac{\log(2/\delta)}{NT}},
    \]
    where $\DoF(\Fc)=T\cdot p_L+\sum_{\ell=1}^L K_\ell \cdot p_\ell \cdot p_{\ell-1}$ is the total number of trainable parameters in $\Fc$.
\end{corollary}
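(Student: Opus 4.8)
The plan is to read off Corollary~\ref{cor linear app} directly from Theorem~\ref{thm:main}, in its restated form~\eqref{app main result 2}, by specializing to the linear modules of Section~\ref{sec linear} and then estimating every worst-case Gaussian complexity through Lemma~\ref{lemma:linear model}. Under Assumption~\ref{bounded assume} each module is a matrix of operator norm at most $C$ and each head lies in $\Bc^{p_L}(C)$, so the Lipschitz constant is $\Gamma=C$, the depth factor $\GAMMA=\sum_{\ell=0}^L C^\ell$ is an absolute constant for fixed $L$, and $D_\Xc=\sup_{\x\in\Xc,h,\bphi,\alpha}|h\circ\bphi_\alpha(\x)|\le C^{L+1}c\sqrt p$ because $\Xc\subset\Bc^p(c\sqrt p)$; all three quantities are already hidden inside the $\lesssim$ of~\eqref{app main result 2}, so the corollary reduces to estimating $\Gt_N(\Hc)$ and the $\Gt_{NT}(\Psi_\ell)$.

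The one step requiring care is tracking the radii of the intermediate feature spaces. Since every module has operator norm at most $C$ and $\Xc\subset\Bc^p(c\sqrt p)$, the layer-$\ell$ input space $\Xc_{\Psi_\ell}=\Psi_{\ell-1}\circ\dots\circ\Psi_1\circ\Xc$ is contained in a Euclidean ball of radius $C^{\ell-1}c\sqrt p$ and the head input space $\Xc_\Hc=\Psi_L\circ\dots\circ\Psi_1\circ\Xc$ in a ball of radius $C^{L}c\sqrt p$. I would then invoke Lemma~\ref{lemma:linear model}: with output dimension $p_\ell$, ambient dimension $p_{\ell-1}$, radius $C^{\ell-1}c\sqrt p$, and $n=NT$ it gives $\Gt_{NT}(\Psi_\ell)\lesssim C^\ell c\sqrt{p}\,\sqrt{p_\ell p_{\ell-1}/(NT)}$; with output dimension $1$ (scalar-valued heads), ambient dimension $p_L$, radius $C^{L}c\sqrt p$, and $n=N$ it gives $\Gt_N(\Hc)\lesssim C^{L+1}c\sqrt{p}\,\sqrt{p_L/N}$.

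Substituting these into~\eqref{app main result 2}, using $\hat\K_\ell\le K_\ell$ and Cauchy--Schwarz in the form $\sum_{\ell}\sqrt{a_\ell}\le\sqrt{L\sum_\ell a_\ell}$ with $a_\ell=K_\ell p_\ell p_{\ell-1}$ (and bounding $C^\ell\le C^L$), the layer sum becomes $\sum_{\ell=1}^L\sqrt{\hat\K_\ell}\,\Gt_{NT}(\Psi_\ell)\lesssim\sqrt{pL\cdot\DoF(\Fc)/(NT)}$ since $\sum_{\ell}K_\ell p_\ell p_{\ell-1}\le\DoF(\Fc)$, while the head term is $\Gt_N(\Hc)\lesssim\sqrt{pTp_L/(NT)}\le\sqrt{p\cdot\DoF(\Fc)/(NT)}$ because $Tp_L\le\DoF(\Fc)$. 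Absorbing the constants $c,C,L$ into $\lesssim$, and noting that the residual $D_\Xc$-dependent terms from~\eqref{app main result 1}, namely $D_\Xc/(NT)$ and $D_\Xc\sqrt{\log|\Ac|/N}$, are subsumed by the two displayed terms (the latter using $\log|\Ac|=\sum_\ell\log K_\ell\le\DoF(\Fc)$), one lands on $\RMTL(\hat\f)\lesssim\sqrt{p\cdot\DoF(\Fc)/(NT)}+\sqrt{\log|\Ac|/N+\log(2/\delta)/(NT)}$, which is the claim.

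Essentially all of this is bookkeeping, so there is no real obstacle; the only point to stay attentive to is the geometric growth of the feature-space radii across the composed linear layers, since that is exactly what converts the single-model estimate $CR\sqrt{dp/n}$ of Lemma~\ref{lemma:linear model} into the $\sqrt p$ prefactor appearing in the statement. This is the sense in which the corollary is a ``direct'' but slightly lossy consequence of Theorem~\ref{thm:main}; the sharper $\sqrt L$ prefactor of Theorem~\ref{cor linear} is instead obtained through the data-agnostic covering argument of Theorem~\ref{thm subgauss}, which only pays a logarithmic price in the Lipschitz/radius constants.
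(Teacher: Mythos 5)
Your proposal is correct and follows essentially the same route as the paper's own proof: specialize the restated Theorem~\ref{thm:main} to linear modules, track the geometric growth of the intermediate feature-space radii ($C^{\ell-1}c\sqrt p$ at layer $\ell$), and bound each worst-case Gaussian complexity via Lemma~\ref{lemma:linear model}, absorbing the $C^L$, $L$, and $D_\Xc$ dependencies into $\lesssim$ exactly as the paper does. The only cosmetic difference is that you make the Cauchy--Schwarz step and the absorption of the residual $D_\Xc$ terms explicit, whereas the paper leaves these implicit in the final display.
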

\begin{proof}
    This proof is immediately done by following Theorem~\ref{thm:main} and Lemma~\ref{lemma:linear model}. Since $\Xc\subset\Bc^p(c\sqrt{p})$, and we assume $\Psi_\ell,\ell\in[L]$ have bounded operator norm $C$, for each layer, the input space $\Xc_{\Psi_\ell}\subset\Bc^{p_{\ell-1}}(C^{\ell-1}c\sqrt{p})$ and then following Lemma~\ref{lemma:linear model}, $\Gt_{NT}(\Psi_\ell)\leq C^\ell c\sqrt{p}\sqrt{\frac{p_{\ell}p_{\ell-1}}{NT}}$, $\ell\in[L]$.  Since $\Hc=\Bc^{p_L}(C)$ and $\Xc_\Hc\subset\Bc^{p_L}(C^{L}c\sqrt{p})$, we have $\Gt_N(\Hc)\leq C^{L+1}c\sqrt{p}\sqrt{\frac{p_L}{N}}$. Then we obtain
    \begin{align*}
        \Gt_N(\Hc)+\sum_{\ell=1}^L\sqrt{K_\ell}\Gt_{NT}(\Psi_\ell)\lesssim{{c\sqrt{p}}}\cdot{\sqrt{\frac{C^{L+1}\cdot T\cdot p_L+\sum_{\ell=1}^LC^\ell\cdot K_\ell\cdot p_\ell\cdot p_{\ell-1}}{NT}}}.
    \end{align*}
    Combining it with Theorem~\ref{thm:main} finishes the proof.
\end{proof}

\subsection{Proof of Theorem \ref{cor linear}}
{Corollary~\ref{cor linear app} directly follows by applying Theorem~\ref{thm:main} to the linear representation setting, and therefore $\lesssim$ subsumes dependencies on $\log NT$ and $\Gamma^L$. Instead in Theorem~\ref{cor linear} we establish a tighter bound for parametric hypothesis classes and the sample complexity is only logarithmic in the input space radius $R$ ($R=c\sqrt{p}$ in Corollary~\ref{cor linear app}) and linearly dependent on the number of layers $L$.\\
\begin{proof}
    The theorem is a direct application of Theorem~\ref{thm subgauss} after verifying the assumptions. First, bounded loss function $\ell:\R\times\R\to[0,1]$ implies it is sub-exponential, which verifies Assumption~\ref{subexp ass1}. Second, all module/head functions have bounded spectral/Euclidean norms, which verifies Assumption~\ref{subexp ass2}. One remark (compared to Theorem~\ref{thm subgauss}) is that, since the loss function is bounded, by applying Hoeffding's inequality, \eqref{con bound} holds without enforcing a lower bound constraint on $\NT$.
\end{proof}
}

\subsection{The Need for Well-Populated Source Tasks in Assumption \ref{mtldiverse}}
\begin{lemma}\label{e2e fail} {Consider a weaker version of Assumption \ref{mtldiverse} where we enforce $\bSi_\alpha\succeq c\Iden_{p_L}$ over all clusters \underline{with two or more tasks}}\footnote{The relaxation is not enforcing anything on pathways containing a single task.}  (i.e.~only when $\gamma_\alpha\geq 2/p_L$). Then, there exists a (\eqref{formula:multipath}, \eqref{formula:transfer}) problem pair such that the excess transfer learning risk obeys $\RTFL(\hat f_{\hat\bphi})\geq 1$ as soon as $N\geq p$.
\end{lemma}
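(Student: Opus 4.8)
The plan is to construct an explicit counterexample showing that the weakened diversity assumption is insufficient: a multipath MTL instance together with a transfer task for which the ERM solution of \eqref{formula:multipath} can legitimately pick a supernet that is useless for the target. The key idea is that a pathway chosen by only a \emph{single} source task is essentially unconstrained by Assumption~\ref{mtldiverse} in its weakened form, so the corresponding module(s) along that pathway can be fit to interpolate that one task's data without committing to anything resembling the ground-truth representation. If the transfer task's pathway (drawn per Assumption~\ref{transfer dist}) happens to land on such a singleton pathway, then the representation $\hat\bphi_{\alpha_\tgt}$ carries no information about $\bt^\st_\tgt$, forcing a constant-order bias.

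Concretely, I would take a simple architecture: $L=1$ (or a two-layer supernet with a trivial shared first layer and $K_2=K$ modules), output dimension $p_L=1$ say, and $K$ modules in the final layer. Set up $K$ source tasks, one per module, so that $\bar\alpha_t = t$ and every cluster $\Hb_\alpha$ contains exactly one task — hence $\gamma_\alpha = 1/p_L < 2/p_L$ for all $\alpha$, and the weakened Assumption~\ref{mtldiverse} is vacuously satisfied (no cluster has two or more tasks). Generate the ground-truth task vectors $\bt^\st_t = \bar{\B}^{\alpha_t}{}^\top \bar{\h}_t$ per Assumption~\ref{linear dist}. Now, because each module $\B_\ell^k$ has, say, $p$ free parameters (a row vector in $\R^p$ when $p_L=1$) and the single task using it provides $N$ equations, as soon as $N \geq p$ (actually one needs the data matrix to be full rank, which holds w.h.p. or one can just pick deterministic full-rank features) the task can be interpolated by \emph{many} different choices of $\B_\ell^k$ — in particular by a choice that is orthogonal to, or otherwise unrelated to, the true $\bar\B_\ell^k$ in every direction not pinned down by the $N$ samples. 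Since $NT = NK$ scales with the number of parameters and each task sees its own module, \eqref{formula:multipath} achieves zero empirical (in fact optimal population, since $\bt^\st_t$ is realizable) risk while its supernet $\hat\bphi$ can be made adversarially bad: pick $\hat\B^k$ so that $\hat\B^k$ is not proportional to $\bar\B^k$, hence $\mathrm{rowspace}(\hat\B^k)$ does not contain $\bt^\st_k$'s direction in the relevant sense, so no choice of head $h$ recovers $\bt^\st_k$ through $\hat\bphi_k$. Then the supernet bias $\Bias_\tgt(\hat\bphi)$ is $\Omega(1)$: for a target task drawn with pathway $\alpha_\tgt$ uniform over $\{1,\dots,K\}$ (Assumption~\ref{transfer dist}) and $\bt^\st_\tgt = \bar\B^\top_{\alpha_\tgt}\h_\tgt$, the best achievable target risk over $h\in\Hc_\tgt, \alpha\in\Ac$ using $\hat\bphi$ is bounded away from $\Lc^\st_\tgt$ because $h\circ\hat\bphi_\alpha$ ranges over a $p_L$-dimensional family that misses $\bt^\st_\tgt$. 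A clean quantitative version: arrange things so $\|\bt^\st_\tgt - \hat\B^\top_\alpha h\|^2 \geq 1$ for all $\alpha, h$, which with $\E[\x\x^\top]=\Iden_p$ and square loss gives excess risk $\geq 1$.

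The main obstacle — and the part deserving the most care — is making the "adversarial interpolating solution" genuinely \emph{optimal} for \eqref{formula:multipath} (not merely feasible), so that ERM is actually entitled to return it, and simultaneously ensuring the target-risk lower bound is robust to the transfer learner's freedom to optimize both $h\in\Hc_\tgt$ \emph{and} $\alpha\in\Ac$. For optimality: since the data is realizable and noiseless under Assumption~\ref{linear dist}, one exhibits that the adversarial $\hat\f$ attains exactly $\Lc^\st_\Dcb = 0$, hence ties the true supernet; one must also respect the normalization/operator-norm constraints in Assumptions~\ref{bounded assume} and \ref{linear dist}(B) when choosing $\hat\B^k$ (easy: rescale and use $C\geq 1$). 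For the target lower bound: because $p_L$ is small (take $p_L=1$), the reachable set $\{h\circ\hat\bphi_\alpha : h\in\Hc_\tgt,\alpha\in\Ac\}$ for a fixed target is a union over the $K$ pathways of $1$-dimensional subspaces; choosing the $\hat\B^k$'s (and the true $\bar\B^k$, $\bar\h_t$) so that for \emph{every} $k$, the true target direction $\bar\B^\top_{\alpha_\tgt}\h_\tgt$ has squared distance $\geq 1$ from \emph{all} these $K$ lines requires a small linear-algebra construction — e.g. take $p$ large enough, $\bar\B^k$'s with mutually near-orthogonal rows, and $\hat\B^k$'s supported on a disjoint coordinate block from where the target lives. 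I would finish by writing out this construction concretely for, say, $p \geq 2K+1$, verifying the norm constraints, verifying $\RMTL(\hat\f)=0$, and computing $\RTFL(\hat f_{\hat\bphi}) = \Lc_\tgt(\hat f_{\hat\bphi}) - \Lc^\st_\tgt \geq 1$ directly from the Gaussian/identity-covariance assumption on $\x$, which holds for every realization of the target pathway, hence also in expectation.
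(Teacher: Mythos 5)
Your high-level strategy is the same as the paper's: populate the supernet with singleton-pathway source tasks so the weakened diversity condition is vacuous, exhibit an ERM-optimal supernet whose pathway representations carry no information about the target direction, and conclude a constant lower bound on the transfer risk. However, your concrete instantiation has a genuine flaw that would make the proof collapse. With $p_L=1$ and $N\geq p$ full-rank noiseless data, the linear system for task $t$ is \emph{over}determined, not underdetermined: the composite vector $\hat h_t\,(\hat\B^k)^\top$ is forced to equal $\bt^\star_t=\pm(\bar\B^k)^\top$ exactly, so $\hat\B^k$ must be proportional to $\bar\B^k$. The learned one-dimensional row space then coincides with the ground-truth one, and since Assumption~\ref{transfer dist} places the target in that same row space ($\bt^\st_\tgt=\bar\B^\top_{\alpha_\tgt}\h_\tgt$ with $\h_\tgt$ a scalar when $p_L=1$), transfer succeeds and $\RTFL=0$. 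Relatedly, your suggestion to "pick $\hat\B^k$ not proportional to $\bar\B^k$ so that no head recovers $\bt^\st_k$" is self-defeating: if the row space of $\hat\B^k$ misses the direction of the source task $\bt^\st_k$ that uses module $k$, then that task cannot be interpolated and $\hat\f$ is not an empirical risk minimizer, so ERM is not entitled to return it.

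The missing ingredient is that the freedom you need does not come from $N<p$ underdetermination (the lemma explicitly takes $N\geq p$) but from the \emph{rank deficiency of the factorization} when $p_L\geq 2$: a single source task imposes only the $p$ equations $\hat\B_{\alpha_t}^\top\h_t=\bt^\star_t$ on the $p_L\cdot p$ entries of the module, so the rank-one, minimum-norm choice $\hat\B_{\alpha_t}=\h_t(\bt^\star_t)^\top$ interpolates the source while its row space is only $\mathrm{span}(\bt^\star_t)$, missing the remaining $p_L-1$ dimensions of the ground-truth row space of $\bar\B_{\alpha_t}$. This is exactly what the paper does: it then places the target in one of those missed directions (orthogonal to every $\bt^\star_t$ but still of the form $\bar\B^\top_{\alpha_\tgt}\h_\tgt$), so that for every pathway $t$ and every head the prediction is a multiple of $(\bt^\star_t)^\top\x$, giving $\E[(\bt_\Tc^\top\x-c\,(\bt^\star_t)^\top\x)^2]=\|\bt_\Tc-c\,\bt^\star_t\|^2\geq\|\bt_\Tc\|^2=1$ by orthogonality and isotropy. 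To repair your proof, replace $p_L=1$ by $p_L\geq 2$, drop the "disjoint coordinate block" construction (the adversarial module must still contain $\bt^\star_k$ in its row space), and take the rank-one interpolant as the adversarial ERM solution; the rest of your outline (checking the norm constraints, verifying zero MTL risk, and taking the lower bound uniformly over the learner's choice of $\alpha$ and $h$) then goes through as in the paper.
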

\begin{proof} The idea is packing supernet with isolated MTL tasks that are uncorrelated with target while achieving zero MTL risk. We consider a simple supernet construction where $T$ tasks will be processed in parallel and all layers have exactly $K_\ell=T$ modules. Specifically, task $t$ will use the pathway $\alpha_t=[t,t,\dots,t]$ by selecting $t$th module from each layer. This way each task will use a unique pathway and supernet will be fully occupied. Set noise level $\sigma=0$. Observe that as soon as $N\geq p$, $\bt^\star_t$ minimizes both empirical and population risks. Consequently, for any $\tn{\bar{\h}_t}=1$, $\B_{\alpha_t}=\h_t (\bt^\star_t)^\top$ is a valid (and minimum norm) minimizer of empirical and population risks. Here, we highlight the minimum norm aspect because this solution is what gradient descent would converge during MTL phase (while we acknowledge the existence of infinitely-many solutions) \cite{ji2018gradient}. To wrap up the proof, suppose transfer task is orthogonal to all source tasks and observe that, regardless of the transfer prediction head $\hat\h_\Tc$ and pathway choice $t$, we have
\begin{align*}
\RTFL(\hat {f}_{\hat\bphi})&=\E\left[(y-\hat {f}_{\hat\bphi}(\x))^2\right]=\E\left[(\bt_\Tc^\top \x-\hat\h_\Tc^\top \h_t (\bt^\star_t)^\top\x )^2\right]\\
&\geq \tn{\bt_\Tc-(\hat\h_\Tc^\top \h_t) \bt^\star_t}^2\geq \tn{\bt_\Tc}^2=1.
\end{align*}
This concludes the proof. We note that, if $\sigma\neq 0$ same argument would work as $N\rightarrow\infty$. Additionally, through same argument with $\sigma=0$, it can be observed that, a more general lower bound on excess transfer risk is $\min_{t\in [T]}\tn{\bt_\Tc-\bt_t^\star}^2/2$.
\end{proof}

\subsection{Proof of Theorem \ref{e2e thm} and Supporting Results}

We start with a useful lemma to show excess risk of linear least squares problem with dependent noise.
\begin{lemma}[Linear least squares risk with dependent noise] \label{simple LS}Let $\Sc=(\x_i,y_i)_{i=1}^n\distas\Dc$ where $y=\bt^\top \x+z$ where $\x$ is $\order{1}$ subgaussian vector with isotropic covariance and $z$ is $\order{\sigma}$ subgaussian noise. Here, we assume that $\x\& z$ can be dependent, however, orthogonal (i.e. $\E[\x z]=0$). Let $\X=[\x_1~\cdots~\x_n]^\top\in\R^{n\times p}$ and $\X^\dagger$ be the Moore-Penrose pseudoinverse of $\X$. Let $\wedge$ be the minimum symbol. For $n\geq Cp$ for a sufficiently large constant $C\geq 1$, the excess least squares risk and population-empirical risk gap of $\hat\bt=\X^\dagger\y$ is given by
    \begin{align}
    \Lc_{\Dc}(\hat\bt)-\sigma^2\leq C\sigma^2\frac{p+t}{n}~~~~~~&\text{with probability at least $\prb$}\label{prb1}\\
    \Lc_{\Dc}(\hat\bt)-\Lch_{\Sc}(\hat\bt)\leq C\sigma^2\left(\frac{p}{n}+\sqrt{\frac{t}{n}}\right)~~~~~~&\text{with probability at least $\prbb$}.\label{prb2}
    \end{align}
    \end{lemma}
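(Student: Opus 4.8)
The plan is to prove Lemma~\ref{simple LS} via a standard bias-variance decomposition for the ordinary least squares estimator, taking care of the dependence between $\x$ and $z$. First I would write $\hat\bt=\X^\dagger\y=\X^\dagger(\X\bt+\z)=\bt+\X^\dagger\z$ on the event that $\X$ has full column rank (which holds whp once $n\gtrsim p$), where $\z=(z_1,\dots,z_n)^\top$. The excess population risk of a linear predictor $\hat\bt$ is $\Lc_\Dc(\hat\bt)-\sigma^2=\E_{\x}[(\x^\top(\hat\bt-\bt))^2]-(\text{cross term})$; here the orthogonality assumption $\E[\x z]=0$ is exactly what kills the cross term between the residual $z$ and $\x^\top(\hat\bt-\bt)$ at the population level, so the excess risk reduces to $\|\bSi^{1/2}(\hat\bt-\bt)\|^2$ where $\bSi=\E[\x\x^\top]=\Iden_p$, i.e. $\|\X^\dagger\z\|^2$.

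Next I would control $\|\X^\dagger\z\|^2=\z^\top(\X\X^\top)^\dagger\z$ (using $\X^\dagger(\X^\dagger)^\top=(\X^\top\X)^{-1}$ on the full-rank event). The key quantitative inputs are: (i) a non-asymptotic bound on the extreme singular values of $\X$, namely $\|(\X^\top\X)^{-1}\|\lesssim 1/n$ with probability $1-2e^{-cn}$ for $n\geq Cp$, from standard subgaussian matrix concentration (e.g.~Vershynin); and (ii) a bound on $\|\X^\top\z\|$. For (ii) I would use a Hanson-Wright-type / subexponential Bernstein argument on the entries $\x_i^\top\z$-structured sums: $\X^\top\z=\sum_i z_i\x_i$ is a sum of independent mean-zero $\order{\sigma}$-subexponential random vectors (each coordinate of $z_i\x_i$ is a product of subgaussians, hence subexponential), so $\|\X^\top\z\|\lesssim\sigma\sqrt{np}+\sigma\sqrt{t}\cdot(\text{lower order})$ holds with probability at least $1-2e^{-\sqrt{tn}\wedge t}$ — this is where the unusual tail form $e^{-\sqrt{tn}\wedge t}$ in the statement comes from (the subexponential regime of Bernstein). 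Combining, $\|\X^\dagger\z\|^2\leq\|(\X^\top\X)^{-1}\|\cdot\|\X^\top\z\|^2/\dots$ — more carefully, $\z^\top(\X\X^\top)^\dagger\z\leq\|(\X\X^\top)^\dagger\|\|P_\X\z\|^2$ where $P_\X$ is the projector onto the column space of $\X$, and $\|P_\X\z\|^2\lesssim\sigma^2(p+t)$ by a $\chi^2$-type concentration on the $p$-dimensional projection (again Bernstein in the dependent-but-subexponential setting), while $\|(\X\X^\top)^\dagger\|\lesssim 1/n$. This yields $\Lc_\Dc(\hat\bt)-\sigma^2\lesssim\sigma^2(p+t)/n$, i.e.~\eqref{prb1}, after a union bound over the two failure events giving the stated probability $\prb$.

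For the second inequality \eqref{prb2}, the population-empirical gap, I would write $\Lch_\Sc(\hat\bt)=\frac1n\|\y-\X\hat\bt\|^2=\frac1n\|\z-\X\X^\dagger\z\|^2=\frac1n\|(\Iden-P_\X)\z\|^2=\frac1n(\|\z\|^2-\|P_\X\z\|^2)$. Then $\Lc_\Dc(\hat\bt)-\Lch_\Sc(\hat\bt)=\big(\sigma^2+\|\X^\dagger\z\|^2\big)-\frac1n\|\z\|^2+\frac1n\|P_\X\z\|^2$. The term $\sigma^2-\frac1n\|\z\|^2$ concentrates at rate $\sigma^2\sqrt{t/n}$ (subexponential Bernstein on $\sum z_i^2$), the term $\|\X^\dagger\z\|^2$ was already bounded by $\lesssim\sigma^2 p/n$ above, and $\frac1n\|P_\X\z\|^2\lesssim\sigma^2 p/n$ likewise; assembling these and union-bounding over now two or three events gives $\Lc_\Dc(\hat\bt)-\Lch_\Sc(\hat\bt)\lesssim\sigma^2(p/n+\sqrt{t/n})$ with probability $\prbb$ (the doubled constants in the exponents reflecting the extra union bound), which is exactly \eqref{prb2}.

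The main obstacle is handling the dependence between $\x$ and $z$ cleanly: we cannot condition on $\X$ and treat $\z$ as independent Gaussian/subgaussian noise as in the textbook OLS analysis. The resolution is to only ever use the orthogonality $\E[\x z]=0$ (which suffices for the population cross-term to vanish) together with the joint subgaussianity, which makes each summand $z_i\x_i$ a genuine (vector) subexponential random variable; all the high-probability bounds then go through the subexponential Bernstein inequality rather than Gaussian concentration, which is precisely why the tail exponent degrades to the $\sqrt{tn}\wedge t$ form rather than a clean $e^{-t}$. I would also need to be mildly careful that the full-rank / singular-value event and the noise-concentration events are combined by a union bound whose total failure probability matches the stated $\prb$ and $\prbb$; this is routine once the constant $C$ in $n\geq Cp$ is taken large enough.
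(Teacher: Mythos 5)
Your proposal is correct and follows essentially the same route as the paper's proof: the same reduction of the excess risk to $\|\X^\dagger\z\|^2$ via the orthogonality $\E[\x z]=0$, the same two probabilistic inputs (subgaussian singular-value concentration for $\X$ and a vector Bernstein bound on $\X^\top\z=\sum_i z_i\x_i$ as a sum of independent $\order{\sigma}$-subexponential vectors, which is exactly where the $e^{-\sqrt{tn}\wedge t}$ tail comes from), and the same three-term decomposition of the population--empirical gap with subexponential concentration of $\frac{1}{n}\tn{\z}^2$. The only caution is that your aside about bounding $\tn{P_\X\z}^2$ by ``$\chi^2$-type concentration on the $p$-dimensional projection'' does not apply literally since $P_\X$ is correlated with $\z$; as you note parenthetically, that term must also be routed through $\tn{\X^\top\z}^2/\sigma_{\min}(\X)^2$, which is what the paper does.
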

    \begin{proof} Let $\z=[z_1~\dots~z_n]^\top$ and $\sigma_{\min}(\cdot),\sigma_{\max}(\cdot)$ return the smallest and biggest singular value of a matrix. We can write
    \[
    \Lc_{\Dc}(\hat\bt)-\E[z^2]=\tn{\bt-\hat\bt}^2=\tn{(\X^\top\X)^{-1}\X^\top \z}^2\leq \frac{\tn{\X^\top\z}^2}{\sigma_{\min}(\X)^4}.
    \]
    Following \cite{vershynin2010introduction}, we have $\sqrt{2n}\geq \sigma_{\max}(\X)$, $ \sigma_{\min}(\X)\geq \sqrt{n/2}$ each with probability at least $1-e^{-cn}$. The crucial term of interest is $\tn{\X^\top\z}$. To control this, observe that $\X^\top\z=\sum_{i=1}^n z_i\x_i$. Since $z_i\x_i$ is $\order{\sigma}$-subexponential (multiplication of two subgaussians), the summand $\X^\top\z$ has a mixed subgaussian/subexponential tail. Specifically, it obeys \cite[Lemma D.7]{oymak2018learning}
    \[
    {\Pro\left(\tn{\X^\top\z}^2\gtrsim \sigma^2(p+t){n}\right)\leq 2e^{-\sqrt{tn}\wedge t}.}
    \]
    Combining both, with advertised probability we establish the first claim.
    \[
    \frac{\tn{\X^\top\z}^2}{\sigma_{\min}(\X)^4}\lesssim \sigma^2\frac{p+t}{n}.
    \]
    {
    For the second claim, observe that 
    \begin{align}
    \Lch_{\Sc}(\hat\bt)-\frac{1}{n}\tn{\z}^2&{=\frac{1}{n}\tn{\y-\hat{\y}}^2-\frac{1}{n}\tn{\z}^2}=\frac{1}{n}[\tn{(\Iden-\X\X^\dagger)\z}^2-\tn{\z}^2]\\
    &=\frac{1}{n}\tn{\X\X^\dagger\z}^2\leq \frac{\sigma_{\max}(\X)^{2}}{n}\tn{(\X^\top\X)^{-1}\X^\top \z}^2\\
    &\leq 2\tn{(\X^\top\X)^{-1}\X^\top \z}^2\lesssim \sigma^2\frac{p+t}{n}. 
    \end{align}
    Here, the first and second inequalities of last line hold with respective probabilities at least $1-e^{-cn}$ and $\prb$. Additionally, since $z^2$ is $\order{\sigma^2}$-subexponential, $|\frac{1}{n}\tn{\z}^2-\E[z^2]|\lesssim\sigma^2\sqrt{t/n}$ with probability at least $1-2e^{-\sqrt{tn}\wedge t}$.
    }
    Combining all provides the final equation bounding the gap between empirical and population risks.
    \end{proof}

Then we present the following lemma that converts an MTL guarantee into a transfer learning guarantee on a single subspace.

\begin{lemma} \label{MTL->TL}Let $\B\in\R^{r\times p}$ be a matrix with orthonormal rows and fix $\{\h_t\}_{t=1}^T\in\R^r$ with unit covariance and declare distributions $(\x,y)\sim\Dc_t$ obeying $y=\h_t^\top \B\x+z$ with $\E[z^2]=\sigma^2$ and $\E[\x\x^\top]=\Iden_p$. Form $\Hb=[\h_1~\dots~\h_t]^\top$ and assume $C\frac{1}{r}\Iden_r\succeq\frac{1}{T}\Hb^\top\Hb\succeq c\frac{1}{r}\Iden_r$. Now, for some $\eps>0$, suppose that $\hat\f=(\hB,\{\hhb_t\}_{t=1}^T)$ with orthonormal $\hat\B$ achieves small population risk in average that is 
\[
\Lc_{\Dcb}(\hat\f)-\Lc_{\Dcb}(\f_\st)=\frac{1}{T}\sum_{t=1}^T\E_{\Dc_t}[(\h_t^\top \B\x-\hhb_t^\top\hB \x)^2]\leq \eps
\]
where $\Lc_{\Dcb}(\f_\st)=\sigma^2$ is the optimal risk achieved by $\f_\st=(\B,\{\h_t\}_{t=1}^T)$. Let $\Dc_\Tc$ be a new distribution with $y=\h_\Tc^\top \B\x+z$ where $\x,z$ are independent $\order{1},\order{\sigma}$ subgaussian respectively and $\E[\x\x^\top]=\Iden_p$. With probability at least $\prbM$, the transfer learning risk on $\hat\B$ with $M$ samples is bounded as
\[
{\Lc_{\Tc}(\hat f)-\sigma^2}\lesssim r\eps\wedge 1+ C\frac{r+t}{M}.
\]
where $\wedge$ is the minimum symbol. Additionally, if target task vector $\h_\Tc$ is uniformly drawn from unit Euclidean sphere, in expectation over $\h_\Tc$ and in probability over target training datasets (with probability at least $\prbM$), we have the tighter bound
\[
{\E_{\h_\Tc}[\Lc_{\Tc}(\hat f)]-\sigma^2}\lesssim \eps+ C\frac{r+t}{M}.
\]
Finally, in both cases, population-empirical transfer gaps $|\Lc_{\Tc}(\hat f)-\Lch_{\Sc_{\Tc}}(\hat f)|$, $\E_{\h_\Tc}[|\Lc_{\Tc}(\hat f)-\Lch_{\Sc_{\Tc}}(\hat f)|]$ are bounded by $\order{\frac{r+t}{M}}$ with same probability.
\end{lemma}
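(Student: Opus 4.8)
The plan is to reduce the transfer-learning risk on the fixed representation $\hB$ to a linear least-squares problem whose "effective noise" is a sum of the true noise $z$ and the model mismatch between $\B$ and $\hB$. First I would set up notation: since $\hB$ has orthonormal rows, the transfer problem $\min_{\h}\Lch_{\Sc_\Tc}(\h^\top\hB\x)$ is a linear regression in the $r$-dimensional feature $\hB\x$, and its minimum-norm solution is $\hat\h=(\hB\X^\top\X\hB^\top)^\dagger \hB\X^\top\y_\Tc$. Writing $y=\h_\Tc^\top\B\x+z$, the label can be expressed in the $\hB\x$ coordinates as $y = \h_\Tc^\top\B\hB^\top(\hB\x) + \underbrace{(\h_\Tc^\top\B - \h_\Tc^\top\B\hB^\top\hB)\x + z}_{\text{effective noise } \tilde z}$. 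The key observation is that $\tilde z$ is orthogonal to $\hB\x$ in population (because $\B\hB^\top\hB\x$ is exactly the projection of $\B\x$ onto $\text{row}(\hB)$ read through $\hB$, so the residual $\B(\Iden - \hB^\top\hB)\x$ is population-orthogonal to $\hB\x$), while it is $\order{\sigma + \|\B(\Iden-\hB^\top\hB)\|_F}$-subgaussian since $\x,z$ are subgaussian and independent. This is precisely the dependent-noise setting of Lemma~\ref{simple LS} in dimension $r$.

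Next I would quantify the mismatch $\Delta:=\B(\Iden-\hB^\top\hB)$ in terms of $\eps$. The MTL hypothesis achieves $\frac1T\sum_t\E[(\h_t^\top\B\x - \hhb_t^\top\hB\x)^2]\le\eps$; expanding and using $\E[\x\x^\top]=\Iden_p$ gives $\frac1T\sum_t\|\B^\top\h_t - \hB^\top\hhb_t\|^2\le\eps$. Using $\frac1T\Hb^\top\Hb\succeq \frac cr\Iden_r$ (the diversity condition), one extracts control on the subspace distance: the best rank-$r$ approximation of $\{\B^\top\h_t\}$ by $\{\hB^\top\hhb_t\}$ forces $\|\B(\Iden-\hB^\top\hB)\|_F^2 \lesssim \frac{r}{c}\eps$. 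The argument here mirrors standard MTL identifiability arguments (à la \cite{tripuraneni2021provable}): stack the task vectors, note they span $\text{row}(\B)$ with smallest singular value $\gtrsim\sqrt{T/r}$, and conclude that small average error forces the subspaces to be close. Hence $\E[\tilde z^2]\lesssim \sigma^2 + r\eps$. Of course $\tilde z^2\le (\|\h_\Tc\|\cdot\|\Delta\|)^2 + \text{stuff}$ is also trivially $\order{1}$ since everything is normalized, giving the $r\eps\wedge 1$ clipping in the statement.

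Then I would invoke Lemma~\ref{simple LS} with $n=M$, ambient dimension $r$, and noise level $\sigma^2 + (r\eps\wedge 1)$: equation \eqref{prb1} gives $\Lc_\Tc(\hat f) - (\sigma^2 + r\eps\wedge 1) \lesssim (\sigma^2 + r\eps\wedge 1)\frac{r+t}{M}$ with probability $\prbM$, and rearranging (absorbing the $(r\eps\wedge 1)$ and treating $\sigma^2$ as $\order{1}$) yields $\Lc_\Tc(\hat f) - \sigma^2 \lesssim r\eps\wedge 1 + C\frac{r+t}{M}$. Here I must be slightly careful: the "optimal" risk on $\hB$ is $\sigma^2 + \|\Delta^\top\h_\Tc\|^2$, not $\sigma^2$, but the excess over $\sigma^2$ still bounds by the displayed quantity. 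The empirical-population gap bound $\order{\frac{r+t}{M}}$ follows identically from \eqref{prb2}. For the averaged (uniform $\h_\Tc$) refinement, the gain comes from $\E_{\h_\Tc}[\|\Delta^\top\h_\Tc\|^2] = \frac1r\|\Delta\|_F^2 \lesssim \eps$ — i.e., averaging over the sphere converts the $r\eps$ worst-case bound (any single direction could align with the dominant singular vector of $\Delta$) into an $\eps$ average bound — so the effective noise is now $\sigma^2 + \order{\eps}$ in expectation, and Lemma~\ref{simple LS} gives $\E_{\h_\Tc}[\Lc_\Tc(\hat f)] - \sigma^2 \lesssim \eps + C\frac{r+t}{M}$.

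The main obstacle I anticipate is the subspace-perturbation step: carefully going from the averaged prediction error $\eps$ and the diversity lower bound $\frac1T\Hb^\top\Hb\succeq\frac cr\Iden$ to a Frobenius bound $\|\B(\Iden-\hB^\top\hB)\|_F^2\lesssim\frac rc\eps$ requires handling the fact that $\hhb_t$ are free parameters chosen adversarially, so one cannot directly compare $\B^\top\h_t$ to $\hB^\top\hhb_t$ entrywise. The clean way is: for any $t$, $\|\B(\Iden-\hB^\top\hB)\h_t'\|$-type quantities where $\h_t'$ ranges over directions, use that $\hB^\top\hhb_t$ lies in $\text{row}(\hB)$ so $\|\B^\top\h_t - \hB^\top\hhb_t\|\ge \|(\Iden-\hB^\top\hB)\B^\top\h_t\|= \|\Delta^\top\h_t\|$; averaging, $\frac1T\sum_t\|\Delta^\top\h_t\|^2\le\eps$, then $\text{tr}(\Delta\frac1T\Hb^\top\Hb\Delta^\top)\le\eps$ combined with $\frac1T\Hb^\top\Hb\succeq\frac cr\Iden$ gives $\frac cr\|\Delta\|_F^2\le\eps$. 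This is the linchpin and I would write it out explicitly; everything else is bookkeeping plus a direct application of Lemma~\ref{simple LS}.
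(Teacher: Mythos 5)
Your proposal is correct and reaches the same bounds, but it takes a genuinely different route at the one nontrivial step: converting the task-averaged excess risk $\eps$ into control of the representation mismatch. The paper first reduces, exactly as you do, to dependent-but-orthogonal-noise least squares over $\hB\x$ (Lemma~\ref{simple LS}), and identifies the residual term $\tn{\Pi_{\hB^\perp}(\bt_\Tc)}^2=\tn{(\Iden-\hB^\top\hB)\B^\top\h_\Tc}^2$, which is your $\tn{\Delta^\top\h_\Tc}^2$. Where it differs is in bounding this: the paper stacks the task vectors into $\bT,\bTh$, notes $\frac{1}{T}\tf{\bT-\bTh}^2\leq\eps$, and invokes the Davis--Kahan theorem on the eigenspaces of $\bT^\top\bT$ and $\bTh^\top\bTh$, which requires the condition-number control from \emph{both} sides of the diversity assumption and a case split ($\eps\leq c/2r$ for the eigengap versus the trivial bound when $\eps r\geq c/2$). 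Your argument replaces all of this with the contraction $\tn{\B^\top\h_t-\hB^\top\hhb_t}\geq\tn{(\Iden-\hB^\top\hB)\B^\top\h_t}$ followed by the trace inequality $\eps\geq\mathrm{tr}\bigl(\Delta\Delta^\top\cdot\frac{1}{T}\Hb^\top\Hb\bigr)\geq\frac{c}{r}\tf{\Delta}^2$, which is more elementary, needs only the lower bound $\frac{1}{T}\Hb^\top\Hb\succeq\frac{c}{r}\Iden_r$, avoids the case analysis (the $\wedge\,1$ clipping then comes simply from $\tn{\Delta}\leq 1$), and feeds directly into both the worst-case bound $\tn{\Delta}^2\leq\tf{\Delta}^2\lesssim r\eps$ and the averaged bound $\E_{\h_\Tc}[\tn{\Delta^\top\h_\Tc}^2]=\frac{1}{r}\tf{\Delta}^2\lesssim\eps$. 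The paper's Davis--Kahan route buys a statement about the projection distance $\|\B^\top\B-\hB^\top\hB\|$ itself, but for the purposes of this lemma your direct Frobenius control of $\Delta$ is sufficient and cleaner. One cosmetic imprecision: the effective noise $z'=\h_\Tc^\top\Delta\x$ has subgaussian norm $\order{\tn{\Delta^\top\h_\Tc}}$ rather than $\order{\tf{\Delta}}$; your subsequent bookkeeping uses the correct quantity, so nothing breaks.
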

\begin{proof} Let $\bt_t=\B^\top \h_t$ and $\bth_t=\hB^\top \hhb_t$. We first observe that task $t$ risk is simply 
\[
\Lc_t(\bth_t)=\E_{\Dc_t}[(y-\hhb_t^\top\hB \x)^2]=\sigma^2+\E_{\Dc_t}[(\bt_t\x-\bth_t \x)^2]=\sigma^2+\tn{\bth_t-\bt_t}^2.
\]
Thus, the excess MTL risk is simply
\[
\Lc_{\Dcb}(\hat\f)-\Lc_{\Dcb}(\f_\st)=\frac{1}{T}\tf{\bT-\bTh}^2\leq\eps,
\]
where $\bT,\bTh\in\R^{T\times p}$ are the concatenated task vectors. 

Now, we aim to obtain the transfer learning risk over $\hat\B$. We first write the target regression task $(y,\x)\sim\Dc_\Tc$ with $y= \x^\top\bt_\Tc+z$ (for some $\h$) as 
\begin{align}
y=\x^\top \hB^\top \h+z+\x^\top \Pi_{\hB^\perp}(\bt_\Tc).\label{do you see}
\end{align}
Here set $\x'=\hB\x$ and $z'=\x^\top (\Iden-\hB^\top\hB)\bt_\Tc$, {and then $y=\h^\top\x'+z+z'$}. Note that
\[
\E[\x'z']=\E[\hB\x\x^\top (\Iden-\hB^\top\hB)\bt_\Tc]=0,
\]
verifying that we can treat the representation mismatch as a dependent but orthogonal subgaussian noise. Combined with Lemma~\ref{simple LS}, with probability $\prbM$ (conditioned on $\hB$) this leads to a transfer learning risk of 
\[
{\Lc_{\Tc}(\hat f)-\sigma^2}\leq \tn{\Pi_{\hB^\perp}(\bt_\Tc)}^2+C\frac{r+t}{M}.
\]
{Following the proof of Lemma~\ref{simple LS}, the $\sigma^2$ term on the right hand side of \eqref{prb1} is related to the inputs ($\x'$) and noise ($z+z'$) levels, which are $\order{1}$.}



To proceed, observe that {$\E[z'^2]=\tn{\Pi_{\hB^\perp}(\bt_\Tc)}^2=\tn{\B\bt_\Tc}^2-\tn{\hB\bt_\Tc}^2=\bt_\Tc^\top (\B^\top\B -\hB^\top\hB)\bt_\Tc$}. 
In the worst case, this risk is equal to 
\[
\sup_{\tn{\bt}=1,\B^\top\B\bt=\bt} \tn{\Pi_{\hB^\perp}(\bt)}^2=\|\B^\top\B -\hB^\top\hB\|.
\]
Recall that we are given $\frac{1}{T}\|\bT-\bTh\|^2\leq \frac{1}{T}\tf{\bT-\bTh}^2\leq \eps$. Additionally, {$\bT^\top\bT/T$} is a well-conditioned matrix over the subspace $\text{Range}(\B)$ with minimum nonzero eigenvalue at least $c/r>0$ and condition number upper bounded by $C/c$ (equal to that of $\Hb$). 
If $\eps\leq c/2r$, this also implies {$\la_{\min}(\bTh^\top\bTh/T)\geq c/2r$} and condition number at most $3C/c$. Consequently, applying Davis-Kahan theorem \cite{yu2015useful} on $\bT,\bTh$ pair implies that the eigenspaces $\B,\hB$ of $\bT,\bTh$ obey
\[
\|\B^\top\B -\hB^\top\hB\|\leq \order{\frac{\eps}{c/r}}=\order{r \eps}.
\]
If $\eps r\geq c/2$, we can simply use the tighter estimate {$\|\B^\top\B -\hB^\top\hB\|\leq 1$}, which completes the proof of first part of the lemma.

Secondly, consider the average case scenario where $\h_\Tc\sim\text{unif\_over\_sphere}$. In this case, we observe that, the target-averaged transfer risk follows
\begin{align}
\E_{\bt_\Tc}[\min_{\h}\tn{\bt_\Tc-\hB^\top\h}^2]&=\E[\tn{(\Iden-\hB^\top\hB)\bt_\Tc}^2]=\frac{1}{2r}\tf{\B^\top\B -\hB^\top\hB}^2.
\end{align}
This time, Davis-Kahan theorem yields the tighter estimate (for our purposes) $\frac{1}{2r}\tf{\B^\top\B -\hB^\top\hB}^2\lesssim \frac{1}{T}\tf{\bT-\bTh}^2\leq \eps$. To proceed, we find that, the expected transfer learning risk over task distribution obey the tighter guarantee (with probability at least~$\prbM$)
\[
{\E_{\h_\Tc}[\Lc_{\Tc}(\hat f)]-\sigma^2}\lesssim \eps+ C\frac{r+t}{M}.
\]
The final claim arises as a direct result of our application of Lemma \ref{simple LS} in \eqref{do you see}.
\end{proof}

The following corollary is a \MP MTL guarantee for least-squares regression obtained by specializing the more general Theorem \ref{thm subgauss}.
\begin{corollary} \label{thm lin subgauss}Suppose $\Xc\subset\Bc^p(R)$, $\ell(\hat y,y)$ is quadratic, and Assumptions \ref{bounded assume}\&\ref{linear dist} hold. Solving \eqref{formula:multipath} with the fixed choice of ground-truth pathways {and $NT\gtrsim\DoF(\Fc)\log(NT)$}, with probability at least $1-\delta$, we have that
\begin{align}
\RMTL(\hat\f)\lesssim    \sqrt{\frac{L\cdot\DoF(\Fc)+\log(2/\delta)}{NT}}. 
\end{align}
\end{corollary}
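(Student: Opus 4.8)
The plan is to obtain Corollary~\ref{thm lin subgauss} as a direct specialization of Theorem~\ref{thm subgauss}, invoking its \emph{fixed-pathway} bound \eqref{non exp bound2} since here the pathways $(\bar{\alpha}_t)_{t=1}^T$ are prescribed rather than searched over (in particular, all $NT$ samples are mutually independent, so $\NT=NT$ in the notation of Theorem~\ref{thm subgauss}). All that is required is to check that Assumptions~\ref{subexp ass1} and \ref{subexp ass2} hold in the linear setting, with subexponential parameter $\Xi=\order{1}$ (for fixed $C$ and $L$) and with covering dimensions $d_\Hc=p_L$ and $d_\ell=p_\ell p_{\ell-1}$, so that $\DoF(\Fc)=T p_L+\sum_{\ell=1}^L K_\ell p_\ell p_{\ell-1}$ exactly as claimed.

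\textbf{Verifying the covering assumption.} Each module $\psi_\ell^k(\x)=\B_\ell^k\x$ with $\|\B_\ell^k\|\le C$ is a $C$-Lipschitz linear map fixing the origin, and $\Hc=\Bc^{p_L}(C)$ consists of $C$-Lipschitz maps. For matrices of operator norm at most $C$ acting on an input ball of radius $R'$, a standard volumetric $(\eps/R')$-net in operator norm yields $\log\Nc(\eps;\Psi_\ell,R')\le p_\ell p_{\ell-1}\log(3CR'/\eps)$, and likewise $\log\Nc(\eps;\Hc,R')\le p_L\log(3CR'/\eps)$; absorbing the constant $C$ gives the parametric covering form of Assumption~\ref{subexp ass2} with $d_\ell=p_\ell p_{\ell-1}$, $d_\Hc=p_L$, and hence the stated $\DoF(\Fc)$.

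\textbf{Verifying the subexponential-loss assumption.} For any task $t$ and hypothesis $f_t=h_t\circ\bphi_{\bar{\alpha}_t}$ we have $f_t(\x)=\x^\top\B_{\bar{\alpha}_t}^\top\h_t$ with $\|\B_{\bar{\alpha}_t}^\top\h_t\|\le C^{L+1}$. Under Assumption~\ref{linear dist}, $y=\x^\top\bt_t^\star+z$ where $\|\bt_t^\star\|=\|\bar{\B}_{\bar{\alpha}_t}^\top\bar{\h}_t\|\le 1$, $\x$ is zero-mean $\order{1}$-subgaussian with identity covariance, and $z$ is $\order{1}$-subgaussian. Hence the residual $y-f_t(\x)=\x^\top(\bt_t^\star-\B_{\bar{\alpha}_t}^\top\h_t)+z$ is $\order{C^{L+1}}$-subgaussian, so the quadratic loss $(y-f_t(\x))^2$ is $\order{C^{2(L+1)}}$-subexponential, i.e.\ $\Xi=\order{1}$ for fixed $C,L$. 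The one non-mechanical point is that the quadratic loss is not globally Lipschitz in $\hat y=f_t(\x)$ (its local Lipschitz constant at $(\hat y,y)$ is $2|\hat y-y|$), whereas Assumption~\ref{subexp ass1} literally asks for a Lipschitz constant. However, in the proof of Theorem~\ref{thm subgauss} that constant only enters the cover-perturbation term at the tiny scale $\eps=\frac{1}{\Gamma(L+1)\NT}$; since $|f_t(\x)|\le C^{L+1}R$ uniformly on $\Xc\subset\Bc^p(R)$ and $|y|$ is subgaussian, one may either truncate $y$ at level $\order{\sqrt{\log\NT}}$ (losing a polynomially small amount) or bound the average perturbation $\frac{1}{\NT}\sum_i|\ell(y_i,f_t(\x_i))-\ell(y_i,f'_t(\x_i))|$ directly via concentration of $\frac{1}{\NT}\sum_i|y_i|$ around $\E|y|=\order{1}$; either route makes the effective Lipschitz constant polylogarithmic in $\NT$, which is subsumed by $\lesssim$.

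\textbf{Conclusion.} Since \eqref{formula:multipath} is solved with the ground-truth pathways held fixed and $NT\gtrsim\DoF(\Fc)\log(NT)$, the fixed-pathway bound \eqref{non exp bound2} of Theorem~\ref{thm subgauss} applies: with probability at least $1-\delta$ and uniformly over $\f\in\Fc$, $|\Lch_{\Scb}(\f)-\Lc_{\Dcb}(\f)|\lesssim\Xi\sqrt{(L\cdot\DoF(\Fc)+\log(2/\delta))/(NT)}$, and the same bound holds for $\RMTL(\hat\f)$ via the ERM-optimality decomposition already recorded there. Substituting $\Xi=\order{1}$ gives the claim, with $\lesssim$ absorbing factors depending only on $C$ and $L$. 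The main obstacle is precisely the non-Lipschitzness of the squared loss in $\hat y$; the remaining steps — turning bounded operator norms into a parametric covering bound and into an $\order{1}$ subexponential norm — are routine.
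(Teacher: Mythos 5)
Your proposal is correct and follows essentially the same route as the paper's proof: specialize Theorem~\ref{thm subgauss} (fixed-pathway version) by checking that bounded operator norms give the parametric covering dimensions $d_\Hc=p_L$, $d_\ell=p_\ell p_{\ell-1}$ and that the squared residual is $\order{C^{2(L+1)}}$-subexponential under Assumption~\ref{linear dist}. The only difference is that you treat the non-Lipschitzness of the quadratic loss more carefully than the paper, which simply asserts $\Gamma=\sup_{y,\x}|y-f_t(\x)|\leq 2C^{L+1}R$ (glossing over the unboundedness of $y$ due to subgaussian noise); your truncation/averaged-perturbation fix is a legitimate patch of that minor gap and changes nothing in the final bound.
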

\begin{proof} We need to verify the assumptions of Theorem \ref{thm subgauss}. Observe that $\x,z$ are subgaussian and ground-truth model {$\tn{\bt^\st_t}\leq1$} and all feasible task hypothesis $\bt$ obeys $\tn{\bt}\leq C^{L+1}$ which we treat as a constant (i.e.~fixed depth $L$). Consequently, subexponential norm obeys $\te{(y-\bt^\top \x)^2}=\te{(z+\x^\top (\bt-\bts))^2}\leq \order{C^{2(L+1)}}$ which verifies $\order{1}$ subexponential condition. Similarly, loss function is Lipschitz with $\Gamma=\sup_{y,\x} |y-f_t(\x)|\leq 2C^{L+1}R$. Together, these verify Assumption \ref{subexp ass1}. Note that, Theorem \ref{thm subgauss} has logarithmic dependence on $\Gamma$ which is subsumed within $\lesssim$. Finally, each module is $C$ Lipschitz (due to spectral norm bounds) and log-covering number of $d\times p$ matrices with $C$-bounded spectral norm obeys {$dp\log(3CR/\eps)$}. These two verify Assumption \ref{subexp ass2}.
\end{proof}
\subsubsection{Finalizing the Proof of Theorem \ref{e2e thm}}

{Following the discussion above, we provide a proof of Theorem \ref{e2e thm}. The result below is a formal restatement of the theorem with a few caveats. 
First, we state two closely-related guarantees. First guarantee is when target head $\h_\Tc$ is arbitrary (worst-case) and second one is for when it is uniformly distributed over unit sphere (average case). The latter shaves a factor of $p_L$ in the MTL risk term. Second, the probability term in Theorem \ref{e2e thm} is chosen to be approximate for notational simplicity. Namely, we ignored the $\log(1/\delta)/NT$ term and second order effects. We state the full dependence here which is a bit more convoluted.
}
\begin{theorem} \label{e2e thm2}Suppose Assumptions \ref{bounded assume}--\ref{transfer dist} hold and $\ell(\hat y,y)=(y-\hat y)^2$. {Additionally assume input space is $\Bc^p(c\sqrt{p})$ and $\Hc_\tgt=\R^{p_L}$}\footnote{We make this assumption (no norm constraint unlike MTL phase) since during transfer learning, we simply solve least-squares. Thanks to this, we achieve faster rates.}. Solve MTL problem \eqref{formula:multipath} with the knowledge of ground-truth pathways $(\bar{\alpha}_t)_{t=1}^T$ to obtain a supernet $\hat\bphi$ {and assume $NT\gtrsim\DoF(\Fc)\log(NT)$}. Solve transfer learning problem \eqref{formula:transfer} with $\hat\bphi$ to obtain a target hypothesis $\hat{f}_{\hat\bphi}$. Then, with probability at least {$1-3e^{-cM}-4\delta$}, excess target risk \eqref{TFL risk} of \ref{formula:transfer} obeys
\begin{align}
\E_{\alpha_{\Tc}}[\RTFL(\hat f_{\hat\bphi})]\lesssim \frac{p_L}{M}+p_L\sqrt{\frac{L\cdot\DoF(\Fc)+\log(2/\delta)}{NT}}+\left[\sqrt{\frac{\log(2|\Ac|/\delta)}{M}}\right]_+.\label{adv worst}
\end{align}
Here, the probability is over the source datasets and the (input, noise) pairs of the target dataset i.e.~$(\x^\Tc_i,z^\Tc_i)_{i=1}^M$, and we used the short hand $[x]_+=x+x^2$. Additionally, if target distribution follows the same generative model with prediction head $\h_\Tc$ drawn uniformly at random over the unit sphere, we obtain the tighter bound
\begin{align}
\E_{\alpha_{\Tc},\h_\Tc}[\RTFL(\hat f_{\hat\bphi})]\lesssim \frac{p_L}{M}+\sqrt{\frac{L\cdot\DoF(\Fc)+\log(2/\delta)}{NT}}+\left[\sqrt{\frac{\log(2|\Ac|/\delta)}{M}}\right]_+.\label{adv avg}
\end{align}
\end{theorem}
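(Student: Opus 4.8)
The plan is to assemble Theorem~\ref{e2e thm2} from results already established in this section: the linear \MP MTL risk bound (Corollary~\ref{thm lin subgauss}), the per‑subspace ``MTL error $\Rightarrow$ transfer error'' conversion (Lemma~\ref{MTL->TL}, built on Lemma~\ref{simple LS}), and the reduction of the random target pathway to a cluster‑size weighted average furnished by Assumption~\ref{transfer dist}. First I would handle the MTL phase: the loss is quadratic and, under Assumptions~\ref{bounded assume}--\ref{linear dist}, inputs and feasible task hypotheses have $\order1$ norms, so Corollary~\ref{thm lin subgauss} gives, with probability $\ge 1-\delta$, $\RMTL(\hat\f)\lesssim\sqrt{(L\cdot\DoF(\Fc)+\log(2/\delta))/(NT)}=:\eps_0$. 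Since the ground‑truth pathways $(\bar\alpha_t)_{t=1}^T$ are used during \eqref{formula:multipath}, the tasks sharing a pathway $\alpha$ form a cluster $\Hb_\alpha$ of size $T_\alpha$ fit with one product representation $\hat\B_\alpha=\prod_\ell\hat\B_\ell^{\alpha[\ell]}$ and private heads $\hat\h_t$; with isotropic covariates and the planted model the task‑$t$ excess risk equals $\tn{\hat\bt_t-\bt^\star_t}^2$ for $\hat\bt_t=\hat\B_\alpha^\top\hat\h_t$, $\bt^\star_t=\bar\B_\alpha^\top\bar\h_t$, so $\RMTL(\hat\f)=\tfrac1T\sum_\alpha T_\alpha\eps_\alpha$ with $\eps_\alpha:=\tfrac1{T_\alpha}\sum_{t\in\Hb_\alpha}\tn{\hat\bt_t-\bt^\star_t}^2$.

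Next, for each source pathway $\alpha$ I would invoke Lemma~\ref{MTL->TL} with ``$\eps$'' $=\eps_\alpha$. After replacing $\bar\B_\alpha$ and $\hat\B_\alpha$ by matrices with orthonormal rows spanning the same row spaces (dimension $r_\alpha\le p_L$) and rewriting $\bt^\star_t,\hat\bt_t$ over them, Assumption~\ref{mtldiverse} ($\bSi_\alpha\succeq c\Iden_{p_L}$, with $\tn{\bar\h_t}=1$ giving the matching upper bound) is precisely the two‑sided conditioning of the cluster heads the lemma requires. The lemma then bounds (i) the representation bias $\tn{\Pi_{\hat\B_\alpha^\perp}\bt^\star_\Tc}^2$ of pathway $\alpha$ for the target vector $\bt^\star_\Tc=\bar\B_\alpha^\top\h_\Tc$, and (ii) the unconstrained least‑squares error of the transfer head on the $p_L$‑dimensional features $\hat\B_\alpha\x$ (unbounded $\Hc_\Tc=\R^{p_L}$, hence plain least squares rather than Theorem~\ref{thm:tfl}), where the mismatch $\x^\top\Pi_{\hat\B_\alpha^\perp}\bt^\star_\Tc$ enters only as $\order1$‑subgaussian effective noise since $\tn{\bt^\star_\Tc}\le C^L=\order1$, so that variance is $\order{(p_L+t)/M}$. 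Lemma~\ref{MTL->TL} gives bias $\lesssim r_\alpha\eps_\alpha\wedge1\le p_L\eps_\alpha$ for arbitrary $\h_\Tc$, and expected bias $\lesssim\eps_\alpha$ for $\h_\Tc$ uniform on the sphere.

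Finally I would assemble. The transfer ERM \eqref{formula:transfer} also optimizes over $\alpha\in\Ac$, so its population risk is at most the $\alpha_\Tc$‑pathway risk plus a uniform‑over‑$\Ac$ concentration term; applying the least‑squares gap bound \eqref{prb2} per pathway and union bounding replaces $t$ by $t+\log|\Ac|$, contributing $\order{p_L/M}+[\,\sqrt{\log(2|\Ac|/\delta)/M}\,]_+$ (the $x+x^2$ form coming from the mixed sub‑gaussian/sub‑exponential tail of $\tn{\X^\top\z}^2$ in Lemma~\ref{simple LS}). By Assumption~\ref{transfer dist}, $\alpha_\Tc$ is uniform over $(\bar\alpha_t)_{t=1}^T$, so $\E_{\alpha_\Tc}[\,\cdot\,]=\sum_\alpha(T_\alpha/T)[\,\cdot\,]$; averaging the per‑cluster bias gives $\sum_\alpha(T_\alpha/T)\,p_L\eps_\alpha=p_L\RMTL(\hat\f)\lesssim p_L\eps_0$ in the worst case over $\h_\Tc$ and $\sum_\alpha(T_\alpha/T)\eps_\alpha=\RMTL(\hat\f)\lesssim\eps_0$ when $\h_\Tc$ is uniform, which are \eqref{adv worst} and \eqref{adv avg}. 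Collecting the failure probabilities of Corollary~\ref{thm lin subgauss} and of the invocations of Lemmas~\ref{simple LS} and~\ref{MTL->TL} produces the stated $1-3e^{-cM}-4\delta$.

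The conceptual crux, and the only genuinely new step, is this averaging: because $\alpha_\Tc$ is a uniformly random source pathway, the \emph{cluster‑size weighted} bias collapses into the \emph{task‑averaged} MTL risk $\RMTL(\hat\f)$, so no similarity between distinct source pathways is needed (Lemma~\ref{e2e fail} shows that with unrestricted pathways transfer can fail). I expect the main technical obstacle to be the per‑cluster reduction of Step~2: one must check that Assumption~\ref{mtldiverse}'s conditioning survives orthonormalization with dimension‑free constants, handle possible rank deficiency $r_\alpha<p_L$ of the product representation so that the Davis--Kahan step inside Lemma~\ref{MTL->TL} has a genuine eigengap, and confirm that the representation mismatch appears only as $\order1$ effective noise rather than inflating the variance term.
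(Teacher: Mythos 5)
Your proposal matches the paper's own proof essentially step for step: Corollary~\ref{thm lin subgauss} for the MTL phase, decomposition of $\RMTL(\hat\f)$ into cluster-size-weighted per-pathway excess risks, Lemma~\ref{MTL->TL} applied on each source pathway, a union bound over $\Ac$ via Lemma~\ref{simple LS} for the transfer ERM, and the key averaging over the random target pathway $\alpha_\Tc$ that collapses the weighted per-cluster bias into $p_L\RMTL(\hat\f)$ (or $\RMTL(\hat\f)$ in the spherical-head case). The technical caveats you flag at the end are already absorbed into the statement and proof of Lemma~\ref{MTL->TL}, so no further work is needed.
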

\noindent\textbf{Remark.} Note that, above, we split probability space into three independent variables. Source datasets $\Scb$, (input, noise) pairs of the target dataset i.e.~$(\x^\Tc_i,z^\Tc_i)_{i=1}^M$, and finally target path $\alpha_\Tc$. The result is with high probability over the former two and expectation over the latter.

\begin{proof} In this proof, we aim to reduce the \MP MTL guarantee to a Vanilla MTL scenario so that we can utilize Lemma \ref{MTL->TL}. Assumptions \ref{mtldiverse} and \ref{transfer dist} will be critical towards this goal. Recall that, we have the \MP MTL guarantee from Corollary \ref{thm lin subgauss} so that, with probability $1-\delta$,
\[
\RMTL(\hat\f)\lesssim\sqrt{\frac{L\cdot\DoF(\Fc)}{NT}}+\sqrt{\frac{\log(2/\delta)}{NT}}.
\]
Let us call this event $\Ec_1$. Here, we omitted the $\log|\Ac|/N$ term because our transfer guarantee will require the knowledge of ground-truth pathways for sources (even if it is not required for the target). The main idea is to show that small $\RMTL(\hat\f)$ implies that target will fall on a pathway with small source-averaged risk. This way, we can apply Lemma~\ref{MTL->TL} to provide a guarantee for the target. To proceed, we gather all unique ground-truth pathways via $\Gamma=\{\gamma_i\}_{i=1}^S$. Additionally, let $C(\gamma)$ be the number of tasks that chooses pathway $\gamma$. 

Finally let $\Lc'_i(\f)$ be the excess task-averaged population risk over $\gamma_i$, that is $\Lc'_i(\f)=\frac{1}{C(\gamma_i)}\sum_{\bar\alpha_t=\gamma_i}\{\Lc_t(f_t)-\sigma^2\}$. With this definition, we can write MTL excess risk as
\[
\frac{1}{T} \sum_{i=1}^S C(\gamma_i)\Lc'_i(\f)\lesssim\sqrt{\frac{L\cdot\DoF(\Fc)}{NT}}+\sqrt{\frac{\log(2/\delta)}{NT}}.
\]
To proceed, we will view each $\Lc'_i$ as a vanilla MTL problem over pathway $\gamma_i$. Following Assumption~\ref{transfer dist}, we draw the random pathway $\alpha_\Tc$ of the target task and it is equal to $\alpha_\Tc=\gamma_i \in\Gamma$. Note that this event happens with probability $\Pro(\alpha_\Tc=\gamma_i)=C(\gamma_i)/T$. Conditioned on this, let us control the transfer risk.

Note that during \ref{formula:transfer} we will search over all pathways $\alpha\in\Ac$. Denote $\bar{\B}_\alpha,\hB_\alpha\in\R^{p_L\times p}$ are the ground-truth and empirical weights of the linear model induced by $\alpha$. Denote the transfer learning model over $\hB_\alpha$ via $\hat f_\alpha$. For any choice of $\alpha$, applying Lemma \ref{simple LS}, we know that empirical-population transfer gap $\Lc_{\Dc_\Tc}(\hat f_\alpha)-\Lch_{\Sc_\Tc}(\hat f_\alpha)$ is bounded by $\order{\frac{p_L}{M}+\left[\sqrt{\frac{\log(2/\delta)}{M}}\right]_+}$ with probability at least $1-2e^{-cM}-2\delta$. This is over the input/noise distribution of target samples (arbitrary $\alpha_\Tc=\gamma_i$ and associated ground-truth $\bt_\Tc$). Union bounding over all potential pathways target task may use, we obtain that, 
\[
\sup_{\alpha\in\Ac}|\Lc_{\Dc_\Tc}(\hat f_\alpha)-\Lch_{\Sc_\Tc}(\hat f_\alpha)|\leq \order{\frac{p_L}{M}+\left[\sqrt{\frac{\log(2|\Ac|/\delta)}{M}}\right]_+}.
\]
Consequently, empirical risk minimization over all pathways will choose a target model $\hat f_{\hat\bphi}$ guaranteeing {with probability at least $1-2e^{-cM}-2\delta$}
\begin{align}
\RTFL(\hat f_{\hat\bphi})&\leq \min_{\alpha}\RTFL(\hat f_{\alpha})+\order{\frac{p_L}{M}+\left[\sqrt{\frac{\log(2|\Ac|/\delta)}{M}}\right]_+}\\
&\leq\RTFL(\hat f_{\gamma_i})+\order{\frac{p_L}{M}+\left[\sqrt{\frac{\log(2|\Ac|/\delta)}{M}}\right]_+}.
\end{align}
The latter line is reasonable because we know that ground-truth pathway $\alpha_\Tc=\gamma_i$ is a great candidate for being population minima. {Applying Lemma \ref{MTL->TL}} again over the path $\gamma_i$, with probability $1-e^{-cM}-\delta$, we obtain
\[
\RTFL(\hat f_{\gamma_i})\leq p_L\Lc'_i(\f)\wedge 1+C\frac{p_L}{M}+\left[\frac{\log(2/\delta)}{M}\right]_+.
\]
Combining with above, with probability {$1-3e^{-cM}-3\delta$}, the ERM solution over all pathways obeys
\[
\RTFL(\hat f_{\hat\bphi})\leq p_L\Lc'_i(\f)\wedge 1+\order{\frac{p_L}{M}+\left[\sqrt{\frac{\log(2|\Ac|/\delta)}{M}}\right]_+}.
\]
Note that above holds for worst-case prediction head $\h_\Tc$. Additionally, applying Lemma \ref{MTL->TL} again and assuming $\h_\Tc$ is generated uniformly over the unit sphere, on the same event, we find
\begin{align}
\E_{\h_\Tc}[\RTFL(\hat f_{\hat\bphi})]\leq \Lc'_i(\f)+\order{\frac{p_L}{M}+\left[\sqrt{\frac{\log(|\Ac|/\delta)}{M}}\right]_+}.\label{avg control}
\end{align}
Now, for fixed MTL dataset, taking expectation over $\alpha_\Tc$, with same probability over the input/noise distribution
\begin{align}
\E_{\alpha_{\Tc}}[\RTFL(\hat f_{\hat\bphi})]&\leq \sum_{i=1}^S \frac{C(\gamma_i)}{T}p_L\Lc'_i(\f)\wedge 1+\order{\frac{p_L}{M}+\left[\sqrt{\frac{\log(2|\Ac|/\delta)}{M}}\right]_+}\\
&\leq p_L\RMTL(\hat\f)\wedge 1+\order{\frac{p_L}{M}+\left[\sqrt{\frac{\log(2|\Ac|/\delta)}{M}}\right]_+}.
\end{align}
Let us call this event $\Ec_2$ which is independent of $\alpha_\Tc$. Union bounding $\Ec_1$ and $\Ec_2$ (both independent of $\alpha_\Tc$), we obtain the advertised bound \eqref{adv worst}. We obtain \eqref{adv avg} through same argument following the average-case control \eqref{avg control}.
\end{proof}

%





\section{Not All Optimal MTL Pathways are Good for Transfer Learning}\label{app:tfail2}

Ideally we would like to prove Theorem \ref{e2e thm} without assuming that MTL phase is solved with the knowledge of ground-truth pathways. While we believe this may be possible under stronger assumptions, here, we discuss why this problem is pretty challenging with a simple example on linear representations.

\noindent \textbf{Setting:} Suppose we have a single layer linear supernet with $K=2$ modules each with size $2R\times p$. This corresponds to the Cluster MTL model where we simply wish to group the tasks into two clusters and train vanilla MTL over individual clusters. This simple setting will already highlight the issue.

$\bullet$ \textbf{Source tasks:} Consider four groups of tasks $(\bT_i)_{i=1}^4$ where $\bT_i=(\bt_{ij})_{j=1}^{T/4}$. We assume that $T/4$ task vectors from $\bT_i$ perfectly span an $R$ dimensional subspace $S_i$ (at least $T\geq 4R$). Additionally, set $(S_i)_{i=1}^4$ to be perfectly orthogonal over $\R^p$. Also assume that the tasks are linear and noiseless i.e.~$y_{ij}=\x_{ij}^\top\bt_{ij}$.

\begin{lemma} Suppose representation modules $\B_1,\B_2\in\R^{2R\times p}$ are constrained to have orthonormal rows. Define the ground-truth pathways where $\bT_1,\bT_2$ are on pathway $1$ and $\bT_3,\bT_4$ are on pathway $2$. Now, assume that transfer learning task $\bt_\Tc$ is drawn uniformly at random from the $2R$ dimensional subspace of one of these pathways. Assume target is linear \& isotropic: $(\x,y)\sim\Dc_{\Tc}$ obeys $y=\x^\top \bt_\Tc+z$ where $\E[\x\x^\top]=\Iden_p$ and $\x,z$ are orthogonal. Then, regardless of the source sample/task sizes $N,T$ and target sample size $M$, there exists an MTL solution such that, excess transfer risk of final target hypothesis $\hat f_\Tc$ obeys 
\[
\E_{\bt_\Tc}[\RTFL(\hat f_\Tc)]\geq c.
\]
for some absolute constant $c>0$. Additionally, $\RTFL(\hat f_\Tc)\geq 0.5$ almost surely as $R\rightarrow\infty$.
\end{lemma}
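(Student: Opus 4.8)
The idea is to exhibit an MTL solution that is a \emph{global} minimizer of \eqref{formula:multipath} yet "shuffles" the ground-truth clusters, so that during the transfer phase neither of the two learned subnetworks contains the target subspace.

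\emph{Step 1: the bad optimal MTL solution.} Instead of routing $\bT_1,\bT_2$ to pathway $1$ and $\bT_3,\bT_4$ to pathway $2$, route $\bT_1,\bT_3$ to pathway $1$ and $\bT_2,\bT_4$ to pathway $2$. Since the $S_i$ are mutually orthogonal and each $\bT_i$ spans $S_i$, the tasks on pathway $1$ jointly span $V_1:=S_1\oplus S_3$ and those on pathway $2$ span $V_2:=S_2\oplus S_4$, each of dimension exactly $2R$. Pick $\B_1,\B_2\in\R^{2R\times p}$ with orthonormal rows and $\mathrm{Range}(\B_1^\top)=V_1$, $\mathrm{Range}(\B_2^\top)=V_2$ (feasible since $p\ge 4R$ and the four subspaces are orthogonal), and for each task vector $\bt$ routed to pathway $\alpha$ take head $h=\B_\alpha\bt$, so $\B_\alpha^\top h=\bt$. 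As tasks are noiseless and realizable, this makes every task's empirical and population risk zero; hence the configuration is a global optimum of \eqref{formula:multipath}, indistinguishable from the ground-truth solution at the level of the objective. Write $\hat\bphi=(\B_1,\B_2)$; this is a legitimate output of the ERM.

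\emph{Step 2: reduce the transfer risk to projection residuals.} We have $\Lc_\Tc^\st=\sigma^2$ with $\sigma^2:=\E[z^2]$, because $\Phi$ can realize any $2R$-dimensional subspace — in particular the one containing $\bt_\Tc$ — driving the representation error to zero. Any target hypothesis built from $\hat\bphi$ has the form $h\circ\B_\alpha$ with $\alpha\in\{1,2\}$ and $h\in\R^{2R}$, and using $\E[\x\x^\top]=\Iden_p$ and $\E[z\x]=0$ its population risk equals $\sigma^2+\|\bt_\Tc-\B_\alpha^\top h\|^2$. Minimizing over $h$ yields $\|\Pi_{V_\alpha^\perp}\bt_\Tc\|^2$, and since the empirical transfer ERM cannot beat the population minimum over $\alpha\in\{1,2\}$, regardless of $N$, $T$ and every target size $M$,
\[
\RTFL(\hat f_\Tc)\;\ge\;\min_{\alpha\in\{1,2\}}\big\|\Pi_{V_\alpha^\perp}\bt_\Tc\big\|^2.
\]
Taking $\bt_\Tc$ uniform on the unit sphere of $V:=S_1\oplus S_2$ (the case $S_3\oplus S_4$ is symmetric) and writing $\bt_\Tc=\bt^{(1)}+\bt^{(2)}$ with $\bt^{(i)}=\Pi_{S_i}\bt_\Tc$, orthogonality of $S_1,\dots,S_4$ gives $\Pi_{V_1^\perp}\bt_\Tc=\bt^{(2)}$ and $\Pi_{V_2^\perp}\bt_\Tc=\bt^{(1)}$, so $\RTFL(\hat f_\Tc)\ge\min(\|\bt^{(1)}\|^2,\|\bt^{(2)}\|^2)$.

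\emph{Step 3: the probabilistic estimate and the obstacle.} Set $U=\|\bt^{(1)}\|^2$, so $1-U=\|\bt^{(2)}\|^2$ and $U\sim\mathrm{Beta}(R/2,R/2)$ (the squared norm of the projection of a uniform unit vector in $\R^{2R}$ onto an $R$-dimensional subspace). Then $\E_{\bt_\Tc}[\RTFL(\hat f_\Tc)]\ge\E[\min(U,1-U)]\ge c$ for an absolute constant $c>0$, e.g. since $\Pro(1/4\le U\le 3/4)$ is bounded below uniformly in $R\ge1$; this gives the first claim. For the limiting statement, $\Var(U)=O(1/R)$, so $U\to1/2$ in probability, hence $\min(U,1-U)\to1/2$ and $\RTFL(\hat f_\Tc)\ge 1/2-o(1)$ with probability $1-o(1)$ as $R\to\infty$. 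The main obstacle is conceptual rather than computational: one must ensure the transfer-phase search over $\Ac$ cannot recover accuracy, i.e. that \emph{both} available subnetworks miss part of $V$. This is exactly arranged by the orthogonality of $S_1,\dots,S_4$ together with the precise module width $2R$ — each pathway stores two of the four subspaces and the target always straddles the "wrong" split. A secondary point of care is checking that the shuffled configuration is a genuine global optimum of \eqref{formula:multipath} (so ERM may return it) and that all other hypotheses of Theorem \ref{e2e thm} hold, which is what makes this example illustrate the necessity of knowing the source pathways.
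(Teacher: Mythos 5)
Your proposal is correct and follows essentially the same route as the paper's proof: the same shuffled-cluster construction ($\B_1$ spanning $S_1\oplus S_3$, $\B_2$ spanning $S_2\oplus S_4$) achieving zero MTL risk, the same reduction of the transfer risk to $\min_i\|\Pi_{S_i}\bt_\Tc\|^2$, and the same concentration of the projected mass (your $\mathrm{Beta}(R/2,R/2)$ variable is exactly the paper's ratio of chi-squared norms). The only cosmetic difference is that you establish the $R\to\infty$ claim in probability rather than almost surely, which matches the informal spirit of the statement.
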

\begin{proof} As the reader might have noticed, the argument is straightforward. Create the following MTL solution: Let $\B_1$ be an orthonormal basis for $\bT_1,\bT_3$ and let $\B_2$ be an orthonormal basis for $\bT_2,\bT_4$. Without losing generality, for $\B_1$, let us set it so that first $R$ rows are assigned to $\bT_1$ and last $R$ assigned to $\bT_3$ (same for $\B_2$). Note that, we simply swapped $\bT_2$ with $\bT_3$ in pathway assignments.

Observe that $\B_1$ and $\B_2$ achieves zero MTL risk because they contain all task vectors $\bT_i=(\bt_{ij})_{j=1}^{T/4}$ in their range and problems are noiseless. What remains to show is that $\B_1,\B_2$ assignments are poor choices for the target task drawn from either $\B_1^\st$ induced by $\bT_1,\bT_2$ or $\B_2^\st$ induced by $\bT_3,\bT_4$. Without losing generality, suppose $\bt_\Tc$ is drawn from $\B_1^\st$. Observing $\bt_\Tc$ lies on the combined range of $\B_1,\B_2$, and using properties of linear regression with isotropic features, we bound the target transfer risk via
\begin{align*}
\Lc_\Tc(\hat f_\Tc)-\E[z^2]&=\min_{i\in \{1,2\}}\Lc_\Tc(\B_i^\top \hhb_{\Tc})-\E[z^2]\\
&= \min_{i\in \{1,2\}}\tn{\B_i^\top \hhb_{\Tc}-\bt_\Tc}^2\\
&\geq\min_{i\in \{1,2\}}\min_{\h}\tn{\B_i^\top \h-\bt_\Tc}^2\\
&=\min_{i\in \{1,2\}}\tn{\B_{3-i}\bt_\Tc}^2=\min_{i\in \{1,2\}}\tn{\B_i\bt_\Tc}^2\\
&=\min_{i\in \{1,2\}} \tn{\text{Proj}_{\Sc_i}(\bt_\Tc)}^2.
\end{align*}
{The last line highlights the fact that $S_i$ lies on $\B_i$ and projection of $\bt_\Tc$ on $\B_i$ is exactly equal to its projection on $S_i$ by pathway assignments.} Since $\bt_\Tc$ is uniformly drawn, the last line is equivalent to $X(\g,\g')=\frac{\tn{\g}^2}{\tn{\g}^2+\tn{\g'}^2}\wedge \frac{\tn{\g'}^2}{\tn{\g}^2+\tn{\g'}^2}$ for $\g,\g'\distas\Nn(0,\Iden_{R})$. Observing $\tn{\g}^2,\tn{\g'}^2$ are Chi-squared, it is clear that, for all $R$ and for some constant $c_0>0$, we have $\Pro(0.5\leq \frac{\tn{\g}^2}{{R}} \leq 1.5)\geq c_0$. On these events on $\g,\g'$, we have $X(\g,\g')\geq 1/4$ and $\E[X(\g,\g')]\geq c=c_0^2/4$. Finally, as dimension $R\rightarrow\infty$, we have $\tn{\g}^2/\tn{\g'}^2\rightarrow 1$ almost surely, which similarly implies $X(\g,\g')\rightarrow 0.5$.
\end{proof}

\section{Experimental Details on Section~\ref{hierarchy}}\label{app:numeric}

We provide further details on the experiments in Section~\ref{hierarchy} as well as incorporate additional experiments. 
\subsection{Algorithms for Vanilla MTL, Cluster MTL, and \MP~MTL}\label{sec:algo}

To facilitate faster and more stable convergence of all three algorithms, we used a conventional approach from nonconvex optimization literature which has also been proposed in the context of linear representation learning \cite{kong2020robust,sun2021towards,bouniot2020towards,tripuraneni2021provable}. Specifically, linear representation learning with Vanilla MTL has a bilinear form similar to matrix factorization. Thus, first-order method to solve Vanilla MTL benefits from proper initialization of the representation. In our experiments, we use such a two-step procedure:

$\bullet$ \textbf{Initialization:} At the start of MTL, build an initialization for the representation.

$\bullet$ \textbf{Alternating least-squares (ALS):} Train prediction heads and representation layers through alternating least-squares.

Here, we note that ALS is same as alternating gradient descent (AGD) however we are essentially running infinitely many gradient iterations before alternating. The reason we use this procedure for all three algorithms is to provide a fair comparison without the worry of tuning learning rates for each algorithm individually. Initialization plays a useful role in further stabilizing ALS.

While prior works provide initialization methods for MTL, we will also develop a novel initialization algorithm for \MP MTL. We believe this may be an interesting future direction for providing provable computational guarantees for \MP MTL.

\noindent \textbf{Initialization procedures:} We first revise the procedure for Vanilla MTL. Suppose we are given $T$ tasks with dataset $\Scb$ where input features have isotropic covariance. We will use the procedure discussed in \cite{sun2021towards} where the authors claim improvement over \cite{tripuraneni2021provable,kong2020meta}. 

$\bullet$ \textbf{Vanilla MTL:} initialization is a method-of-moments procedure as follows:
\begin{enumerate}
\item Form the $\bth_t$ estimates via $\bth_t=\frac{1}{N}\sum_{i=1}^N y_{ti}\x_{ti}$.
\item Form the moment matrix $\M=\sum_{t=1}^T \bth_t\bth_t^\top$.
\item Set $\hB_0\in\R^{R\times p}$ to be the top $R$ eigenvectors of $\M$.
\end{enumerate}
At this point, we can start running our favorite choice of first order method starting from the initialization $\hB_0$. In our implementation, we run ALS where we estimate $\{\hat\h_t\}_{t=1}^T$ (by fitting LS given $\hB$), then re-estimate $\hB$ and keep going.

$\bullet$ \textbf{Cluster MTL:} In our experiments, we assumed clusters (i.e.~pathways) are known. This is in order to decouple the challenge of task-clustering from the comparisons in Figure \ref{fig:MTLcomparison}. We note that task clustering has been studied by \cite{fifty2021efficiently,kumar2012learning,kang2011learning} (Leveraging relations between tasks are explored even more broadly \cite{zhuang2020comprehensive}.) however these works don't come with comparable statistical guarantees. In our setup, Cluster MTL simply runs $K$ Vanilla MTL algorithms in parallel over individual clusters using ground-truth pathways.

$\bullet$ \textbf{\MP MTL:} We propose an initialization algorithm which is inspired from the Vanilla MTL algorithm as follows. Again, we assume knowledge of clustering/pathways.
\begin{enumerate}
\item Estimate shared first layer $\hB_1\in\R^{R\times p}$ via \textbf{Vanilla MTL} initialization using all data.
\item Estimate cluster-specific representations $(\tilde{\hB}_2^k)_{k=1}^K\in\R^{r\times p}$ via \textbf{Vanilla MTL} initialization over each cluster data.
\item Estimate the second layer $({\hB}_2^k)_{k=1}^K\in\R^{r\times R}$ by projecting $\tilde{\hB}_2^k$ onto the $R$-dimensional first layer as follows
\[
\hB_2^k= \tilde{\hB}_2^k\hB_1^\top.
\]
\end{enumerate}
We then run ALS where we go in the order: Prediction heads, second layers, first layer (repeat).

\noindent\textbf{Remark on unknown clusters:} We note that a simple approach to identifying clusters when they are unknown is by solving Vanilla MTL and then clustering the resulting weight vectors $\{\hat\bt_t\}_{t=1}^T$ of the Vanilla MTL solution (e.g.~via $K$-means). The reason is that, the ground-truth weights $\{\bt^\st_t\}_{t=1}^T$ are simply points that lie on $r$-dimensional latent cluster-subspaces that we would like to recover. Naturally, the (random) points on the same subspace will have higher correlation. This viewpoint (restricted to linear setting) also connects well with the broader subspace clustering literature where each learning task is a point on a high-dimensional subspace \cite{vidal2011subspace,parsons2004subspace,elhamifar2013sparse}. The challenge in our setting is we only get to see the points through the associated datasets. {Figure~\ref{fig:clustering_exp} shows our results assuming unknown source pathways.} 

In the next section, we discuss a few more experiments comparing these three approaches.
\begin{figure*}[t]
  \centering
  \begin{subfigure}[t]{.35\textwidth}
    \centering
    \includegraphics[width=\linewidth]{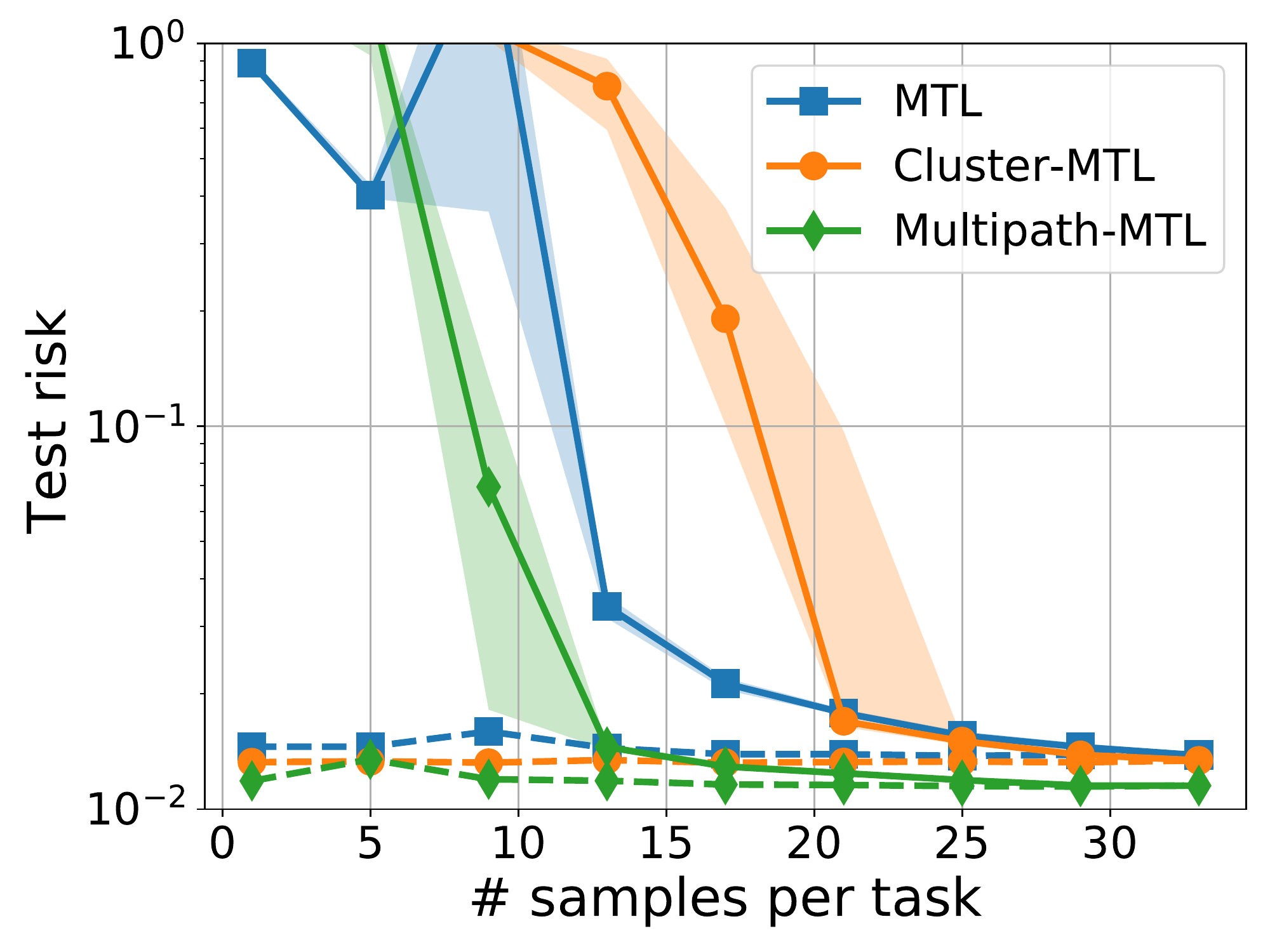}
    \caption{Varying $N_1$ with $\Tbar=10,K=20,N_2=33$}\label{fig:app1}
  \end{subfigure}
  \hspace{20pt}
  \begin{subfigure}[t]{.35\textwidth}
    \centering    \includegraphics[width=\linewidth]{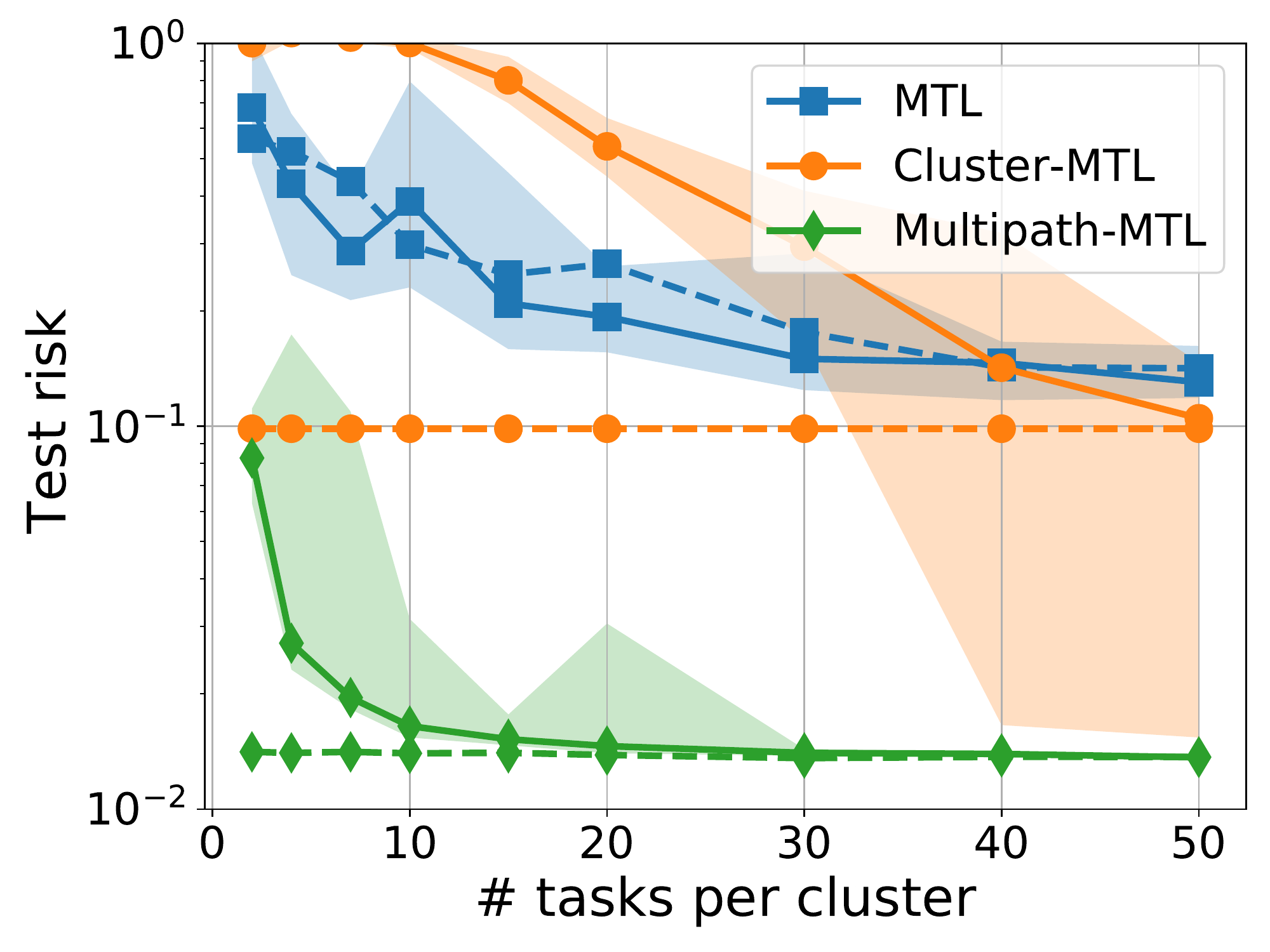}
    \caption{Varying $\Tbar_1$ with $N=10,K=20,\Tbar_2=50$}\label{fig:app2}
  \end{subfigure}
  \caption{We evaluate (Vanilla) MTL, Cluster-MTL and Multipath-MTL using imbalanced data in a linear regression setting (half tasks have more, half have less data resources). In both experiments, there are $K=20$ clusters in total. In Fig.~\ref{fig:app1}, all clusters have $\Tbar=10$ tasks. Then, we fix the sample size of the tasks in $10$ of the clusters ($N_2=33$), and change the sample size in the other $10$ clusters ($N_1$) from $1$ to $33$. Then we solve the three MTL (Vanilla, Cluster, \MP) problems. Solid/Dashed curves show the test risk of the tasks who have fewer/more samples. In Fig.~\ref{fig:app2}, instead we fix the sample size of each task ($N=10$). In $10$ of the clusters, we set the number of tasks $\Tbar_2=50$. While in the other clusters, the number of tasks is varied from $2$ to $50$. Again, we run experiments under all the three settings and plot the test risk of fewer/more tasks in solid/dashed curves. The curves in both figures show the median risks and the shaded regions highlight the first and third quantile risks. Each marker is average of 20 independent runs.}\label{fig:MTLcomparison2}
  \end{figure*}
\subsection{Additional Numerical Experiments}
In Figure~\ref{fig:MTLcomparison2}, we conduct more experiments to see how tasks with less data resources perform in MTL when trained together with other tasks which have more resources. Here, by resources we either mean a task having more samples $N$ or a task having other (related) tasks along its pathway/cluster. Thus, our experiments involve imbalanced training data. We consider two experimental settings to show how \MP MTL benefits accuracy compared to the other two MTL models: Vanilla MTL and Cluster MTL.

\noindent\textbf{Experimental settings:} Consider the same Vanilla MTL, Cluster MTL and \MP MTL problems in linear regression regime as discussed in Section~\ref{hierarchy} and follow the same algorithm in Section~\ref{sec:algo}. In the experiments, same as Section~\ref{hierarchy}, we set $p=32$, $R=8$, and $r=2$. We consider MTL problem with $K=20$ clusters. Here, data is noisy. In Fig.~\ref{fig:app1}, there are $10$ tasks in each cluster. In half of the clusters, each task has fixed sample size, $N_2=33$ (more resource); while in the remaining $10$ clusters, the sample size ($N_1$) varies from $1$ to $33$ (less resource). Solid curves display the test risk of the tasks with $N_1$ samples and dashed curves present the test risk of tasks with $N_2$ samples.  Rather than changing number of samples, in experiments shown in Figure~\ref{fig:app2}, we create another scenario where number of tasks per cluster is varied (as a measure of data resource). Here, instead all tasks contain $N=10$ samples. For $10$ of the total clusters, there are fixed $\Tbar_2=50$ tasks in each cluster. However, the other $10$ contain only $\Tbar_1$ tasks in each cluster, and we compare the performance with different $\Tbar_1$ selections. We change $\Tbar_1$ from $2$ to $50$ and results are displayed in Fig~\ref{fig:app2}. Similar, solid curves present the results of the clusters who contain fewer tasks (less resource), to the contrary, dashed curves present the test risk of clusters with fixed $\Tbar_2=50$ tasks (more resource).

In both figures, Multipath-MTL performs better than the other two models, which again shows that the sample complexity of hierarchical model is smaller than the vanilla and clustering models. When there are fewer samples or fewer tasks, all the three methods fail at learning a good representation. The three dashed curves in Fig.~\ref{fig:app1} behave in line with expectations: They follow from the fact that tasks with more samples can learn decent representations by themselves. The solid curve of Cluster MTL decreases slower, and it is because other than the other two methods where clusters are correlated and representations are shared, in Cluster MTL setting (as depicted in Fig.~\ref{fig:cluster}), clusters are separately trained. Therefore, there is no benefit across the clusters. In Fig.~\ref{fig:app2}, firstly, the evidence that orange and blue dashed curves are above the green one again shows the sample efficiency of Multipath MTL. Here, when there are only $2$ tasks for the $10$ resource-poor clusters, the Cluster MTL has the worst performance because there is no representation sharing across clusters. Test risk of Vanilla MTL does not change too much even the task number increases. It is because MTL representation of vanilla model is larger and tasks don't have enough samples to train their prediction heads. For instance, blue solid curve hits blue dashed curve at very beginning, which shows that the model is already trained well and adding more tasks cannot help too much (both more resource tasks and less resource tasks are doing similar).

\end{document}